\documentclass[twoside,11pt]{article}

\usepackage{jmlr2e}
\usepackage{enumerate}
\usepackage{algorithm}
\usepackage{algorithmic}
\usepackage{pslatex}
\usepackage{amsopn}
\usepackage{bm}
\usepackage{epsf}
\usepackage{multirow}
\usepackage{wrapfig}
\usepackage{graphicx}
\usepackage{subfigure}
\usepackage{url}
\usepackage{picinpar}
\usepackage{latexsym}
\usepackage{amsfonts}
\usepackage{makecell}
\usepackage{mathrsfs}

\usepackage[english]{babel}

\newcommand{\xv}{\mathbf{x}}

\newcommand{\yv}{\mathbf{y}}

\newcommand{\zv}{\mathbf{z}}
\newcommand{\zvbar}{\bar{\zv}}
\newcommand{\Zv}{\mathbf{Z}}

\newcommand{\Vv}{\mathbf{V}}

\newcommand{\wv}{\mathbf{w}}
\newcommand{\Wv}{\mathbf{W}}

\newcommand{\ud}{\mathrm{d}}

\newcommand{\sign}{\mathrm{sign}}
\newcommand{\risk}{\mathcal{R}}
\newcommand{\etav}{\boldsymbol \eta}
\newcommand{\muv}{\boldsymbol \mu}
\newcommand{\Sigmav}{\boldsymbol \Sigma}

\newcommand{\xiv}{\boldsymbol \xi}
\newcommand{\alphav}{\boldsymbol \alpha}
\newcommand{\betav}{\boldsymbol \beta}
\newcommand{\thetav}{\boldsymbol \theta}
\newcommand{\Thetav}{\boldsymbol \Theta}
\newcommand{\kappav}{\boldsymbol \kappa}

\newcommand{\omegav}{\boldsymbol \omega}

\newcommand{\lambdav}{\boldsymbol \lambda }

\newcommand{\Phiv}{\boldsymbol \Phi }

\newcommand{\ep}{\mathbb{E}}

\newcommand{\IG}{\mathcal{I}\!\mathcal{G}}
\newcommand{\GIG}{\mathcal{G}\!\mathcal{I}\!\mathcal{G}}

\newcommand{\data}{\mathcal{D}}

\def\indicator{{\mathbb I}}

\newcommand{\argmax}{\operatornamewithlimits{argmax}}



\jmlrheading{14}{2013}{1-10}{1/13; Revised ?}{?}{Jun Zhu, Ning Chen, Hugh Perkins, and Bo Zhang}

\ShortHeadings{Gibbs Max-margin Topic Models with Data Augmentation}{Zhu, Chen, Perkins, and Zhang}
\firstpageno{1}

\begin{document}

\title{Gibbs Max-margin Topic Models with Data Augmentation}

\author{\name Jun Zhu \email dcszj@mail.tsinghua.edu.cn \\
        \name Ning Chen \email ningchen@mail.tsinghua.edu.cn \\
        \name Hugh Perkins \email ngls11@mails.tsinghua.edu.cn \\
        \name Bo Zhang \email dcszb@mail.tsinghua.edu.cn \\
        \addr State Key Lab of Intelligent Technology and Systems \\
        \addr Tsinghua National Lab for Information Science and Technology \\
        \addr Department of Computer Science and Technology \\
        \addr Tsinghua University \\
        Beijing, 100084, China
}

\editor{?}

\maketitle

\begin{abstract}
Max-margin learning is a powerful approach to building classifiers and structured output predictors. Recent work on max-margin supervised topic models has successfully integrated it with Bayesian topic models to discover discriminative latent semantic structures and make accurate predictions for unseen testing data. However, the resulting learning problems are usually hard to solve because of the non-smoothness of the margin loss. Existing approaches to building max-margin supervised topic models rely on an iterative procedure to solve multiple latent SVM subproblems with additional mean-field assumptions on the desired posterior distributions. This paper presents an alternative approach by defining a new max-margin loss. Namely, we present Gibbs max-margin supervised topic models, a latent variable Gibbs classifier to discover hidden topic representations for various tasks, including classification, regression and multi-task learning. Gibbs max-margin supervised topic models minimize an expected margin loss, which is an upper bound of the existing margin loss derived from an expected prediction rule. By introducing augmented variables and integrating out the Dirichlet variables analytically by conjugacy, we develop simple Gibbs sampling algorithms with no restricting assumptions and no need to solve SVM subproblems. Furthermore, each step of the ``augment-and-collapse" Gibbs sampling algorithms has an analytical conditional distribution, from which samples can be easily drawn. Experimental results demonstrate significant improvements on time efficiency. The classification performance is also significantly improved over competitors on binary, multi-class and multi-label classification tasks.
\end{abstract}

\begin{keywords}
supervised topic models, max-margin learning, Gibbs classifiers, latent Dirichlet allocation, support vector machines
\end{keywords}

\section{Introduction}

As the availability and scope of complex data increase, developing statistical tools to discover latent structures and reveal hidden explanatory factors has become a major theme in statistics and machine learning. Topic models represent one type of such useful tools to discover latent semantic structures that are organized in an automatically learned latent topic space, where each topic (i.e., a coordinate of the latent space) is a unigram distribution over the terms in a vocabulary. Due to its nice interpretability and extensibility, the Bayesian formulation of topic models~\citep{Blei:03} has motivated substantially broad extensions and applications to various fields, such as document analysis, image categorization~\citep{Fei-Fei:05}, and network data analysis~\citep{airoldi2008mixed}. Besides discovering latent topic representations, many models usually have a goal to make good predictions, such as relational topic models~\citep{Chang:RTM09} whose major goal is to make accurate predictions on the link structures of a document network. Another example is supervised topic models, our focus in this paper, which learn a prediction model for regression and classification tasks. As supervising information (e.g., user-input rating scores for product reviews) gets easier to obtain on the Web, developing supervised latent topic models has attracted a lot of attention. Both maximum likelihood estimation (MLE) and max-margin learning have been applied to learn supervised topic models. Different from the MLE-based approaches~\citep{Blei:sLDA07}, which define a normalized likelihood model for response variables, max-margin supervised topic models, such as maximum entropy discrimination LDA (MedLDA)~\citep{Zhu:jmlr12}, directly minimize a margin-based loss derived from an expected (or averaging) prediction rule.


By performing discriminative learning, max-margin supervised topic models can discover predictive latent topic representations and have shown promising performance in various prediction tasks, such as text document categorization~\citep{Zhu:jmlr12} and image annotation~\citep{yang2010uai}. However, their learning problems are generally hard to solve due to the non-smoothness of the margin-based loss function. Most existing solvers rely on a variational approximation scheme with strict mean-field assumptions on posterior distributions, and they normally need to solve multiple latent SVM subproblems in an EM-type iterative procedure. By showing a new interpretation of MedLDA as a regularized Bayesian inference method, the recent work~\citep{Zhu:nips12} successfully developed Monte Carlo methods for such max-margin topic models, with a weaker mean-field assumption. Though the prediction performance is improved because of more accurate inference, the Monte Carlo methods still need to solve multiple SVM subproblems. Thus, their efficiency could be limited as learning SVMs is normally computationally demanding. Furthermore, due to the dependence on SVM solvers, it is not easy to parallelize these algorithms for large-scale data analysis tasks, although substantial efforts have been made to develop parallel Monte Carlo methods for unsupervised topic models~\citep{Newman08distributedinference,Smola:vldb10,Ahmed:wsdm12}.

This paper presents Gibbs MedLDA, an alternative formulation of max-margin supervised topic models, for which we can develop simple and efficient inference algorithms. Technically, instead of minimizing the margin loss of an expected (averaging) prediction rule as adopted in existing max-margin topic models, Gibbs MedLDA minimizes the expected margin loss of many latent prediction rules, of which each rule corresponds to a configuration of topic assignments and the prediction model, drawn from a post-data posterior distribution. Theoretically, the expected margin loss is an upper bound of the existing margin loss of an expected prediction rule. Computationally, although the expected margin loss can be hard in developing variational algorithms, we successfully develop simple and fast collapsed Gibbs sampling algorithms without any restricting assumptions on the posterior distribution and without solving multiple latent SVM subproblems. Each of the sampling substeps has a closed-form conditional distribution, from which a sample can be efficiently drawn. Our algorithms represent an extension of the classical ideas of data augmentation~\citep{Dempster1977,Tanner:1987,DykMeng2001} and its recent developments in learning fully observed max-margin classifiers \citep{Polson:BA11} to learn the sophisticated latent topic models. We further generalize the ideas to develop a Gibbs MedLDA regression model and a multi-task Gibbs MedLDA model, and we also develop efficient collapsed Gibbs sampling algorithms for them with data augmentation. Empirical results on real data sets demonstrate significant improvements in time efficiency. The classification performance is also significantly improved in binary, multi-class, and multi-label classification tasks.

The rest of the paper is structured as follows. Section~\ref{section:related-work} summarizes some related work. Section~\ref{section:MedLdA} reviews MedLDA and its EM-type algorithms. Section~\ref{section:GibbsMedLDA} presents Gibbs MedLDA and its sampling algorithms for classification. Section~\ref{section:Extensions} presents two extensions of Gibbs MedLDA for regression and multi-task learning. Section~\ref{section:experiments} presents empirical results. Finally, Section~\ref{section:conclusions} concludes and discusses future directions.

\section{Related Work}\label{section:related-work}

Max-margin learning has been very successful in building classifiers~\citep{Vapnik:95} and structured output prediction models~\citep{Taskar:03} in the last decade. Recently, research on learning max-margin models in the presence of latent variable models has received increasing attention because of the promise of using latent variables to capture the underlying structures of the complex problems. Deterministic approaches~\citep{Yu:latentSVM09} fill in the unknown values of the hidden structures by using some estimates (e.g., MAP estimates), and then a max-margin loss function is defined with the filled-in hidden structures, while probabilistic approaches aim to infer an entire distribution profile of the hidden structures given evidence and some prior distribution, following the Bayes' way of thinking. Though the former is powerful, we focus on Bayesian approaches, which can naturally incorporate prior beliefs, maintain the entire distribution profile of latent structures, and be extensible to nonparametric methods. One representative work along this line is maximum entropy discrimination (MED)~\citep{jaakkola1999maximum,Jebara:thesis}, which learns a distribution of model parameters given a set of labeled training data.

MedLDA~\citep{Zhu:jmlr12} is one extension of MED to infer hidden topical structures from data and MMH (max-margin Harmoniums)~\citep{Chen:PAMI12} is another extension that infers the hidden semantic features from multi-view data. Along similar lines, recent work has also successfully developed nonparametric Bayesian max-margin models, such as infinite SVMs (iSVM)~\citep{Zhu:ICML11} for discovering clustering structures when building SVM classifiers and infinite latent SVMs (iLSVM)~\citep{Zhu:nips11} for automatically learning predictive features for SVM classifiers. Both iSVM and iLSVM can automatically resolve the model complexity (e.g., the number of components in a mixture model or the number of latent features in a factor analysis model). The nonparametric Bayesian max-margin ideas have been proven to be effective in dealing with more challenging problems, such as link prediction in social networks~\citep{Zhu:ICML12} and low-rank matrix factorization for collaborative recommendation~\citep{Xu:NIPS12,Xu:ICML13}.

One common challenge of these Bayesian max-margin latent variable models is on the posterior inference, which is normally intractable. Almost all the existing work adopts a variational approximation scheme, with some mean-field assumptions. Very little research has been done on developing Monte Carlo methods, except the work~\citep{Zhu:nips12} which still makes mean-field assumptions. The work in the present paper provides a novel way to formulate Bayesian max-margin models and we show that these new formulations can have very simple and efficient Monte Carlo inference algorithms without making restricting assumptions. The key step to deriving our algorithms is a data augmentation formulation of the expected margin-based loss. Other work on inferring the posterior distributions of latent variables includes max-margin min-entropy models~\citep{Miller:2012} which learn a single set of model parameters, different from our focus of inferring the model posterior distribution.

Data augmentation refers to methods of augmenting the observed data so as to make it easy to analyze with an iterative optimization or sampling algorithm. For deterministic algorithms, the technique has been popularized in the statistics community by the seminal expectation-maximization (EM) algorithm~\citep{Dempster1977} for maximum likelihood estimation (MLE) with missing values. For stochastic algorithms, the technique has been popularized in statistics by Tanner and Wong's data augmentation algorithm for posterior sampling~\citep{Tanner:1987} and in physics by Swendsen and Wang's sampling algorithms for Ising and Potts models~\citep{SwendsenWang:87}. When using the idea to solve estimation or posterior inference problems, the key step is to find a set of augmented variables, conditioned on which the distribution of our models can be easily sampled. The speed of mixing or convergence is another important concern when designing a data augmentation method. While the conflict between simplicity and speed is a common phenomenon with many standard augmentation schemes, some work has demonstrated that with more creative augmentation schemes it is possible to construct EM-type algorithms~\citep{MengDyk:DA97} or Markov Chain Monte Carlo methods (known as slice sampling)~\citep{Neal:97} that are both fast and simple. We refer the readers to~\citep{DykMeng2001} for an excellent review of the broad literature of data augmentation and an effective search strategy for selecting good augmentation schemes.

For our focus on max-margin classifiers, the recent work~\citep{Polson:BA11} provides an elegant data augmentation formulation for support vector machines (SVM) with fully observed input data, which leads to analytical conditional distributions that are easy to sample from and fast to mix. Our work in the present paper builds on the method of Polson et al. and presents a successful implementation of data augmentation to deal with the challenging posterior inference problems of Bayesian max-margin latent topic models. Our approach can be generalized to deal with other Bayesian max-margin latent variable models, e.g., max-margin matrix factorization~\citep{Xu:ICML13}, as reviewed above.

Finally, some preliminary results were presented in a conference paper~\citep{Zhu:GibbsMedLDA13}. This paper presents a full extension.

\section{MedLDA}\label{section:MedLdA}

We begin with a brief overview of MedLDA and its learning algorithms, which motivate our developments of Gibbs MedLDA.

\subsection{MedLDA: a Regularized Bayesian Model}
We consider binary classification with a labeled training set $\mathcal{D} = \{ (\wv_d, y_d) \}_{d=1}^{D}$, where the response variable $Y$ takes values from the output space $\mathcal{Y}=\{-1, +1\}$. Basically, MedLDA consists of two parts --- an LDA model for describing input documents $\Wv = \{\wv_d\}_{d=1}^D$,~where $\wv_d=\{w_{dn}\}_{n=1}^{N_d}$ denote the words appearing in document $d$, and an expected classifier for considering the supervising signal $\yv =\{ y_d \}_{d=1}^D$. Below, we introduce each of them in turn.

{\bf LDA}: Latent Dirichlet allocation (LDA)~\citep{Blei:03} is a hierarchical Bayesian model that posits each document as an admixture of $K$ topics, where each topic $\Phiv_k$ is a multinomial distribution over a $V$-word vocabulary. For document $d$, the generating process can be described as
\begin{enumerate}
\item draw a topic proportion $\thetav_d \sim \mathrm{Dir}(\alphav)$
\item for each word $n~(1 \leq n \leq N_d)$:
\begin{enumerate}
\item draw a topic assignment\footnote{A $K$-dimension binary vector with only one nonzero entry.} $z_{dn} | \thetav_d \sim \mathrm{Mult}(\thetav_d)$
\item draw the observed word $w_{dn} | z_{dn}, \Phiv  \sim \mathrm{Mult}(\Phiv_{z_{dn}})$
\end{enumerate}
\end{enumerate}
where $\mathrm{Dir}(\cdot)$ is a Dirichlet distribution; $\mathrm{Mult}(\cdot)$ is multinomial; and $\Phiv_{z_{dn}}$ denotes the topic selected by the non-zero entry of $z_{dn}$. For a fully-Bayesian LDA, the topics are random samples drawn from a prior, e.g., $\Phiv_k \sim \textrm{Dir}(\betav)$.

Given a set of documents $\Wv$, we let $\zv_d = \{z_{dn}\}_{n=1}^{N_d}$ denote the set of topic assignments for document $d$ and let $\Zv = \{\zv_d\}_{d=1}^D$ and $\Thetav = \{\thetav_d\}_{d=1}^D$ denote all the topic assignments and mixing proportions for the whole corpus, respectively. Then, LDA infers the posterior distribution using Bayes' rule $$ p(\Thetav, \Zv, \Phiv | \Wv) = \frac{p_0(\Thetav, \Zv, \Phiv) p(\Wv| \Zv, \Phiv)}{ p(\Wv)},$$ where $p_0(\Thetav, \Zv, \Phiv) = \prod_{k=1}^K p_0( \Phiv_k | \betav ) \prod_{d=1}^D p_0(\thetav_d | \alphav) \prod_{n=1}^{N_d} p(z_{dn} | \thetav_d)$ according to the generating process of LDA; and $p(\Wv)$ is the marginal evidence. We can show that the posterior distribution by Bayes' rule is the solution of an information theoretical optimization problem
\setlength\arraycolsep{1pt} \begin{eqnarray}\label{problem:RegBayesLDA}
\min_{q(\Thetav, \Zv, \Phiv) } &&  \mathrm{KL}\left[ q(\Thetav, \Zv, \Phiv) \Vert p_0(\Thetav, \Zv, \Phiv) \right] - \ep_q\left[ \log p(\Wv |  \Zv, \Phiv) \right] \nonumber \\
\mathrm{s.t.}:&& q(\Thetav, \Zv, \Phiv) \in \mathcal{P},
\end{eqnarray}
where $\mathrm{KL}(q||p)$ is the Kullback-Leibler divergence and $\mathcal{P}$ is the space of probability distributions with an appropriate dimension. In fact, if we add the constant $\log p(\Wv)$ to the objective, the problem is the minimization of the KL-divergence $\textrm{KL}(q(\Thetav, \Zv, \Phiv) \Vert p(\Thetav, \Zv, \Phiv | \Wv))$, whose solution is the desired posterior distribution by Bayes' rule. One advantage of this variational formulation of Bayesian inference is that it can be naturally extended to include some regularization terms on the desired post-data posterior distribution $q$. This insight has been taken to develop regularized Bayesian inference (RegBayes)~\citep{Zhu:nips11}, a computational framework for doing Bayesian inference with posterior regularization\footnote{Posterior regularization was first used in~\citep{ganchev2010posterior} for maximum likelihood estimation and was later extended in~\citep{Zhu:nips11} to Bayesian and nonparametric Bayesian methods.}. As shown in~\citep{Zhu:nips12} and detailed below, MedLDA is one example of RegBayes models. Moreover, as we shall see in Section~\ref{section:GibbsMedLDA}, our Gibbs max-margin topic models follow this similar idea too.

{\bf Expected Classifier}: Given a training set $\data$, an expected (or averaging) classifier chooses a posterior distribution $q(h|\data)$ over a hypothesis space $\mathcal{H}$ of classifiers such that the $q$-weighted (expected) classifier $$h_q(\wv) = \sign ~\ep_q[h(\wv)]$$ will have the smallest possible risk. MedLDA follows this principle to learn a posterior distribution $q(\etav, \Thetav, \Zv, \Phiv | \data)$ such that the expected classifier
\begin{eqnarray}\label{eqn:prediction_rule}
\hat{y} = \sign~ F(\wv)
\end{eqnarray}
has the smallest possible risk, approximated by the training error $\risk_\data(q) = \sum_{d=1}^D \indicator(\hat{y}_d \neq y_d)$. The discriminant function is defined as
\begin{eqnarray}\label{eqn:linear expected discrimination function}
F(\wv) = \mathbb{E}_{q(\etav, \zv|\data)}[ F(\etav, \zv; \wv) ],~F(\etav, \zv; \wv) = \etav^\top \bar{\zv} \nonumber
\end{eqnarray}
where $\bar{\zv}$ is the average topic assignment associated with the words $\wv$, a vector with element $\bar{z}_k = \frac{1}{N}\sum_{n=1}^{N} z_{n}^k$, and $\etav$ is the classifier weights. 
Note that the expected classifier and the LDA likelihood are coupled via the latent topic assignments $\Zv$. The strong coupling makes it possible for MedLDA to learn a posterior distribution that can describe the observed words well and make accurate predictions.

{\bf Regularized Bayesian Inference}: To integrate the above two components for hybrid learning, 
MedLDA regularizes the properties of the topic representations by imposing the following max-margin constraints derived from the classifier~(\ref{eqn:prediction_rule}) to a standard LDA inference problem~(\ref{problem:RegBayesLDA})
\begin{eqnarray}
y_d F(\wv_d) \geq \ell - \xi_d, ~\forall d,
\end{eqnarray}
where $\ell$ ($\geq \! 1$) is the cost of making a wrong prediction; and $\xiv = \{\xi_d\}_{d=1}^D$ are non-negative slack variables for inseparable cases. Let $\mathcal{L}(q) = \mathrm{KL}(q||p_0(\etav, \Thetav, \Zv, \Phiv)) - \mathbb{E}_q[\log p(\Wv | \Zv, \Phiv)]$ be the objective for doing standard Bayesian inference with the classifier $\etav$ and $p_0(\etav, \Thetav, \Zv, \Phiv) = p_0(\etav) p_0(\Thetav, \Zv, \Phiv)$. MedLDA solves the regularized Bayesian inference~\citep{Zhu:nips11} problem
\begin{eqnarray}\label{problem:MedLDA}
\min_{q(\etav, \Thetav, \Zv, \Phiv) \in \mathcal{P}, \xiv } && \mathcal{L}\left( q(\etav, \Thetav, \Zv, \Phiv) \right) + 2 c \sum_{d=1}^D \xi_d \\
\textrm{s.t.:} && ~ y_d F(\wv_d) \geq \ell - \xi_d,~\xi_d \geq 0, \forall d, \nonumber
\end{eqnarray}
where the margin constraints directly regularize the properties of the post-data distribution and $c$ is the positive regularization parameter. Equivalently, MedLDA solves the unconstrained problem\footnote{If not specified, $q$ is subject to the constraint $q \in \mathcal{P}$.}
\setlength\arraycolsep{-3pt}\begin{eqnarray}\label{problem:MedLDA_Unconstrained}
&&\min_{q(\etav, \Thetav, \Zv, \Phiv)  }  \mathcal{L}\left(q(\etav, \Thetav, \Zv, \Phiv)\right) + 2 c \mathcal{R}\left( q(\etav, \Thetav, \Zv, \Phiv) \right),
\end{eqnarray}
where $\mathcal{R}(q) = \sum_{d=1}^D \max(0, \ell - y_d F(\wv_d))$ is the hinge loss that upper-bounds the training error $\risk_\data(q)$ of the expected classifier~(\ref{eqn:prediction_rule}). Note that the constant $2$ is included simply for convenience.

\subsection{Existing Iterative Algorithms}

Since it is difficult to solve problem~(\ref{problem:MedLDA}) or~(\ref{problem:MedLDA_Unconstrained}) directly because of the non-conjugacy (between priors and likelihood) and the max-margin constraints, corresponding to a non-smooth posterior regularization term in~(\ref{problem:MedLDA_Unconstrained}), both variational and Monte Carlo methods have been developed for approximate solutions. It can be shown that the variational method~\citep{Zhu:jmlr12} is a coordinate descent algorithm to solve problem~(\ref{problem:MedLDA_Unconstrained}) with the fully-factorized assumption that $$q(\etav, \Theta, \Zv, \Phiv) = q(\etav) \left( \prod_{d=1}^D q(\thetav_d) \prod_{n=1}^{N_d} q(z_{dn}) \right) \prod_k q(\Phiv_k);$$ while the Monte Carlo methods~\citep{Zhu:nips12} make a weaker assumption that $$q(\etav, \Theta, \Zv, \Phiv) = q(\etav)q(\Theta, \Zv, \Phiv).$$
All these methods have a similar EM-type iterative procedure, which solves many latent SVM subproblems, as outlined below.

{\bf Estimate $q(\etav)$}: Given $q(\Thetav, \Zv, \Phiv)$, we solve problem~(\ref{problem:MedLDA_Unconstrained}) with respect to $q(\etav)$. In the equivalent constrained form, this step solves
\setlength\arraycolsep{1pt} \begin{eqnarray}\label{problem:MedLDA_subEta}
 \min_{q(\etav), \xiv} ~&& ~ \mathrm{KL}\left( q(\etav) \Vert p_0(\etav) \right) + 2c\sum_{d=1}^D \xi_d \\
\mathrm{s.t.}: ~&& ~y_d \mathbb{E}_q[\etav]^\top \ep_q[\bar{\zv}_d] \geq \ell - \xi_d,~\xi_d \geq 0, \forall d. \nonumber
\end{eqnarray}
This problem is convex and can be solved with Lagrangian methods. Specifically, let $\mu_d$ be the Lagrange multipliers, one per constraint. When the prior $p_0(\etav)$ is the commonly used standard normal distribution, we have the optimum solution $q(\etav)=\mathcal{N}(\kappav, I)$, 
where $\kappav = \sum_{d=1}^D y_d \mu_d \ep_q[\bar{\zv}_d]$. 
It can be shown that the dual problem of (\ref{problem:MedLDA_subEta}) is the dual of a standard binary linear SVM and we can solve it or its primal form efficiently using existing high-performance SVM learners. We denote the optimum solution of this problem by $(q^\ast(\etav), \kappav^\ast, \xiv^\ast, \muv^\ast)$.

{\bf Estimate $q(\Thetav, \Zv, \Phiv)$}: Given $q(\etav)$, we solve problem~(\ref{problem:MedLDA_Unconstrained}) with respect to $q(\Thetav, \Zv, \Phiv)$. In the constrained form, this step solves
\begin{eqnarray}\label{problem:MedLDA_subZ}
 \min_{q(\Thetav, \Zv, \Phiv), \xiv} &&~\mathcal{L}\left( q(\Thetav, \Zv, \Phiv) \right) + 2 c\sum_{d=1}^D \xi_d \\
\mathrm{s.t.}: &&~ y_d (\kappav^\ast)^\top \ep_q[\bar{\zv}_d] \geq \ell - \xi_d,~\xi_d \geq 0, \forall d. \nonumber
\end{eqnarray}
Although we can solve this problem using Lagrangian methods, it would be hard to derive the dual objective. An effective approximation strategy was used in~\citep{Zhu:jmlr12,Zhu:nips12}, which updates $q(\Thetav, \Zv, \Phiv)$ for only one step with $\xiv$ fixed at $\xiv^\ast$. By fixing $\xiv$ at $\xiv^\ast$, we have the solution $$q(\Thetav, \Zv, \Phiv)  \propto p(\Wv, \Thetav, \Zv, \Phiv) \exp\left\{ (\kappav^\ast)^\top \sum_{d} \mu_d^\ast \bar{\zv}_d \right\},$$
where the second term indicates the regularization effects due to the max-margin posterior constraints. For those data with non-zero Lagrange multipliers (i.e., support vectors), the second term will bias MedLDA towards a new posterior distribution that favors more discriminative representations on these ``hard" data points. The Monte Carlo methods~\citep{Zhu:nips12} directly draw samples from the posterior distribution $q(\Thetav, \Zv, \Phiv)$ or its collapsed form using Gibbs sampling to estimate $\ep_q[\bar{\zv}_d]$, the expectations required to learn $q(\etav)$. In contrast, the variational methods~\citep{Zhu:jmlr12} solve problem~(\ref{problem:MedLDA_subZ}) using coordinate descent to estimate $\ep_q[\bar{\zv}_d]$ with a fully factorized assumption.


\section{Gibbs MedLDA}\label{section:GibbsMedLDA}

Now, we present Gibbs max-margin topic models for binary classification and their ``augment-and-collapse" sampling algorithms. We will discuss further extensions in the next section.

\subsection{Learning with an Expected Margin Loss}

As stated above, MedLDA chooses the strategy to minimize the hinge loss of an expected classifier. In learning theory, an alternative approach to building classifiers with a posterior distribution of models is to minimize an expected loss, under the framework known as Gibbs classifiers (or stochastic classifiers)~\citep{McAllester2003,Catoni2007,Germain:icml09} which have nice theoretical properties on generalization performance.


For our case of inferring the distribution of latent topic assignments $\Zv = \{\zv_d\}_{d=1}^D$ and the classification model $\etav$, the expected margin loss is defined as follows. If we have drawn a sample of the topic assignments $\Zv$ and the prediction model $\etav$ from a posterior distribution $q(\etav, \Zv)$, we can define the linear discriminant function $$F(\etav, \zv; \wv) = \etav^\top \bar{\zv}$$ as before and make prediction using the {\it latent prediction rule}
\begin{eqnarray}\label{eq:GibbsRule}
\hat{y}(\etav, \zv) = \sign ~F(\etav, \zv; \wv).
\end{eqnarray}
Note that the prediction is a function of the configuration $(\etav, \zv)$. Let $\zeta_d = \ell - y_d \etav^\top \zvbar_d$, where $\ell$ is a cost parameter as defined before. The hinge loss of the stochastic classifier is $$\risk(\etav, \Zv) = \sum_{d=1}^D \max(0, \zeta_d),$$ a function of the latent variables $(\etav, \Zv)$, and the expected hinge loss is
\setlength\arraycolsep{1pt} \begin{eqnarray}
\risk^\prime(q) =  \ep_q[ \risk(\etav, \Zv) ] = \sum_{d=1}^D \ep_q\left[ \max(0, \zeta_d) \right], \nonumber
\end{eqnarray}
a function of the posterior distribution $q(\etav, \Zv)$. Since for any $(\etav, \Zv)$, the hinge loss $\risk(\etav, \Zv)$ is an upper bound of the training error of the latent Gibbs classifier~(\ref{eq:GibbsRule}), that is,
$$\risk(\etav, \Zv) \geq \sum_{d=1}^D \indicator\left(y_d \neq \hat{y}_d(\etav, \zv_d)\right),$$
we have $$ \risk^\prime(q) \geq \sum_{d=1}^D \ep_q\left[\indicator(y_d \neq \hat{y}_d(\etav, \zv_d))\right],$$
where $\indicator(\cdot)$ is an indicator function that equals to 1 if the predicate holds otherwise 0. In other words, the expected hinge loss $\risk^\prime(q)$ is an upper bound of the expected training error of the Gibbs classifier (\ref{eq:GibbsRule}). Thus, it is a good surrogate loss for learning a posterior distribution which could lead to a low training error in expectation.

Then, with the same goal as MedLDA of finding a posterior distribution $q(\etav, \Thetav, \Zv, \Phiv)$ that on one hand describes the observed data and on the other hand predicts as well as possible on training data, we define Gibbs MedLDA as solving the new regularized Bayesian inference problem
\begin{eqnarray}\label{problem:GibbsMedLDA_Unconstrained}
\min_{q(\etav, \Thetav, \Zv, \Phiv) }  \mathcal{L}\left( q(\etav, \Thetav, \Zv, \Phiv) \right) + 2 c \risk^\prime\left( q(\etav, \Thetav, \Zv, \Phiv) \right).
\end{eqnarray}
Note that we have written the expected margin loss $\risk^\prime$ as a function of the complete distribution $q(\etav, \Thetav, \Zv, \Phiv)$. This doesn't conflict with our definition of $\risk^\prime$ as a function of the marginal distribution $q(\etav, \Zv)$ because the other irrelevant variables (i.e., $\Thetav$ and $\Phiv$) are integrated out when we compute the expectation.

Comparing to MedLDA in problem (\ref{problem:MedLDA_Unconstrained}), we have the following lemma by applying Jensen's inequality.
\begin{lemma}\label{lemma:upper-bound-medlda} The expected hinge loss $\risk^\prime$ is an upper bound of the hinge loss of the expected classifier~(\ref{eqn:prediction_rule}):
\begin{eqnarray}
\risk^\prime(q) \geq \risk(q) = \sum_{d=1}^D \max\left( 0, \ep_q[\zeta_d] \right); \nonumber
\end{eqnarray}
and thus the objective in (\ref{problem:GibbsMedLDA_Unconstrained}) is an upper bound of that in (\ref{problem:MedLDA_Unconstrained}) when $c$ values are the same.
\end{lemma}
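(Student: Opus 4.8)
The plan is to prove the pointwise inequality $\ep_q[\max(0,\zeta_d)] \geq \max(0, \ep_q[\zeta_d])$ for each document $d$, and then sum over $d$ and compare the two regularized objectives. The key observation is that the function $g(t) = \max(0,t)$ is convex in $t$, so Jensen's inequality applies directly to the random variable $\zeta_d = \ell - y_d\,\etav^\top\zvbar_d$, whose randomness comes from $q(\etav,\Zv)$ (the variables $\Thetav,\Phiv$ being irrelevant here and integrated out). Concretely, $\ep_q[g(\zeta_d)] \geq g(\ep_q[\zeta_d])$, i.e. $\ep_q[\max(0,\ell - y_d\,\etav^\top\zvbar_d)] \geq \max(0,\ell - y_d\,\ep_q[\etav^\top\zvbar_d])$.

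The one point that needs a little care is matching the right-hand side to $\risk(q)$ as defined in problem~(\ref{problem:MedLDA_Unconstrained}), where the hinge loss is $\sum_d \max(0,\ell - y_d F(\wv_d))$ with $F(\wv_d) = \ep_{q(\etav,\zv|\data)}[\etav^\top\zvbar_d]$. So I need $\ep_q[\zeta_d] = \ell - y_d F(\wv_d)$, which follows because $\ell$ and $y_d$ are constants and expectation is linear: $\ep_q[\zeta_d] = \ell - y_d\,\ep_q[\etav^\top\zvbar_d] = \ell - y_d F(\wv_d)$. Hence $\max(0,\ep_q[\zeta_d]) = \max(0,\ell - y_d F(\wv_d))$, and summing the per-document Jensen bounds over $d=1,\dots,D$ gives $\risk^\prime(q) \geq \risk(q)$, establishing the first claim of the lemma.

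For the second claim, both objectives share the same term $\mathcal{L}(q(\etav,\Thetav,\Zv,\Phiv))$, and the regularization terms are $2c\,\risk^\prime(q)$ versus $2c\,\risk(q)$ with identical $c > 0$. Adding $\mathcal{L}(q)$ to both sides of $2c\,\risk^\prime(q) \geq 2c\,\risk(q)$ (valid since $c > 0$) yields that the objective in~(\ref{problem:GibbsMedLDA_Unconstrained}) dominates that in~(\ref{problem:MedLDA_Unconstrained}) pointwise in $q$, which is exactly the stated conclusion.

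There is no real obstacle here — the argument is a one-line application of Jensen's inequality plus linearity of expectation. The only thing worth stating explicitly is the justification that $\risk^\prime$ can legitimately be written as a function of the full distribution $q(\etav,\Thetav,\Zv,\Phiv)$ rather than just the marginal $q(\etav,\Zv)$, since $\zeta_d$ does not depend on $\Thetav$ or $\Phiv$; the excerpt already notes this, so I would simply reference it rather than belabor it.
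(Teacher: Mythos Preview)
Your proposal is correct and matches the paper's own argument, which simply invokes Jensen's inequality applied to the convex function $t\mapsto\max(0,t)$ without spelling out further details. Your write-up is in fact more explicit than the paper's one-line justification, but the underlying idea is identical.
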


\subsection{Formulation with Data Augmentation}
If we directly solve  problem~(\ref{problem:GibbsMedLDA_Unconstrained}), the expected hinge loss $\risk^\prime$ is hard to deal with because of the non-differentiable max function. Fortunately, we can develop a simple collapsed Gibbs sampling algorithm with analytical forms of local conditional distributions, based on a data augmentation formulation of the expected hinge-loss.

Let $\phi(y_d | \zv_d, \etav) = \exp\{ -2 c \max(0, \zeta_d) \}$ be the unnormalized likelihood of the response variable for document $d$. Then, problem~(\ref{problem:GibbsMedLDA_Unconstrained}) can be written as
\begin{equation}\label{problem:GibbsMedLDA}
\min_{q(\etav, \Thetav, \Zv, \Phiv) } \mathcal{L}\left( q(\etav, \Thetav, \Zv, \Phiv) \right) - \ep_q \left[ \log \phi(\yv | \Zv, \etav) \right],
\end{equation}
where $\phi(\yv|\Zv, \etav) = \prod_{d=1}^D \phi(y_d|\zv_d, \etav)$. Solving problem~(\ref{problem:GibbsMedLDA}) with the constraint that $q(\etav, \Thetav, \Zv, \Phiv) \in \mathcal{P}$, we can get the normalized posterior distribution
\begin{eqnarray}
q(\etav, \Thetav, \Zv, \Phiv) = \frac{ p_0(\etav, \Thetav, \Zv, \Phiv) p(\Wv | \Zv, \Phiv) \phi(\yv | \Zv, \etav) }{\psi(\yv, \Wv)}, \nonumber
\end{eqnarray}
where $\psi(\yv, \Wv)$ is the normalization constant. Due to the complicated form of $\phi$, it will not have simple conditional distributions if we want to derive a Gibbs sampling algorithm for $q(\etav, \Thetav, \Zv, \Phiv)$ directly. This motivates our exploration of data augmentation techniques. Specifically, using the ideas of data augmentation~\citep{Tanner:1987,Polson:BA11}, we have Lemma~\ref{lemma:SoM}.
\begin{lemma}[Scale Mixture Representation] \label{lemma:SoM} The unnormalized likelihood can be expressed as
\setlength\arraycolsep{1pt} \begin{eqnarray}
&& \phi(y_d|\zv_d, \etav) = \int_0^\infty \frac{1}{ \sqrt{ 2\pi \lambda_d} } \exp\left( -\frac{ (\lambda_d + c \zeta_d)^2 }{2 \lambda_d}  \right) \ud \lambda_d \nonumber
\end{eqnarray}
\end{lemma}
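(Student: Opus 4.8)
The plan is to establish the identity by directly evaluating the Gaussian-type integral on the right-hand side and checking that it reproduces $\phi(y_d\mid\zv_d,\etav)=\exp\{-2c\max(0,\zeta_d)\}$. Throughout I would write $\zeta=\zeta_d$ for brevity and keep in mind that $c>0$ is fixed.

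\textbf{Step 1: complete the square in the exponent.} Expanding the quadratic,
\begin{equation}
\frac{(\lambda+c\zeta)^2}{2\lambda}=\frac{\lambda}{2}+c\zeta+\frac{c^2\zeta^2}{2\lambda},\nonumber
\end{equation}
so the integrand factors as $e^{-c\zeta}$ times $\tfrac{1}{\sqrt{2\pi\lambda}}\exp\!\bigl(-\tfrac{\lambda}{2}-\tfrac{c^2\zeta^2}{2\lambda}\bigr)$, and the scalar $e^{-c\zeta}$ pulls out of the integral. \textbf{Step 2: evaluate the remaining integral.} It then remains to compute $I(b)=\int_0^\infty\lambda^{-1/2}\exp\!\bigl(-\tfrac{\lambda}{2}-\tfrac{b}{2\lambda}\bigr)\,\ud\lambda$ for $b=c^2\zeta^2\ge 0$. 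The kernel $\lambda^{-1/2}\exp\{-\tfrac12(\lambda+b/\lambda)\}$ is, up to normalization, that of a $\GIG(1/2,1,b)$ density; using the normalizing constant $\int_0^\infty\lambda^{p-1}\exp\{-\tfrac12(a\lambda+b/\lambda)\}\,\ud\lambda=2(b/a)^{p/2}K_p(\sqrt{ab})$ with $p=\tfrac12$, $a=1$, together with $K_{1/2}(z)=\sqrt{\pi/(2z)}\,e^{-z}$, gives $I(b)=\sqrt{2\pi}\,e^{-\sqrt{b}}=\sqrt{2\pi}\,e^{-c|\zeta|}$. (Alternatively, $I(b)$ can be obtained elementarily by the substitution that symmetrizes $\sqrt{\lambda}$ against $\sqrt{b}/\sqrt{\lambda}$, reducing $I(b)$ to a standard Gaussian integral; the degenerate case $b=0$ gives $I(0)=\int_0^\infty\lambda^{-1/2}e^{-\lambda/2}\,\ud\lambda=\sqrt{2\pi}$, consistent with the formula.)

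\textbf{Step 3: combine and simplify.} Multiplying the pieces, the right-hand side of the claim equals $\tfrac{1}{\sqrt{2\pi}}\,e^{-c\zeta}\cdot\sqrt{2\pi}\,e^{-c|\zeta|}=e^{-c(\zeta+|\zeta|)}$. The elementary identity $\zeta+|\zeta|=2\max(0,\zeta)$, valid for every real $\zeta$ (in particular for either sign of $\zeta_d=\ell-y_d\etav^\top\zvbar_d$), then yields $e^{-2c\max(0,\zeta_d)}=\phi(y_d\mid\zv_d,\etav)$, which is exactly the definition of the unnormalized likelihood.

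The only nontrivial step is Step 2, the evaluation of $I(b)$; Steps 1 and 3 are just algebra. I expect the cleanest writeup either to cite the generalized-inverse-Gaussian / Bessel-function normalization directly (this is the SVM data-augmentation construction of \citet{Polson:BA11}) or to include the short symmetrizing substitution, handling the boundary case $\zeta_d=0$ either by continuity or by the direct computation noted above.
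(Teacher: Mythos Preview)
Your proof is correct and is essentially the same approach as the paper's: the paper simply notes that $-2c\max(0,\zeta_d)=-2\max(0,c\zeta_d)$ and then defers to \citet{Polson:BA11} for the integral identity, whereas you carry out that Polson--Scott computation explicitly via the $\GIG$/Bessel normalization. Your write-up is thus a self-contained version of what the paper obtains by citation.
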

\begin{proof}
Due to the fact that $a \max(0, x) = \max(0, a x)$ if $a \geq 0$, we have $-2c\max(0, \zeta_d) = -2\max(0, c\zeta_d)$. Then, we can follow the proof in~\citep{Polson:BA11} to get the results.
\end{proof}
Lemma~\ref{lemma:SoM} indicates that the posterior distribution of Gibbs MedLDA can be expressed as the marginal of a higher-dimensional distribution that includes the augmented variables $\lambdav = \{ \lambda_d \}_{d=1}^D$, that is,
\begin{eqnarray}
q(\etav, \Thetav, \Zv, \Phiv) = \int_0^\infty \cdots \int_0^\infty q(\etav, \lambdav, \Thetav, \Zv, \Phiv) \ud \lambda_1 \cdots \ud \lambda_D = \int_{\mathbb{R}_{+}^D} q(\etav, \lambdav, \Thetav, \Zv, \Phiv) \ud \lambdav,
\end{eqnarray}
where $\mathbb{R}_{+} = \{ x: x \in \mathbb{R},~x > 0 \}$ is the set of positive real numbers; the complete posterior distribution is
\setlength\arraycolsep{1pt}\begin{eqnarray}
q(\etav, \lambdav, \Thetav, \Zv, \Phiv) = \frac{ p_0(\etav, \Thetav, \Zv, \Phiv) p(\Wv | \Zv, \Phiv) \phi(\yv, \lambdav | \Zv, \etav) }{\psi(\yv, \Wv)}; \nonumber
\end{eqnarray}
and the unnormalized joint distribution of $\yv$ and $\lambdav$ is
\begin{eqnarray}
\phi(\yv, \lambdav | \Zv, \etav) = \prod_{d=1}^D \frac{1}{ \sqrt{ 2\pi \lambda_d} } \exp\left( -\frac{ (\lambda_d + c \zeta_d )^2 }{2 \lambda_d}  \right). \nonumber
\end{eqnarray}
In fact, we can show that the complete posterior distribution is the solution of the data augmentation problem of Gibbs MedLDA
\setlength\arraycolsep{0pt}\begin{eqnarray}\label{eq:JointInf}
&&\min_{q(\etav, \lambdav, \Thetav, \Zv, \Phiv) } \mathcal{L}\left( q(\etav, \lambdav, \Thetav, \Zv, \Phiv) \right) - \ep_q \left[ \log \phi(\yv, \lambdav | \Zv, \etav) \right], \nonumber
\end{eqnarray}
which is again subject to the normalization constraint that $q(\etav, \lambdav, \Thetav, \Zv, \Phiv) \in \mathcal{P}$. The first term in the objective is $\mathcal{L}( q(\etav, \lambdav, \Thetav, \Zv, \Phiv) ) = \mathrm{KL}(q(\etav, \lambdav, \Thetav, \Zv, \Phiv) || p_0(\etav, \Thetav, \Zv, \Phiv)) - \mathbb{E}_q[\log p(\Wv | \Zv, \Phiv)]$. If we like to impose a prior distribution on the augmented variables $\lambdav$, one good choice can be an improper uniform prior.
\begin{remark}
The objective of this augmented problem is an upper bound of the objective in~(\ref{problem:GibbsMedLDA}) (thus, also an upper bound of MedLDA's objective due to Lemma~\ref{lemma:upper-bound-medlda}). 
This is because by using the data augmentation we can show that
\begin{eqnarray}
\ep_{q(\Vv)} [ \log \phi(\yv | \Zv, \etav) ]
&=& \ep_{q(\Vv)} \left[ \log \int_{\mathbb{R}_{+}^D} \phi(\yv, \lambdav | \Zv, \etav) \ud \lambdav \right] \nonumber \\
&=& \ep_{q(\Vv)} \left[ \log \int_{\mathbb{R}_{+}^D} \frac{q(\lambdav|\Vv)}{q(\lambdav|\Vv)} \phi(\yv, \lambdav | \Zv, \etav) \ud \lambdav \right] \nonumber  \\
&\geq& \ep_{q(\Vv)} \left[ \ep_{q(\lambdav|\Vv)} \left[   \log \phi(\yv, \lambdav | \Zv, \etav) \right] - \ep_{q(\lambdav|\Vv)}\left[ \log q(\lambdav|\Vv) \right] \right] \nonumber \\
&=& \ep_{q(\Vv, \lambdav)} \left[ \log \phi(\yv, \lambdav | \Zv, \etav) \right] - \ep_{q(\Vv, \lambdav)}\left[ \log q(\lambdav|\Vv) \right] \nonumber
\end{eqnarray}
where $\Vv = \{\etav, \Thetav, \Zv, \Phiv\}$ denotes all the random variables in MedLDA. Therefore, we have
\begin{eqnarray}
\mathcal{L}(q(\Vv)) - \ep_{q(\Vv)} [ \log \phi(\yv | \Zv, \etav) ]
&\leq& \mathcal{L}(q(\Vv)) - \ep_{q(\Vv, \lambdav)} \left[ \log \phi(\yv, \lambdav | \Zv, \etav) \right] + \ep_{q(\Vv, \lambdav)}\left[ \log q(\lambdav|\Vv) \right] \nonumber \\
&=& \mathcal{L}(q(\Vv, \lambdav)) - \ep_q [ \log \phi(\yv, \lambdav | \Zv, \etav) ].\nonumber
\end{eqnarray}
\end{remark}

\subsection{Inference with Collapsed Gibbs Sampling}


Although with the above data augmentation formulation we can do Gibbs sampling to infer the complete posterior distribution $q(\etav, \lambdav,\Thetav, \Zv, \Phiv)$ and thus $q(\etav, \Thetav, \Zv, \Phiv)$ by ignoring $\lambdav$, the mixing rate would be slow because of the large sample space of the latent variables. One way to effectively reduce the sample space and improve mixing rates is to integrate out the intermediate Dirichlet variables $(\Thetav, \Phiv)$ and build a Markov chain whose equilibrium distribution is the resulting marginal distribution $q(\etav, \lambdav, \Zv)$. We propose to use collapsed Gibbs sampling, which has been successfully used in LDA~\citep{Griffiths:04}. With the data augmentation representation, this leads to an ``augment-and-collapse" sampling algorithm for Gibbs MedLDA, as detailed below.

For the data augmented formulation of Gibbs MedLDA, by integrating out the Dirichlet variables $(\Thetav, \Phiv)$, we get the collapsed posterior distribution:
\setlength\arraycolsep{1pt} \begin{eqnarray}
q(\etav, \lambdav, \Zv)  && \propto  p_0(\etav) p(\Wv, \Zv|\alphav, \betav) \phi(\yv, \lambdav|\Zv, \etav)  \nonumber \\
             && = p_0(\etav) \left[ \prod_{d=1}^{D} \frac{\delta(\mathbf{C}_d + \alphav)}{\delta(\alphav)} \right] \prod_{k=1}^{K}\frac{\delta(\mathbf{C}_k + \betav)}{\delta(\betav)} \prod_{d=1}^{D} \frac{1}{ \sqrt{ 2\pi \lambda_d} } \exp\left( - \frac{ (\lambda_d + c \zeta_d )^2 }{2 \lambda_d}  \right), \nonumber
\end{eqnarray}
where $$\delta(\xv)=\frac{\prod_{i=1}^{\mathrm{dim}(\xv)}\Gamma(x_i)}{\Gamma(\sum_{i=1}^{\mathrm{dim}(\xv)} x_i)};$$ $\Gamma(\cdot)$ is the Gamma function; $C_k^t$ is the number of times that the term $t$ is assigned to topic $k$ over the whole corpus; $\mathbf{C}_k=\{C_k^t\}_{t=1}^{V}$ is the set of word counts associated with topic $k$; $C_d^k$ is the number of times that terms are associated with topic $k$ within the $d$-th document; and $\mathbf{C}_d=\{C_d^k\}_{k=1}^{K}$ is the set of topic counts for document $d$. Then, the conditional distributions used in collapsed Gibbs sampling are as follows.

{\bf For {\boldmath $\etav$}}: Let us assume its prior is the commonly used isotropic Gaussian distribution $p_0(\etav) = \prod_{k=1}^K \mathcal{N}(\eta_k; 0, \nu^2)$, where $\nu$ is a non-zero parameter. Then, we have the conditional distribution of $\etav$ given the other variables:
\begin{eqnarray}\label{eq:GibbsEta}
q(\etav | \Zv, \lambdav ) && \propto p_0(\etav) \prod_{d=1}^D \exp\left( - \frac{ (\lambda_d + c \zeta_d )^2 }{2 \lambda_d}  \right) \nonumber \\
                         &&\propto \exp\left( -\sum_{k=1}^K \frac{\eta_k^2}{2 \nu^2} - \sum_{d=1}^D \frac{ (\lambda_d + c \zeta_d )^2 }{2 \lambda_d}  \right) \nonumber \\
                         &&= \exp\left( -\frac{1}{2}\etav^\top \left( \frac{1}{\nu^2} I + c^2 \sum_{d=1}^D \frac{\zvbar_d \zvbar_d^\top}{\lambda_d} \right) \etav + \left( c\sum_{d=1}^D y_d \frac{\lambda_d + c\ell }{\lambda_d}\zvbar_d \right)^\top \etav \right) \nonumber \\
                         &&= \mathcal{N}(\etav; \muv, \Sigmav),
\end{eqnarray}
a $K$-dimensional Gaussian distribution, where the posterior mean and the covariance matrix are $$\muv = \Sigmav \left( c \sum_{d=1}^D y_d \frac{\lambda_d + c\ell}{\lambda_d}\zvbar_d \right),~~ \Sigmav = \left( \frac{1}{\nu^2}I + c^2 \sum_{d=1}^D \frac{\zvbar_d \zvbar_d^\top}{\lambda_d} \right)^{-1}.$$ Therefore, we can easily draw a sample from this multivariate Gaussian distribution. The inverse can be robustly done using Cholesky decomposition, an $O(K^3)$ procedure. Since $K$ is normally not large, the inversion can be done efficiently, especially in applications where the number of documents is much larger than the number of topics. 

{\bf For $\Zv$}: The conditional distribution of $\Zv$ given the other variables is
\begin{eqnarray}
q(\Zv | \etav, \lambdav ) && \propto \prod_{d=1}^{D} \frac{\delta(\mathbf{C}_d + \alphav)}{\delta(\alphav)} \exp\left( - \frac{ (\lambda_d + c \zeta_d)^2 }{2 \lambda_d}  \right)  \prod_{k=1}^{K}\frac{\delta(\mathbf{C}_k + \betav)}{\delta(\betav)}. \nonumber
\end{eqnarray}
By canceling common factors, we can derive the conditional distribution of one variable $z_{dn}$ given others $\Zv_{\neg}$ as:
\setlength\arraycolsep{1pt} \begin{eqnarray}\label{eqn:transitionProb}
 q(z_{dn}^k = 1 | \Zv_{\neg}, \etav, \lambdav, w_{dn}=t ) && \propto \frac{ (C_{k,\neg n}^{t}+\beta_t) (C_{d,\neg n}^{k}+\alpha_k) }{\sum_{t=1}^V C_{k,\neg n}^t + \sum_{t=1}^V \beta_t} \exp\Big( \frac{ \gamma y_d (c\ell + \lambda_d)\eta_k }{\lambda_d} \nonumber \\
&& ~~  - c^2 \frac{\gamma^2 \eta_k^2 + 2 \gamma(1-\gamma)\eta_k \Lambda_{dn}^k }{2 \lambda_d} \Big),
\end{eqnarray}
where $C_{\cdot,\neg n}^{\cdot}$ indicates that term $n$ is excluded from the corresponding document or topic; $\gamma = \frac{1}{N_d}$; and $$\Lambda_{dn}^k = \frac{1}{N_d-1} \sum_{k^\prime = 1}^K \eta_{k^\prime} C_{d, \neg n}^{k^\prime}$$ is the discriminant function value without word $n$. We can see that the first term is from the LDA model for observed word counts and the second term is from the supervised signal $\yv$.

\begin{algorithm}[t]
\caption{Collapsed Gibbs Sampling Algorithm for GibbsMedLDA Classification Models}\label{alg:GibbsAlg}
\begin{algorithmic}[1]
   \STATE {\bfseries Initialization:} set $\lambdav = 1$ and randomly draw $z_{dk}$ from a uniform distribution.
   \FOR{$m=1$ {\bfseries to} $M$}
   \STATE draw the classifier from the normal distribution~(\ref{eq:GibbsEta})
    \FOR{$d=1$ {\bfseries to} $D$}
        \FOR{each word $n$ in document $d$}
             \STATE draw a topic from the multinomial distribution~(\ref{eqn:transitionProb})
        \ENDFOR
       \STATE draw $\lambda_d^{-1}$ (and thus $\lambda_d$) from the inverse Gaussian distribution~(\ref{eq:GibbsLambda}).
   \ENDFOR
   \ENDFOR
\end{algorithmic}
\end{algorithm}

{\bf For $\lambdav$}: Finally, the conditional distribution of the augmented variables $\lambdav$ given the other variables is factorized and we can derive the conditional distribution for each $\lambda_d$ as
\begin{eqnarray}
 q(\lambda_d | \Zv, \etav) &&\propto \frac{1}{\sqrt{ 2 \pi \lambda_d }} \exp\left( - \frac{(\lambda_d + c \zeta_d )^2 }{2\lambda_d}  \right) \nonumber \\
                         &&\propto \frac{1}{\sqrt{ 2 \pi \lambda_d }} \exp\left( -\frac{c^2 \zeta_d^2}{2 \lambda_d} - \frac{\lambda_d}{2} \right) \nonumber \\
                         && =  \GIG \left( \lambda_d; \frac{1}{2}, 1, c^2\zeta_d^2 \right), \nonumber
\end{eqnarray}
where $$\GIG(x; p, a,b) = C(p, a,b)x^{p-1}\exp\left( -\frac{1}{2}\big( \frac{b}{x} + a x \big) \right)$$ is a generalized inverse Gaussian distribution~\citep{Devroye:book1986} and $C(p,a,b)$ is a normalization constant. Therefore, we can derive that $\lambda_d^{-1}$ follows an inverse Gaussian distribution
\begin{eqnarray}\label{eq:GibbsLambda}
p(\lambda_d^{-1} | \Zv, \etav ) = \IG \left(\lambda_d^{-1}; \frac{1}{c |\zeta_d|}, 1 \right),
\end{eqnarray}
where $$\IG(x; a, b) = \sqrt{ \frac{b}{2\pi x^3} }\exp\left( -\frac{b(x - a)^2}{2 a^2 x} \right)$$ for $a>0$ and $b>0$.

With the above conditional distributions, we can construct a Markov chain which iteratively draws samples of the classifier weights $\etav$ using Eq. (\ref{eq:GibbsEta}), the topic assignments $\Zv$ using Eq. (\ref{eqn:transitionProb}) and the augmented variables $\lambdav$ using Eq. (\ref{eq:GibbsLambda}), with an initial condition. To sample from an inverse Gaussian distribution, we apply the transformation method with multiple roots~\citep{Michael:IG76} which is very efficient with a constant time complexity. Overall, the per-iteration time complexity is $\mathcal{O}(K^3 + N_{total} K)$, where $N_{total} = \sum_{d=1}^D N_d$ is the total number of words in all documents. If $K$ is not very large (e.g., $K \ll \sqrt{N_{total}}$), which is the common case in practice as $N_{total}$ is often very large, the per-iteration time complexity is $\mathcal{O}(N_{total} K)$; if $K$ is large (e.g., $K \gg \sqrt{N_{total}}$), which is not common in practice, drawing the global classifier weights will dominate and the per-iteration time complexity is $\mathcal{O}(K^3)$. In our experiments, we initially set $\lambda_d=1,~\forall d$ and randomly draw $\Zv$ from a uniform distribution. In training, we run this Markov chain to finish the burn-in stage with $M$ iterations, as outlined in Algorithm~\ref{alg:GibbsAlg}. Then, we draw a sample $\hat{\etav}$ as the Gibbs classifier to make predictions on testing data.

In general, there is no theoretical guarantee that a Markov chain constructed using data augmentation can converge to the target distribution (See~\citep{Hobert2011} for a failure example). However, for our algorithms, we can justify that the Markov transition distribution of the chain satisfies the condition $\mathcal{K}$ from~\citep{Hobert2011}, i.e., the transition probability from one state to any other state is larger than 0. Condition $\mathcal{K}$ implies that the Markov chain is Harris ergodic~\citep[Lemma 1]{Tan2009}. Therefore, no matter how the chain is started, our sampling algorithms can be employed to effectively explore the intractable posterior distribution. In practice, the sampling algorithm as well as the ones to be presented require only a few iterations to get stable prediction performance, as we shall see in Section~\ref{sec:sensitivity-burn-in}. More theoretical analysis such as convergence rates requires a good bit of technical Markov chain theory and is our future work.


\subsection{Prediction}\label{sec:prediction}

To apply the Gibbs classifier $\hat{\etav}$, we need to infer the topic assignments for testing document, denoted by $\wv$. A fully Bayesian treatment needs to compute an integral in order to get the posterior distribution of the topic assignment given the training data $\data$ and the testing document content $\wv$: $$p(\zv | \wv, \data) \propto \int_{\mathcal{P}_V^K} p(\zv, \wv, \Phi | \data) \ud \Phi = \int_{\mathcal{P}_V^K} p(\zv, \wv | \Phi) p(\Phi | \data) \ud \Phi,$$ where $\mathcal{P}_V$ is the $V-1$ dimensional simplex; and the second equality holds due to the conditional independence assumption of the documents given the topics. Various approximation methods can be applied to compute the integral. Here, we take the approach applied in~\citep{Zhu:jmlr12,Zhu:nips12}, which uses a point estimate of topics $\Phiv$ from training data and makes predictions based on them. Specifically, we use a point estimate $\hat{\Phiv}$ (a Dirac measure) to approximate the probability distribution $p(\Phiv | \data)$. For the collapsed Gibbs sampler, an estimate of $\hat{\Phiv}$ using the samples is the posterior mean $$\hat{\phi}_{kt} \propto {C_k^t} + \beta_t.$$ Then, given a testing document $\wv$, we infer its latent components $\zv$ using $\hat{\Phiv}$ by drawing samples from the local conditional distribution
\begin{eqnarray}\label{eq:GibbsTest}
p(z_n^k=1|\mathbf{z}_{\neg n}, \wv, \data) \propto \hat{\phi}_{kw_{n}}\left( C_{\neg n}^{k} + \alpha_k \right),
\end{eqnarray}
where $C_{\neg n}^{k}$ is the number of times that the terms in this document $\wv$ assigned to topic $k$ with the $n$-th term excluded. To start the sampler, we randomly set each word to one topic. Then, we run the Gibbs sampler for a few iterations until some stop criterion is satisfied, e.g., after a few burn-in steps or the relative change of data likelihood is lower than some threshold. Here, we adopt the latter, the same as in~\citep{Zhu:nips12}. After this burn-in stage, we keep one sample of $\zv$ for prediction using the stochastic classifier. Empirically, using the average of a few (e.g., 10) samples of $\zv$ could lead to slightly more robust predictions, as we shall see in Section~\ref{sec:sensitivity-test-lag}.

\section{Extensions to Regression and Multi-task Learning}\label{section:Extensions}

The above ideas can be naturally generalized to develop Gibbs max-margin supervised topic models for various prediction tasks. In this section, we present two examples for regression and multi-task learning, respectively.

\subsection{Gibbs MedLDA Regression Model}

We first discuss how to generalize the above ideas to develop a regression model, where the response variable $Y$ takes real values. Formally, the Gibbs MedLDA regression model also has two components --- an LDA model to describe input bag-of-words documents and a Gibbs regression model for the response variables. Since the LDA component is the same as in the classification model, we focus on presenting the Gibbs regression model.

\subsubsection{The Models with Data Augmentation}
If a sample of the topic assignments $\Zv$ and the prediction model $\etav$ are drawn from the posterior distribution $q(\etav, \Zv)$, we define the latent regression rule as
\begin{eqnarray}\label{eq:GibbsMedLDAReg}
\hat{y}(\etav, \zv) = \etav^\top \zvbar.
 \end{eqnarray}
To measure the goodness of the prediction rule~(\ref{eq:GibbsMedLDAReg}), we adopt the widely used $\epsilon$-insensitive loss $$\risk_\epsilon(\etav, \Zv) = \sum_{d=1}^D \max\left( 0, |\Delta_d| - \epsilon \right),$$
where $\Delta_d = y_d - \etav^\top \zvbar_d$ is the margin between the true score and the predicted score. The $\epsilon$-insensitive loss has been successfully used in learning fully observed support vector regression~\citep{Smola:03}. In our case, the loss is a function of predictive model $\etav$ as well as the topic assignments $\Zv$ which are hidden from the input data. To resolve this uncertainty, we define the expected $\epsilon$-insensitive loss $$\risk_\epsilon(q) = \ep_q\left[ \risk_\epsilon(\etav, \Zv) \right] = \sum_{d=1}^D \ep_q\left[ \max(0, |\Delta_d| - \epsilon) \right],$$ a function of the desired posterior distribution $q(\etav, \Zv)$.

With the above definitions, we can follow the same principle as Gibbs MedLDA to define the Gibbs MedLDA regression model as solving the regularized Bayesian inference problem
\setlength\arraycolsep{-4pt}\begin{eqnarray}\label{problem:GibbsMedLDAr_Unconstrained}
&&\min_{q(\etav, \Thetav, \Zv, \Phiv) }  \mathcal{L}\left( q(\etav, \Thetav, \Zv, \Phiv) \right) + 2 c \risk_\epsilon\left( q(\etav, \Thetav, \Zv, \Phiv) \right).
\end{eqnarray}
Note that as in the classification model, we have put the complete distribution $q(\etav, \Thetav, \Zv, \Phiv)$ as the argument of the expected loss $\risk_\epsilon$, which only depends on the marginal distribution $q(\etav, \Zv)$. This does not affect the results because we are taking the expectation to compute $\risk_\epsilon$ and any irrelevant variables will be marginalized out.

As in the Gibbs MedLDA classification model, we can show that $\risk_\epsilon$ is an upper bound of the $\epsilon$-insensitive loss of MedLDA's expected prediction rule, by applying Jensen's inequality to the convex function $h(x) = \max(0, |x|-\epsilon)$.
\begin{lemma} We have $\risk_\epsilon \geq  \sum_{d=1}^D \max(0, |\ep_q[\Delta_d]| - \epsilon)$.
\end{lemma}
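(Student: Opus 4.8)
The plan is to mirror the argument that establishes Lemma~\ref{lemma:upper-bound-medlda} for the hinge loss, now applied to the convex function $h(x) = \max(0, |x| - \epsilon)$. First I would recall the definition $\risk_\epsilon(q) = \sum_{d=1}^D \ep_q[\max(0, |\Delta_d| - \epsilon)] = \sum_{d=1}^D \ep_q[h(\Delta_d)]$, where $\Delta_d = y_d - \etav^\top \zvbar_d$ and the expectation is over the posterior $q(\etav, \Zv)$ (the remaining variables $\Thetav, \Phiv$ being irrelevant and marginalized out). The function $h$ is convex because it is the composition of the absolute value with a shift, truncated at zero, or more directly because $h(x) = \max(0, x - \epsilon, -x - \epsilon)$ is a pointwise maximum of three affine functions of $x$, hence convex.

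The key step is a single application of Jensen's inequality term by term: for each document $d$, convexity of $h$ gives $\ep_q[h(\Delta_d)] \geq h(\ep_q[\Delta_d])$. Summing over $d = 1, \dots, D$ then yields
\begin{eqnarray}
\risk_\epsilon(q) = \sum_{d=1}^D \ep_q[h(\Delta_d)] \geq \sum_{d=1}^D h(\ep_q[\Delta_d]) = \sum_{d=1}^D \max\left(0, |\ep_q[\Delta_d]| - \epsilon\right), \nonumber
\end{eqnarray}
which is exactly the claimed bound. Since $\Delta_d$ is linear in $(\etav, \Zv)$ and $y_d$ is a constant, $\ep_q[\Delta_d] = y_d - \ep_q[\etav^\top \zvbar_d]$, so the right-hand side is precisely the $\epsilon$-insensitive loss evaluated at MedLDA's expected prediction rule $\hat y_d = \ep_q[\etav^\top \zvbar_d]$.

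There is essentially no obstacle here; the only point requiring a moment's care is confirming convexity of $h$, which I would handle by writing it as the max of the affine pieces $0$, $x - \epsilon$, and $-x - \epsilon$ (valid since $\epsilon \geq 0$). One should also note that the inequality holds trivially if the expectations are infinite, but under the standard assumption that the relevant moments exist (which is implicit throughout the paper), Jensen's inequality applies directly. I would keep the proof to two or three sentences, matching the brevity of the analogous argument for Lemma~\ref{lemma:upper-bound-medlda}.
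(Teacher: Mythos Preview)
Your proposal is correct and follows exactly the approach indicated in the paper: apply Jensen's inequality termwise to the convex function $h(x)=\max(0,|x|-\epsilon)$, just as Lemma~\ref{lemma:upper-bound-medlda} does for the hinge loss. Your explicit verification of convexity via $h(x)=\max(0,\,x-\epsilon,\,-x-\epsilon)$ is a nice touch that the paper leaves implicit.
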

We can reformulate problem (\ref{problem:GibbsMedLDAr_Unconstrained}) in the same form as problem~(\ref{problem:GibbsMedLDA}), with the unnormalized likelihood $$\phi(y_d|\etav,\zv_d) = \exp\left(-2c \max(0, |\Delta_d | - \epsilon)\right).$$
Then, we have the dual scale of mixture representation, by noting that
\begin{eqnarray}\label{eq:Eqaulity}
\max(0, |x| - \epsilon) = \max(0, x - \epsilon) + \max(0, -x - \epsilon).
\end{eqnarray}
\begin{lemma}[Dual Scale Mixture Representation]  For regression, the unnormalized likelihood can be expressed as
\setlength\arraycolsep{1pt} \begin{eqnarray}
\phi(y_d|\etav,\zv_d) && =  \int_0^\infty  \frac{1}{ \sqrt{ 2\pi \lambda_d} } \exp\left( -\frac{ (\lambda_d + c(\Delta_d - \epsilon))^2 }{2 \lambda_d}  \right) \ud\lambda_d \nonumber \\
&& \times \int_0^\infty \frac{1}{ \sqrt{ 2\pi \omega_d} } \exp\left( -\frac{ (\omega_d - c(\Delta_d + \epsilon))^2 }{2 \omega_d}  \right) \ud\omega_d \nonumber
\end{eqnarray}
\end{lemma}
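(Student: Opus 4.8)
The plan is to reduce the claim to two invocations of the scale mixture identity already proved in Lemma~\ref{lemma:SoM}, after first splitting the $\epsilon$-insensitive loss additively. First I would use the elementary identity~(\ref{eq:Eqaulity}), $\max(0,|x|-\epsilon) = \max(0,x-\epsilon) + \max(0,-x-\epsilon)$, which holds for every real $x$ whenever $\epsilon \geq 0$ (check the three cases $x \geq \epsilon$, $x \leq -\epsilon$, and $|x| < \epsilon$; at most one term on the right is nonzero in each case). Setting $x = \Delta_d$ and exponentiating, this factorizes the unnormalized likelihood as
\begin{eqnarray}
\phi(y_d|\etav,\zv_d) = \exp\left(-2c\max(0,\Delta_d-\epsilon)\right)\cdot \exp\left(-2c\max(0,-\Delta_d-\epsilon)\right). \nonumber
\end{eqnarray}

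Next I would treat each factor separately. Using $a\max(0,x) = \max(0,ax)$ for $a \geq 0$, exactly as in the proof of Lemma~\ref{lemma:SoM}, the first factor equals $\exp(-2\max(0, c(\Delta_d-\epsilon)))$ and the second equals $\exp(-2\max(0, -c(\Delta_d+\epsilon)))$. Then I would apply the scale mixture representation underlying Lemma~\ref{lemma:SoM} — namely $\exp(-2\max(0,u)) = \int_0^\infty (2\pi\lambda)^{-1/2}\exp(-(\lambda+u)^2/(2\lambda))\,\ud\lambda$ — once with $u = c(\Delta_d-\epsilon)$ and dummy variable $\lambda_d$, and once with $u = -c(\Delta_d+\epsilon)$ and dummy variable $\omega_d$. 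The first gives precisely the $\lambda_d$-integral in the statement; the second gives $\int_0^\infty (2\pi\omega_d)^{-1/2}\exp(-(\omega_d - c(\Delta_d+\epsilon))^2/(2\omega_d))\,\ud\omega_d$, since the sign flip $u = -c(\Delta_d+\epsilon)$ turns $(\omega_d+u)^2$ into $(\omega_d - c(\Delta_d+\epsilon))^2$. Multiplying the two one-dimensional integrals — legitimate because they range over distinct integration variables and the integrands are nonnegative — yields the asserted product form.

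There is essentially no serious obstacle here; the only points needing a little care are (i) confirming~(\ref{eq:Eqaulity}) by the short case analysis above, which genuinely uses $\epsilon \geq 0$, and (ii) tracking the sign of $u$ in the second application so the Gaussian kernel comes out as $(\omega_d - c(\Delta_d+\epsilon))^2$ rather than $(\omega_d + c(\Delta_d+\epsilon))^2$. Everything else is a direct reuse of the fully-observed support vector regression augmentation of Polson et al., with the residual $\Delta_d = y_d - \etav^\top\zvbar_d$ now playing the role of the margin. With this representation in hand, the collapsed Gibbs sampler for the regression model then follows the same template as the classification case, introducing two sets of augmented variables $\lambdav$ and $\omegav$ whose conditionals are generalized inverse Gaussian.
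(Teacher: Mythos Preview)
Your proposal is correct and follows essentially the same approach as the paper: first factorize $\phi(y_d|\etav,\zv_d)$ via the additive identity~(\ref{eq:Eqaulity}), then apply Lemma~\ref{lemma:SoM} separately to each exponential factor. The paper's proof is terser, but the content is identical; your additional remarks on the case analysis for~(\ref{eq:Eqaulity}) and the sign tracking in the $\omega_d$-integral are helpful elaborations rather than a different route.
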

\begin{proof}
By the equality (\ref{eq:Eqaulity}), we have $\phi(y_d|\etav,\zv_d) = \exp\{-2c\max(0, \Delta_d - \epsilon)\} \exp\{-2c\max(0, -\Delta_d - \epsilon)\}.$
Each of the exponential terms can be formulated as a scale mixture of Gaussians due to Lemma 2.
\end{proof}
Then, the data augmented learning problem of the Gibbs MedLDA regression model is
\begin{eqnarray}\label{eq:JointInfRegression}
\min_{q(\etav, \lambdav, \omegav, \Thetav, \Zv, \Phiv) }  \mathcal{L}\left( q(\etav, \lambdav, \omegav, \Thetav, \Zv, \Phiv) \right) - \ep_q \left[ \log \phi(\yv, \lambdav, \omegav | \Zv, \etav) \right] \nonumber
\end{eqnarray}
where $\phi(\yv, \lambdav, \omegav | \Zv, \etav) = \prod_{d=1}^D \phi(y_d, \lambda_d, \omega_d | \Zv, \omegav)$ and
\setlength\arraycolsep{1pt}\begin{eqnarray}
\phi(y_d, \lambda_d, \omega_d | \Zv, \etav) &&= \frac{1}{ \sqrt{ 2\pi \lambda_d} } \exp\left( -\frac{ (\lambda_d + c(\Delta_d - \epsilon))^2 }{2 \lambda_d}  \right)
 \frac{1}{ \sqrt{ 2\pi \omega_d} } \exp\left( -\frac{ (\omega_d - c(\Delta_d + \epsilon))^2 }{2 \omega_d}  \right). \nonumber
\end{eqnarray}
Solving the augmented problem and integrating out $(\Thetav, \Phiv)$, we can get the collapsed posterior distribution
$$q(\etav, \lambdav, \omegav, \Zv ) \propto p_0(\etav) p(\Wv, \Zv|\alphav, \betav) \phi(\yv, \lambdav, \omegav|\Zv, \etav).$$

\subsubsection{A Collapsed Gibbs Sampling Algorithm}

Following similar derivations as in the classification model, the Gibbs sampling algorithm to infer the posterior has the following conditional distributions, with an outline in Algorithm~\ref{alg:GibbsAlgReg}.

{\bf For $\etav$}: Again, with the isotropic Gaussian prior $p_0(\etav) = \prod_{k=1}^K \mathcal{N}(\eta_k; 0, \nu^2)$, we have
\begin{eqnarray}\label{eq:GibbsEtaReg}
q(\etav | \Zv, \lambdav, \omegav)  &\propto& p_0(\etav) \prod_{d=1}^D \exp\left( - \frac{ (\lambda_d + c(\Delta_d - \epsilon))^2 }{2 \lambda_d}  \right) \exp\left( - \frac{ (\omega_d - c(\Delta_d + \epsilon))^2 }{2 \omega_d}  \right)  \nonumber \\
                         &\propto& \exp\left( -\sum_{k=1}^K \frac{\eta_k^2}{2 \nu^2} - \sum_{d=1}^D \left( \frac{ (\lambda_d + c(\Delta_d - \epsilon))^2  }{2 \lambda_d} + \frac{ (\omega_d - c(\Delta_d + \epsilon))^2 }{2 \omega_d} \right) \right) \nonumber \\
                        &=& \exp\left( -\frac{1}{2}\etav^\top \left( \frac{1}{\nu^2} I + c^2 \sum_{d=1}^D \rho_d \zvbar_d \zvbar_d^\top \right)\etav + c \left(\sum_{d=1}^D \psi_d \zvbar_d \right)^\top \etav \right) \nonumber \\
                         =&& \mathcal{N}(\etav; \muv, \Sigmav),
\end{eqnarray}
where the posterior covariance matrix and the posterior mean are $$\Sigmav = \left(\frac{1}{\nu^2}I + c^2\sum_{d=1}^D \rho_d \zvbar_d \zvbar_d^\top \right)^{-1},~~\muv = c \Sigmav \left( \sum_{d=1}^D \psi_d \zvbar_d \right),$$ and $\rho_d = \frac{1}{\lambda_d} + \frac{1}{\omega_d}$ and $\psi_d = \frac{y_d - \epsilon}{\lambda_d} + \frac{y_d + \epsilon}{\omega_d}$ are two parameters. We can easily draw a sample from a $K$-dimensional multivariate Gaussian distribution. The inverse can be robustly done using Cholesky decomposition.

\begin{algorithm}[t]
\caption{Collapsed Gibbs Sampling Algorithm for GibbsMedLDA Regression Models}\label{alg:GibbsAlgReg}
\begin{algorithmic}[1]
   \STATE {\bfseries Initialization:} set $\lambdav = 1$ and randomly draw $z_{dk}$ from a uniform distribution.
   \FOR{$m=1$ {\bfseries to} $M$}
   \STATE draw the classifier from the normal distribution~(\ref{eq:GibbsEtaReg})
    \FOR{$d=1$ {\bfseries to} $D$}
        \FOR{each word $n$ in document $d$}
             \STATE draw a topic from the multinomial distribution~(\ref{eqn:transitionProbReg})
        \ENDFOR
       \STATE draw $\lambda_d^{-1}$ (and thus $\lambda_d$) from the inverse Gaussian distribution~(\ref{eq:GibbsLambdaReg}).
       \STATE draw $\omega_d^{-1}$ (and thus $\omega_d$) from the inverse Gaussian distribution~(\ref{eq:GibbsOmegaReg}).
   \ENDFOR
   \ENDFOR
\end{algorithmic}
\end{algorithm}
{\bf For $\Zv$}: 
We can derive the conditional distribution of one variable $z_{dn}$ given others $\Zv_{\neg}$ as:
\setlength\arraycolsep{1pt} \begin{eqnarray}\label{eqn:transitionProbReg}
q(z_{dn}^k = 1 | \Zv_{\neg}, \etav, \lambdav, \omegav, w_{dn}=t) && \propto \frac{ (C_{k,\neg n}^{t}+\beta_t) (C_{d,\neg n}^{k}+\alpha_k) }{\sum_{t=1}^V C_{k,\neg n}^t + \sum_{t=1}^V \beta_t} \exp\Big( c\gamma \psi_d \eta_k  \nonumber \\
&& ~~  - c^2 ( \frac{\gamma^2 \rho_d \eta_k^2}{2}  + \gamma(1-\gamma) \rho_d \eta_k \Upsilon_{dn}^k) \Big),
\end{eqnarray}
where $\gamma = \frac{1}{N_d}$; and $\Upsilon_{dn}^k = \frac{1}{N_d-1} \sum_{k^\prime = 1}^K \eta_{k^\prime} C_{d, \neg n}^{k^\prime}$ is the discriminant function value without word $n$. The first term is from the LDA model for observed word counts. The second term is from the supervised signal $\yv$.

{\bf For $\lambdav$ and $\omegav$}: Finally,
we can derive that $\lambda_d^{-1}$ and $\omega_d^{-1}$ follow the inverse Gaussian distributions:
\begin{eqnarray}
q(\lambda_d^{-1} | \Zv, \etav, \omegav) &=& \IG \left( \lambda_d^{-1}; \frac{1}{c|\Delta_d - \epsilon|}, 1 \right), \label{eq:GibbsLambdaReg} \\
q(\omega_d^{-1} | \Zv, \etav, \lambdav) &=& \IG \left( \omega_d^{-1}; \frac{1}{c |\Delta_d + \epsilon|}, 1 \right). \label{eq:GibbsOmegaReg}
\end{eqnarray}
The per-iteration time complexity of this algorithm is similar to that of the binary Gibbs MedLDA model, i.e., linear to the number of documents and number of topics if $K$ is not too large.


\subsection{Multi-task Gibbs MedLDA}\label{section:GibbsMedLDA-MT}

The second extension is a multi-task Gibbs MedLDA. Multi-task learning is a scenario where multiple potentially related tasks are learned jointly with the hope that their performance can be boosted by sharing some statistic strength among these tasks, and it has attracted a lot of research attention. In particular, learning a common latent representation shared by all the related tasks has proven to be an effective way to capture task relationships~\citep{AndoTong:05,Argyriou:nips07,Zhu:nips11}. Here, we take the similar approach to learning multiple predictive models which share the common latent topic representations. As we shall see in Section~\ref{sec:multi-task-multi-class}, one natural application of our approach is to do multi-label classification~\citep{Tsoumakas:10}, where each document can belong to multiple categories, by defining each task as a binary classifier to determine whether a data point belongs to a particular category; and it can also be applied to multi-class classification, where each document belongs to only one of the many categories, by defining a single output prediction rule (See Section~\ref{sec:multi-task-multi-class} for details).

\subsubsection{The Model with Data Augmentation}

We consider $L$ binary classification tasks and each task $i$ is associated with a classifier with weights $\etav_i$. We assume that all the tasks work on the same set of input data $\Wv = \{ \wv_d \}_{d=1}^D$, but each data $d$ has different binary labels $\{ y_d^i \}_{i=1}^L$ in different tasks. A multi-task Gibbs MedLDA model has two components --- an LDA model to describe input words (the same as in Gibbs MedLDA); and multiple Gibbs classifiers sharing the same topic representations. When we have the classifier weights $\etav$ and the topic assignments $\Zv$, drawn from a posterior distribution $q(\etav, \Zv)$, we follow the same principle as in Gibbs MedLDA and define the latent Gibbs rule for each task as
\begin{eqnarray}\label{eq:GibbsRule-MT}
\forall i=1, \dots L:~~\hat{y}^i(\etav_i, \zv) = \sign~ F(\etav_i, \zv; \wv) = \sign( \etav_i^\top \zvbar ).
\end{eqnarray}
Let $\zeta_d^i = \ell - y_d^i \etav_i^\top \zvbar_d$. The hinge loss of the stochastic classifier $i$ is $$\risk_i(\etav_i, \Zv) = \sum_{d=1}^D \max(0, \zeta_d^i)$$ and the expected hinge loss is
\setlength\arraycolsep{1pt} \begin{eqnarray}
\risk_i^\prime(q) =  \ep_q[ \risk_i(\etav_i, \Zv) ] = \sum_{d=1}^D \ep_q\left[ \max(0, \zeta_d^i) \right]. \nonumber
\end{eqnarray}
For each task $i$, we can follow the argument as in Gibbs MedLDA to show that the expected loss $\risk_i^\prime(q)$ is an upper bound of the expected training error $\sum_{d=1}^D \ep_q[\indicator(y_d^i \neq \hat{y}_d^i(\etav_i, \zv_d) )]$ of the Gibbs classifier (\ref{eq:GibbsRule-MT}). Thus, it is a good surrogate loss for learning a posterior distribution which could lead to a low expected training error.

Then, following a similar procedure of defining the binary GibbsMedLDA classifier, we define the multi-task GibbsMedLDA model as solving the following RegBayes problem:
\begin{eqnarray}\label{problem:MT-GibbsMedLDA_Unconstrained}
\min_{q(\etav, \Thetav, \Zv, \Phiv) }  \mathcal{L}\left( q(\etav, \Thetav, \Zv, \Phiv) \right) + 2 c \risk_{MT}^\prime\left( q(\etav, \Thetav, \Zv, \Phiv) \right),
\end{eqnarray}
where the multi-task expected hinge loss is defined as a summation of the expected hinge loss of all the tasks:
\begin{eqnarray}
\risk_{MT}^\prime\left( q(\etav, \Thetav, \Zv, \Phiv) \right) = \sum_{i=1}^L \risk_i^\prime\left( q(\etav, \Thetav, \Zv, \Phiv) \right).
\end{eqnarray}

Due to the separability of the multi-task expected hinge loss, we can apply Lemma~\ref{lemma:SoM} to reformulate each task-specific expected hinge loss $\risk_i^\prime$ as a scale mixture by introducing a set of augmented variables $\{ \lambda_d^i \}_{d=1}^D$. More specifically, let $\phi_i(y_d^i | \zv_d, \etav) = \exp\{ -2 c \max(0, \zeta_d^i) \}$ be the unnormalized likelihood of the response variable for document $d$ in task $i$. Then, we have
\setlength\arraycolsep{1pt} \begin{eqnarray}
&& \phi_i(y_d^i | \zv_d, \etav) = \int_0^\infty \frac{1}{ \sqrt{ 2\pi \lambda_d^i} } \exp\left( -\frac{ (\lambda_d^i + c \zeta_d^i)^2 }{2 \lambda_d^i}  \right) \ud \lambda_d^i. \nonumber
\end{eqnarray}

\subsubsection{A Collapsed Gibbs Sampling Algorithm}

Similar to the binary Gibbs MedLDA classification model, we can derive the collapsed Gibbs sampling algorithm, as outlined in Algorithm~\ref{alg:GibbsAlg-MT}. Specifically, let $$\phi_i(\yv^i, \lambdav^i | \Zv, \etav) = \prod_{d=1}^D \frac{1}{ \sqrt{ 2\pi \lambda_d^i} } \exp\left( -\frac{ (\lambda_d^i + c \zeta_d^i)^2 }{2 \lambda_d^i}  \right) $$ be the joint unnormalized likelihood of the class labels $\yv^i = \{ y_d^i \}_{d=1}^D$ and the augmentation variables $\lambdav^i = \{ \lambda_d^i \}_{d=1}^D$. Then, for the multi-task Gibbs MedLDA, we can integrate out the Dirichlet variables ($\Thetav$, $\Phiv$) and get the collapsed posterior distribution
\setlength\arraycolsep{1pt} \begin{eqnarray}
q(\etav, \lambdav, \Zv)  && \propto  p_0(\etav) p(\Wv, \Zv|\alphav, \betav) \prod_{i=1}^L \phi_i(\yv^i, \lambdav^i | \Zv, \etav)  \nonumber \\
             && = p_0(\etav) \left[ \prod_{d=1}^{D} \frac{\delta(\mathbf{C}_d + \alphav)}{\delta(\alphav)} \right] \prod_{k=1}^{K}\frac{\delta(\mathbf{C}_k + \betav)}{\delta(\betav)} \prod_{i=1}^L \prod_{d=1}^{D} \frac{1}{ \sqrt{ 2\pi \lambda_d^i} } \exp\left( - \frac{ (\lambda_d^i + c \zeta_d^i )^2 }{2 \lambda_d^i}  \right). \nonumber
\end{eqnarray}
Then, we can derive the conditional distributions used in collapsed Gibbs sampling as follows.

\begin{algorithm}[t]
\caption{Collapsed Gibbs Sampling Algorithm for Multi-task GibbsMedLDA}\label{alg:GibbsAlg-MT}
\begin{algorithmic}[1]
   \STATE {\bfseries Initialization:} set $\lambdav = 1$ and randomly draw $z_{dk}$ from a uniform distribution.
   \FOR{$m=1$ {\bfseries to} $M$}
        \FOR{$i=1$ {\bfseries to} $L$}
            \STATE draw the classifier $\etav_i$ from the normal distribution~(\ref{eq:GibbsEta-MT})
        \ENDFOR
        \FOR{$d=1$ {\bfseries to} $D$}
            \FOR{each word $n$ in document $d$}
                 \STATE draw a topic from the multinomial distribution~(\ref{eqn:transitionProb-MT})
            \ENDFOR
            \FOR{$i=1$ {\bfseries to} $L$}
                \STATE draw $(\lambda_d^i)^{-1}$ (and thus $\lambda_d^i$) from the inverse Gaussian distribution~(\ref{eq:GibbsLambda-MT}).
            \ENDFOR
        \ENDFOR
   \ENDFOR
\end{algorithmic}
\end{algorithm}

{\bf For $\etav$}: We also assume its prior is an isotropic Gaussian $p_0(\etav) = \prod_{i=1}^L \prod_{k=1}^K \mathcal{N}(\eta_{ik}; 0, \nu^2)$. Then, we have $q(\etav | \Zv, \lambdav) = \prod_{i=1}^L q(\etav_i | \Zv, \lambdav)$, where
\begin{eqnarray}\label{eq:GibbsEta-MT}
q(\etav_i | \Zv, \lambdav ) && \propto p_0(\etav_i) \prod_{d=1}^D \exp\left( - \frac{ (\lambda_d^i + c \zeta_d^i )^2 }{2 \lambda_d^i}  \right)
                         = \mathcal{N}(\etav_i; \muv_i, \Sigmav_i),
\end{eqnarray}
where the posterior covariance matrix and posterior mean are $$\Sigmav_i = \left( \frac{1}{\nu^2}I + c^2 \sum_{d=1}^D \frac{\zvbar_d \zvbar_d^\top}{\lambda_d^i} \right)^{-1},~\muv_i = \Sigmav_i \left( c \sum_{d=1}^D y_d^i \frac{\lambda_d^i + c\ell}{\lambda_d^i}\zvbar_d \right).$$
Similarly, the inverse can be robustly done using Cholesky decomposition, an $O(K^3)$ procedure. Since $K$ is normally not large, the inversion can be done efficiently. 

{\bf For $\Zv$}: The conditional distribution of $\Zv$ is
\begin{eqnarray}
q(\Zv | \etav, \lambdav ) && \propto \prod_{d=1}^{D} \frac{\delta(\mathbf{C}_d + \alphav)}{\delta(\alphav)} \left[ \prod_{i=1}^L \exp\left( - \frac{ (\lambda_d^i + c \zeta_d^i)^2 }{2 \lambda_d^i}  \right) \right]  \prod_{k=1}^{K}\frac{\delta(\mathbf{C}_k + \betav)}{\delta(\betav)}. \nonumber
\end{eqnarray}
By canceling common factors, we can derive the conditional distribution of one variable $z_{dn}$ given others $\Zv_{\neg}$ as:
\setlength\arraycolsep{1pt} \begin{eqnarray}\label{eqn:transitionProb-MT}
 q(z_{dn}^k = 1 | \Zv_{\neg}, \etav, \lambdav, w_{dn}=t ) && \propto \frac{ (C_{k,\neg n}^{t}+\beta_t) (C_{d,\neg n}^{k}+\alpha_k) }{\sum_{t=1}^V C_{k,\neg n}^t + \sum_{t=1}^V \beta_t} \prod_{i=1}^L \exp\Big( \frac{ \gamma y_d^i (c\ell + \lambda_d^i)\eta_{ik} }{\lambda_d^i} \nonumber \\
&& ~~  - c^2 \frac{\gamma^2 \eta_{ik}^2 + 2 \gamma(1-\gamma)\eta_{ik} \Lambda_{dn}^i }{2 \lambda_d^i} \Big),
\end{eqnarray}
where $\Lambda_{dn}^i = \frac{1}{N_d-1} \sum_{k^\prime = 1}^K \eta_{i k^\prime} C_{d, \neg n}^{k^\prime}$ is the discriminant function value without word $n$. We can see that the first term is from the LDA model for observed word counts and the second term is from the supervised signal $\{ y_d^i \}$ from all the multiple tasks.

{\bf For $\lambdav$}: Finally, the conditional distribution of the augmented variables $\lambdav$ is fully factorized, $q(\lambdav | \Zv, \etav) = \prod_{i=1}^L \prod_{d=1}^D q(\lambda_d^i | \Zv, \etav)$, and each variable follows a generalized inverse Gaussian distribution
\begin{eqnarray}
 q(\lambda_d^i | \Zv, \etav) &&\propto \frac{1}{\sqrt{ 2 \pi \lambda_d^i }} \exp\left( - \frac{(\lambda_d^i + c \zeta_d^i )^2 }{2\lambda_d^i}  \right)
                          =  \GIG \left(\lambda_d^i; \frac{1}{2}, 1, c^2 (\zeta_d^i)^2 \right). \nonumber
\end{eqnarray}
Therefore, we can derive that $(\lambda_d^i)^{-1}$ follows an inverse Gaussian distribution
\begin{eqnarray}\label{eq:GibbsLambda-MT}
p( (\lambda_d^i)^{-1} | \Zv, \etav ) = \IG \left( (\lambda_d^i)^{-1}; \frac{1}{c |\zeta_d^i|}, 1 \right),
\end{eqnarray}
from which a sample can be efficiently drawn with a constant time complexity.

The per-iteration time complexity of the algorithm is $\mathcal{O}(LK^3 + N_{total} K + D L)$. For common large-scale applications where $K$ and $L$ are not too large while $D$ (thus $N_{total})$ is very large, the step of sampling latent topic assignments takes most of the time. If $L$ is very large, e.g., in the PASCAL large-scale text/image categorization challenge tasks which have tens of thousands of categories\footnote{http://lshtc.iit.demokritos.gr/;~~http://www.image-net.org/challenges/LSVRC/2012/index}, the step of drawing global classifier weights may dominate. A nice property of the algorithm is that we can easily parallelize this step since there is no coupling among these classifiers once the topic assignments are given. A preliminary investigation of the parallel algorithm is presented in~\citep{Zhu:ParallelMedLDA13}.


\section{Experiments} \label{section:experiments}
We present empirical results to demonstrate the efficiency and prediction performance of Gibbs MedLDA (denoted by GibbsMedLDA) on the 20Newsgroups data set for classification, a hotel review data set for regression, and a Wikipedia data set with more than 1 million documents for multi-label classification. We also analyze its sensitivity to key parameters and examine the learned latent topic representations qualitatively. The 20Newsgroups data set contains about 20K postings within 20 groups. We follow the same setting as in~\citep{Zhu:jmlr12} and remove a standard list of stop words for both binary and multi-class classification. For all the experiments, we use the standard normal prior $p_0(\etav)$ (i.e., $\nu^2 = 1$) and the symmetric Dirichlet priors $\alphav  = \frac{\alpha}{K} {\boldsymbol 1},~\betav = 0.01 \times {\boldsymbol 1}$, where ${\boldsymbol 1}$ is a vector with all entries being $1$. For each setting, we report the average performance and standard deviation with five randomly initialized runs. All the experiments, except the those on the large Wikipedia data set, are done on a standard desktop computer.

\subsection{Binary classification}\label{sec:binary-classification}
The binary classification task is to distinguish postings of the newsgroup \emph{alt.atheism} and postings of the newsgroup \emph{talk.religion.misc}. The training set contains 856 documents, and the test set contains 569 documents. We compare Gibbs MedLDA with the MedLDA model that uses variational methods (denoted by vMedLDA)~\citep{Zhu:jmlr12} and the MedLDA that uses collapsed Gibbs sampling algorithms (denoted by gMedLDA)~\citep{Zhu:nips12}. We also include unsupervised LDA using collapsed Gibbs sampling as a baseline, denoted by GibbsLDA. For GibbsLDA, we learn a binary linear SVM on its topic representations using SVMLight~\citep{joachims1999making}. The results of other supervised topic models, such as sLDA and DiscLDA~\citep{Simon:nips09}, were reported in~\citep{Zhu:jmlr12}. For Gibbs MedLDA, we set $\alpha=1$, $\ell=164$ and $M=10$. As we shall see in Section~\ref{sec:sensitivity-analysis}, Gibbs MedLDA is insensitive to $\alpha$, $\ell$ and $M$ in a wide range. Although tuning $c$ (e.g., via cross-validation) can produce slightly better results, we fix $c=1$ for simplicity.

\begin{figure}
\centering
\includegraphics[height=2.6in]{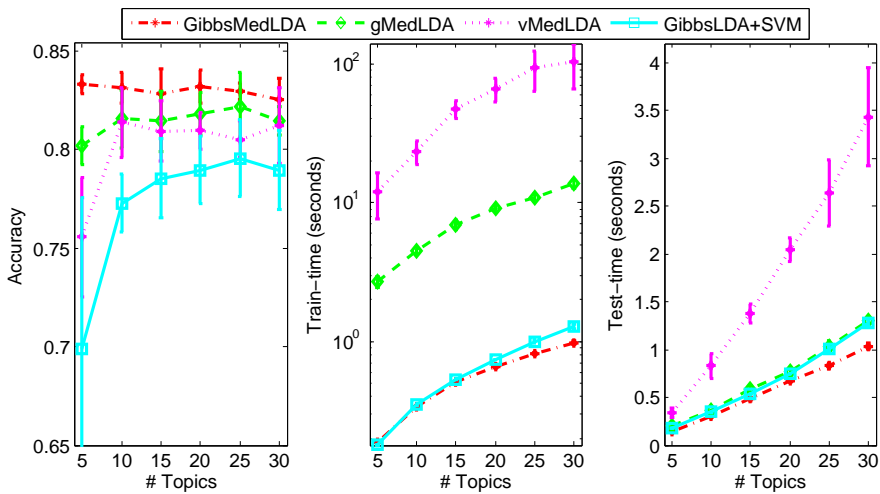}
\caption{Classification accuracy, training time (in log-scale) and testing time (in linear scale) on the 20Newsgroups binary classification data set.}\label{fig:20ngBinary}
\end{figure}

Figure~\ref{fig:20ngBinary} shows the accuracy, training time, and testing time of different methods with various numbers of topics. We can see that by minimizing an expected hinge-loss and making no restricting assumptions on the posterior distributions, GibbsMedLDA achieves higher accuracy than other max-margin topic models, which make some restricting mean-field assumptions. Similarly, as gMedLDA makes a weaker mean-field assumption, it achieves slightly higher accuracy than vMedLDA, which assumes that the posterior distribution is fully factorized. For the training time, GibbsMedLDA is about two orders of magnitudes faster than vMedLDA, and about one order of magnitude faster than gMedLDA. This is partly because both vMedLDA and gMedLDA need to solve multiple SVM problems. For the testing time, GibbsMedLDA is comparable with gMedLDA and the unsupervised GibbsLDA, but faster than the variational algorithm used by vMedLDA, especially when the number of topics $K$ is large. There are several possible reasons for the faster testing than vMedLDA, though they use the same stopping criterion. For example, vMedLDA performs mean-field inference in a full space which leads to a low convergence speed, while GibbsMedLDA carries out Gibbs sampling in a collapsed space. Also, the sparsity of the sampled topics in GibbsMedLDA could save time, while vMedLDA needs to carry out computation for each dimension of the variational parameters.

\subsection{Regression}
We use the hotel review data set~\citep{Zhu:icml10} built by randomly crawling hotel reviews from the TripAdvisor website\footnote{http://www.tripadvisor.com/} where each review is associated with a global rating score ranging from 1 to 5. In these experiments, we focus on predicting the global rating scores for reviews using the bag-of-words features only, with a vocabulary of 12,000 terms, though the other manually extracted features (e.g.,, part-of-speech tags) are provided. All the reviews have character lengths between 1,500 and 6,000. The data set consists of 5,000 reviews, with 1,000 reviews per rating. The data set is uniformly partitioned into training and testing sets. We compare the Gibbs MedLDA regression model with the MedLDA regression model that uses variational inference and supervised LDA (sLDA) which also uses variational inference. For Gibbs MedLDA and vMedLDA, the precision is set at $\epsilon = 1e^{-3}$ and $c$ is selected via 5 fold cross-validation during training. Again, we set the Dirichlet parameter $\alpha = 1$ and the number of burn-in $M=10$.

\begin{figure}
\centering
\includegraphics[height=2.6in]{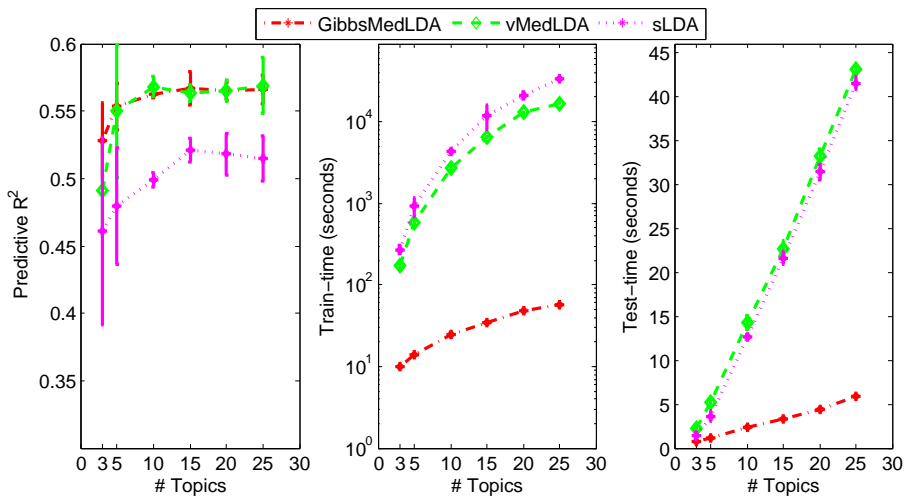}
\caption{Predictive R2, training time and testing time on the hotel review data set.}
\label{fig:hotelReview}
\end{figure}

Figure~\ref{fig:hotelReview} shows the predictive R$^2$~\citep{Blei:sLDA07} of different methods. We can see that GibbsMedLDA achieves comparable prediction performance with vMedLDA, which is better than sLDA. Note that vMedLDA uses a full likelihood model for both input words and response variables, while GibbsMedLDA uses a simpler likelihood model for words only\footnote{The MedLDA with a simple likelihood on words only doesn't perform well for regression.}. For training time, GibbsMedLDA is about two orders of magnitudes faster than vMedLDA (as well as sLDA), again due to the fact that GibbsMedLDA does not need to solve multiple SVM problems. For testing time, GibbsMedLDA is also much faster than vMedLDA and sLDA, especially when the number of topics is large, due to the same reasons as stated in Section~\ref{sec:binary-classification}.

\subsection{Multi-class classification}
We perform multi-class classification on 20Newsgroups with all 20 categories. The data set has a balanced distribution over the categories. The test set consists of 7,505 documents, in which the smallest category has 251 documents and the largest category has $399$ documents. The training set consists of 11,269 documents, in which the smallest and the largest categories contain 376 and 599 documents, respectively. We consider two approaches to doing multi-class classification --- one is to build multiple independent binary Gibbs MedLDA models, one for each category, and the other one is to build multiple dependent binary Gibbs MedLDA models under the framework of multi-task learning, as presented in Section~\ref{section:GibbsMedLDA-MT}.


\subsubsection{Multiple One-vs-All Classifiers}

Various methods exist to apply binary classifiers to do multi-class classification, including the popular ``one-vs-all" and ``one-vs-one" strategies. Here we choose the ``one-vs-all" strategy, which has shown effective~\citep{Rifkin:jmlr04}, to provide some preliminary analysis. Let $\hat{\etav}_i$ be the sampled classifier weights of the 20 ``one-vs-all" binary classifiers after the burn-in stage. For a test document $\wv$, we need to infer the latent topic assignments $\zv_i$ under each ``one-vs-all" binary classifier using a Gibbs sampler with the conditional distribution~(\ref{eq:GibbsTest}). Then, we predict the document as belonging to the single category which has the largest discriminant function value, i.e., $$\hat{y} = \argmax_{i = 1, \dots ,L} \big( \hat{\etav}_i^\top \zvbar_i \big),$$ where $L$ is the number of categories (i.e., 20 in this experiment). Again, since GibbsMedLDA is insensitive to $\alpha$ and $\ell$, we set $\alpha=1$ and $\ell=64$. We also fix $c=1$ for simplicity. The number of burn-in iterations is set as $M = 20$, which is sufficiently large as will be shown in Figure~\ref{fig:BurnIn-20ng}.

Figure~\ref{fig:20ng} shows the classification accuracy and training time, where GibbsMedLDA builds 20 binary Gibbs MedLDA classifiers. Note that for GibbsMedLDA the horizontal axis denotes the number of topics used by each single binary classifier. Since there is no coupling among these 20 binary classifiers, we can learn them in parallel, which we denote by pGibbsMedLDA.
\begin{figure}
\centering
{\hfill \subfigure[]{\includegraphics[height=2in]{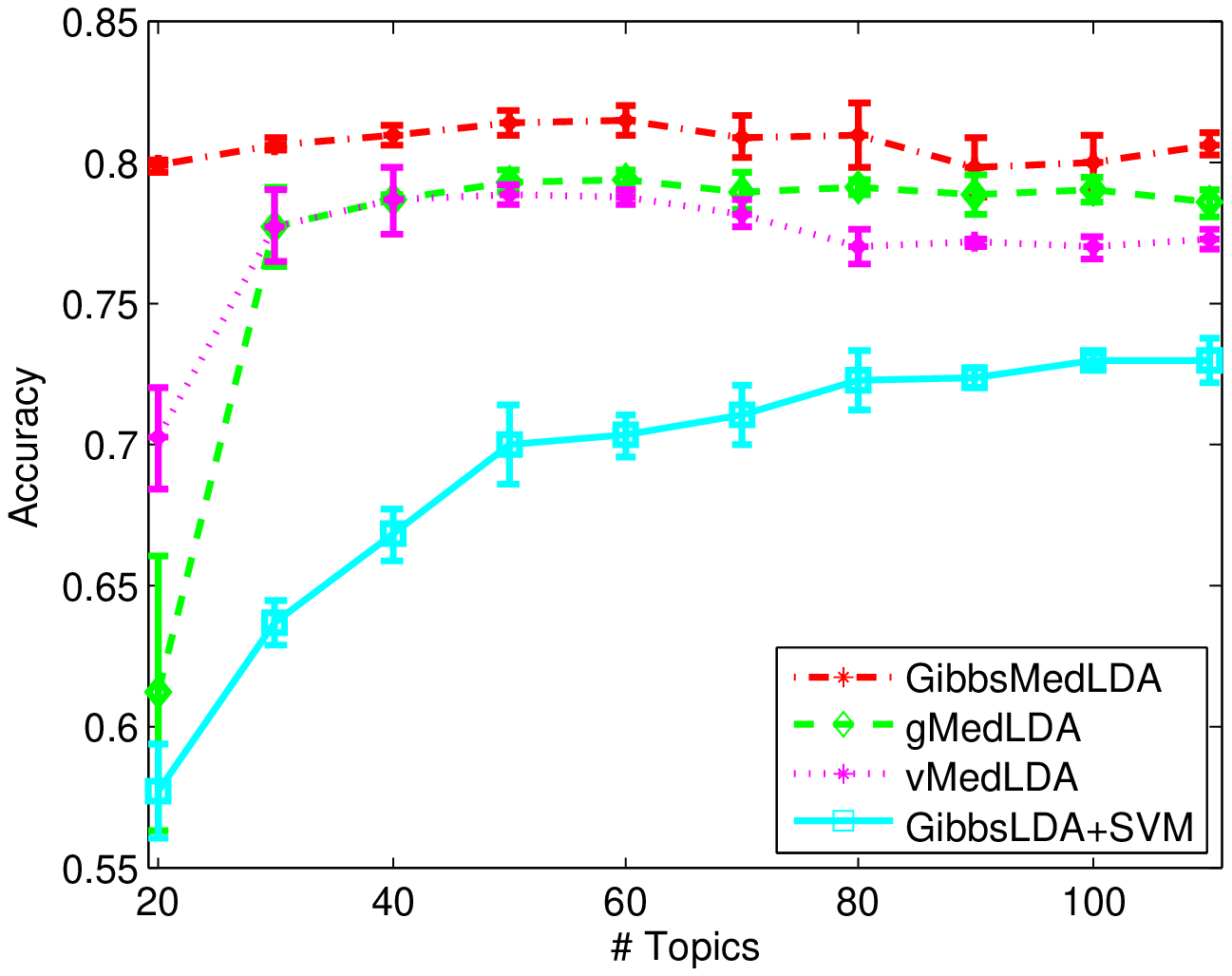}}
\hfill \subfigure[]{\includegraphics[height=2in]{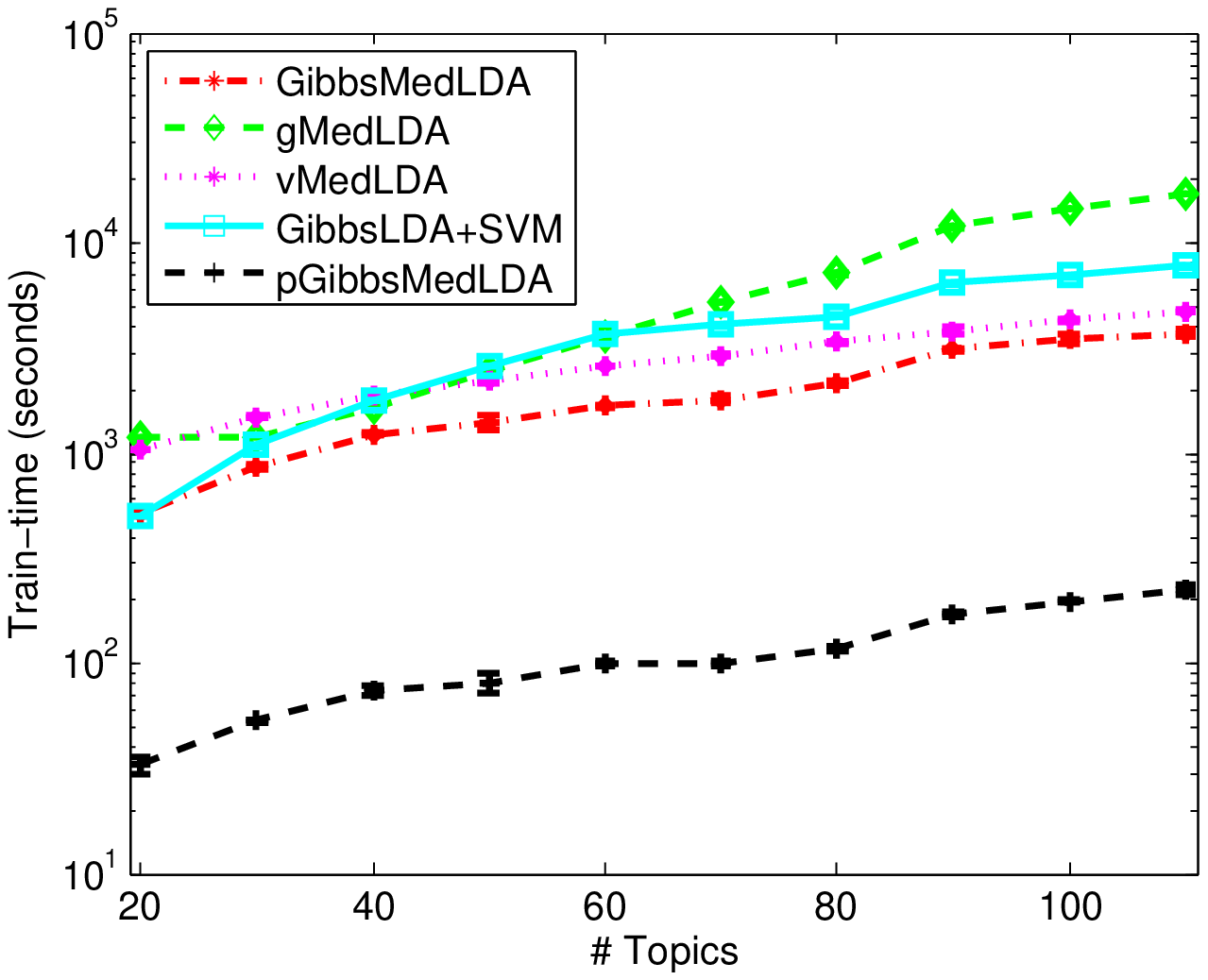}}\hfill}\vspace{-.4cm}
\caption{(a) classification accuracy and (b) training time of the one-vs-all Gibbs MedLDA classifiers for multi-class classification on the whole 20Newsgroups data set.}
\label{fig:20ng}\vspace{-.2cm}
\end{figure}
We can see that GibbsMedLDA clearly improves over other competitors on the classification accuracy, which may be due to the different strategies on building the multi-class classifiers\footnote{MedLDA learns multi-class SVM~\citep{Zhu:jmlr12}.}. However, given the performance gain on the binary classification task, we believe that the Gibbs sampling algorithm without any restricting factorization assumptions is another factor leading to the improved performance. For training time, GibbsMedLDA takes slightly less time than the variational MedLDA as well as gMedLDA. But if we train the 20 binary GibbsMedLDA classifiers in parallel, we can save a lot of training time. These results are promising since it is now not uncommon to have a desktop computer with multiple processors or a cluster with tens or hundreds of computing nodes.

\subsubsection{Multi-class Classification as a Multi-task Learning Problem}\label{sec:multi-task-multi-class}

The second approach to performing multi-class classification is to formulate it as a multiple task learning problem, with a single output prediction rule. Specifically, let the label space be $\mathcal{Y} = \{1, \dots, L\}$. We can define one binary classification task for each category $i$ and the task is to distinguish whether a data example belongs to the class $i$ (with binary label $+1$) or not (with binary label $-1$). All the binary tasks share the same topic representations. To apply the model as we have presented in Section~\ref{section:GibbsMedLDA-MT}, we need to determine the true binary label of each document in a task. Given the multi-class label $y_d$ of document $d$, this can be easily done by defining
\begin{eqnarray}
\forall i=1, \dots, L:~ y_d^i = \left\{ \begin{array}{cl}
+1 & \textrm{if}~ y_d = i \\
-1 & \textrm{otherwise}
\end{array} \right. . \nonumber
\end{eqnarray}
Then, we can learn a multi-task Gibbs MedLDA model using the data with transferred multiple labels. Let $\hat{\etav}_i$ be the sampled classifier weights of task $i$ after the burn-in stage. For a test document $\wv$, once we have inferred the latent topic assignments $\zv$ using a Gibbs sampler with the conditional distribution~(\ref{eq:GibbsTest}), we compute the discriminant function value $\hat{\etav}_i^\top \zvbar$ for each task $i$, and predict the document as belonging to the single category which has the largest discriminant function value, i.e., $$\hat{y} = \argmax_{i = 1, \dots ,L} \big( \hat{\etav}_i^\top \zvbar \big).$$
\begin{figure}
\centering
{\hfill \subfigure[]{\includegraphics[height=2in,width=1.9in]{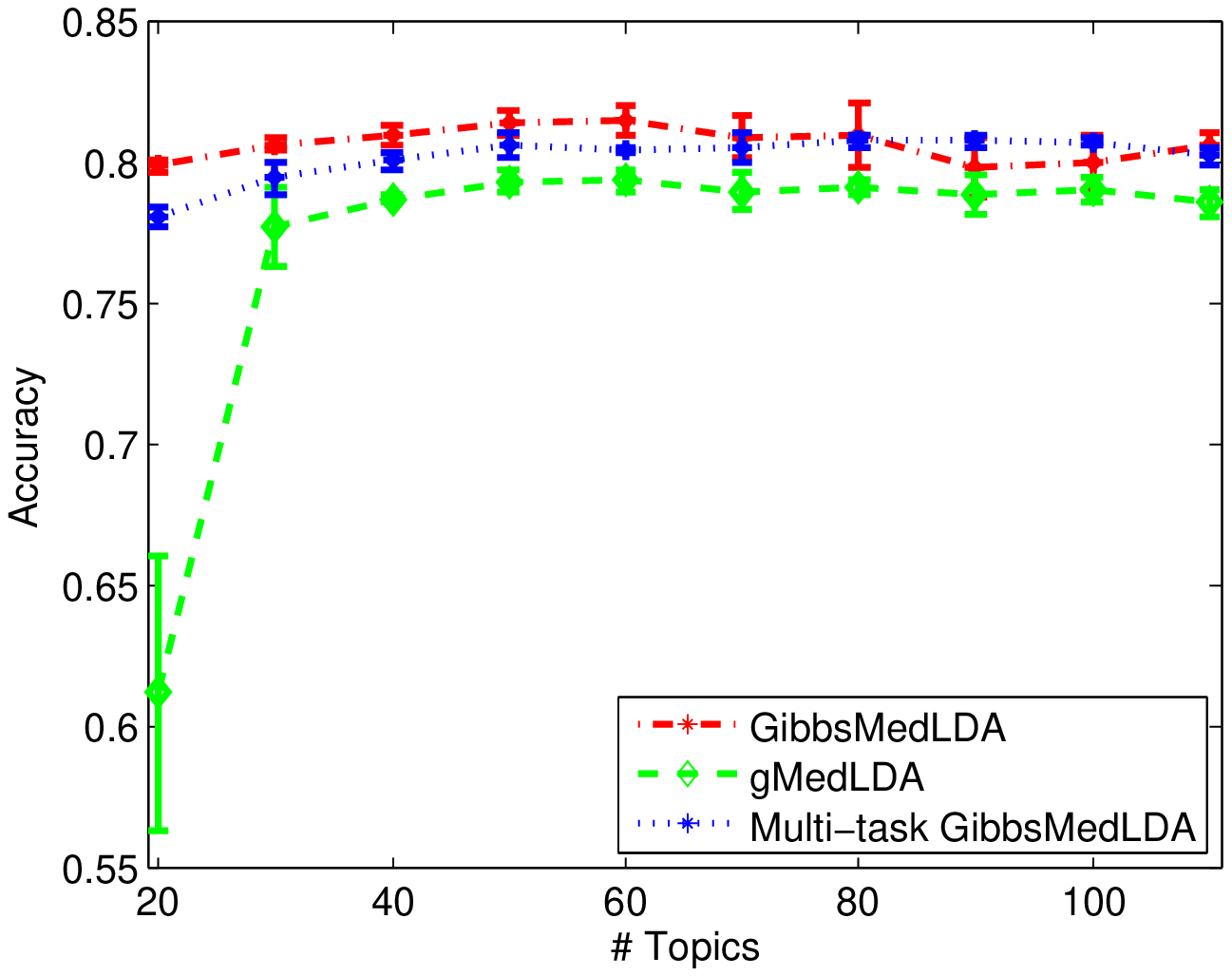}}
\hfill \subfigure[]{\includegraphics[height=2in,width=1.9in]{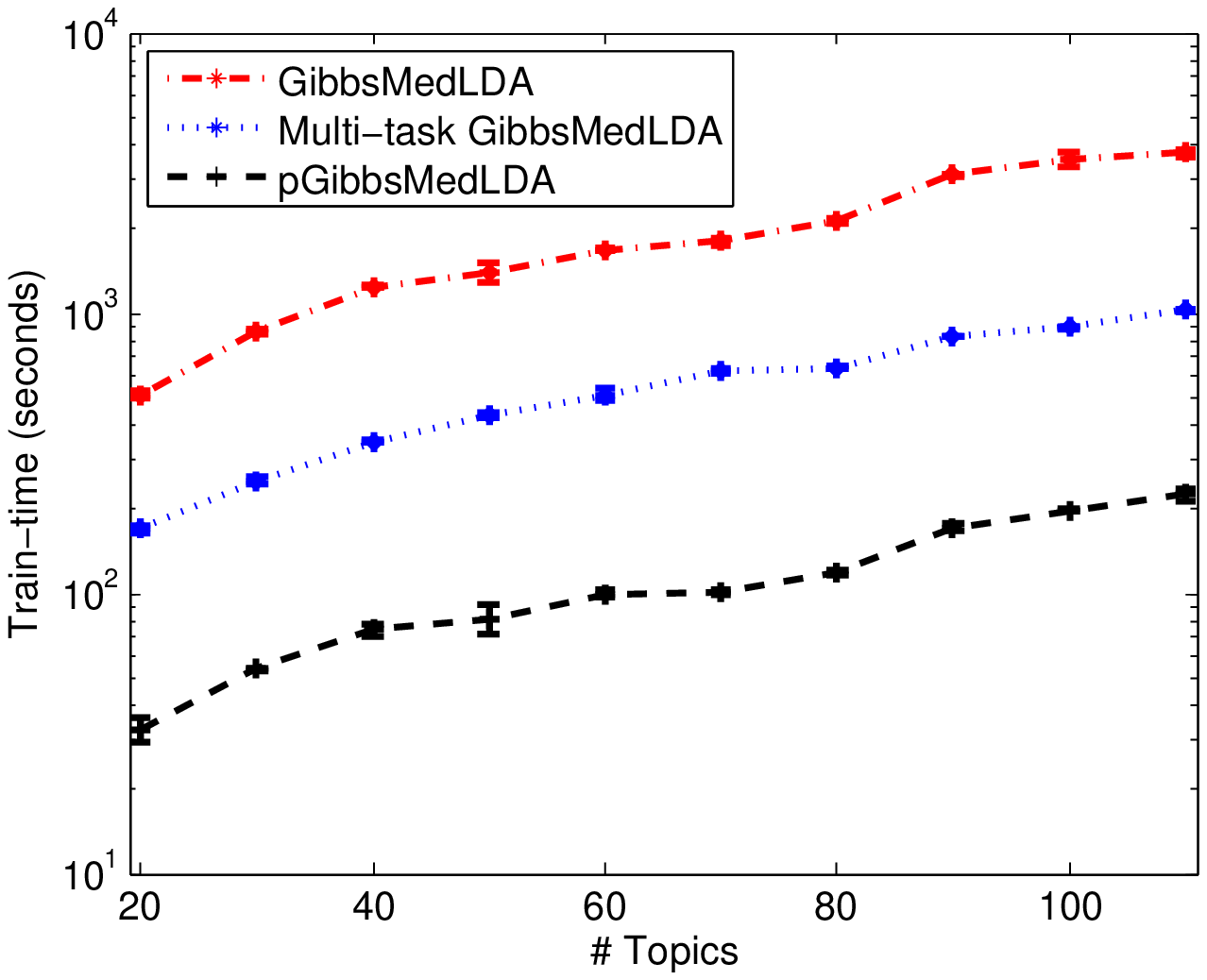}}
\hfill \subfigure[]{\includegraphics[height=2in,width=1.9in]{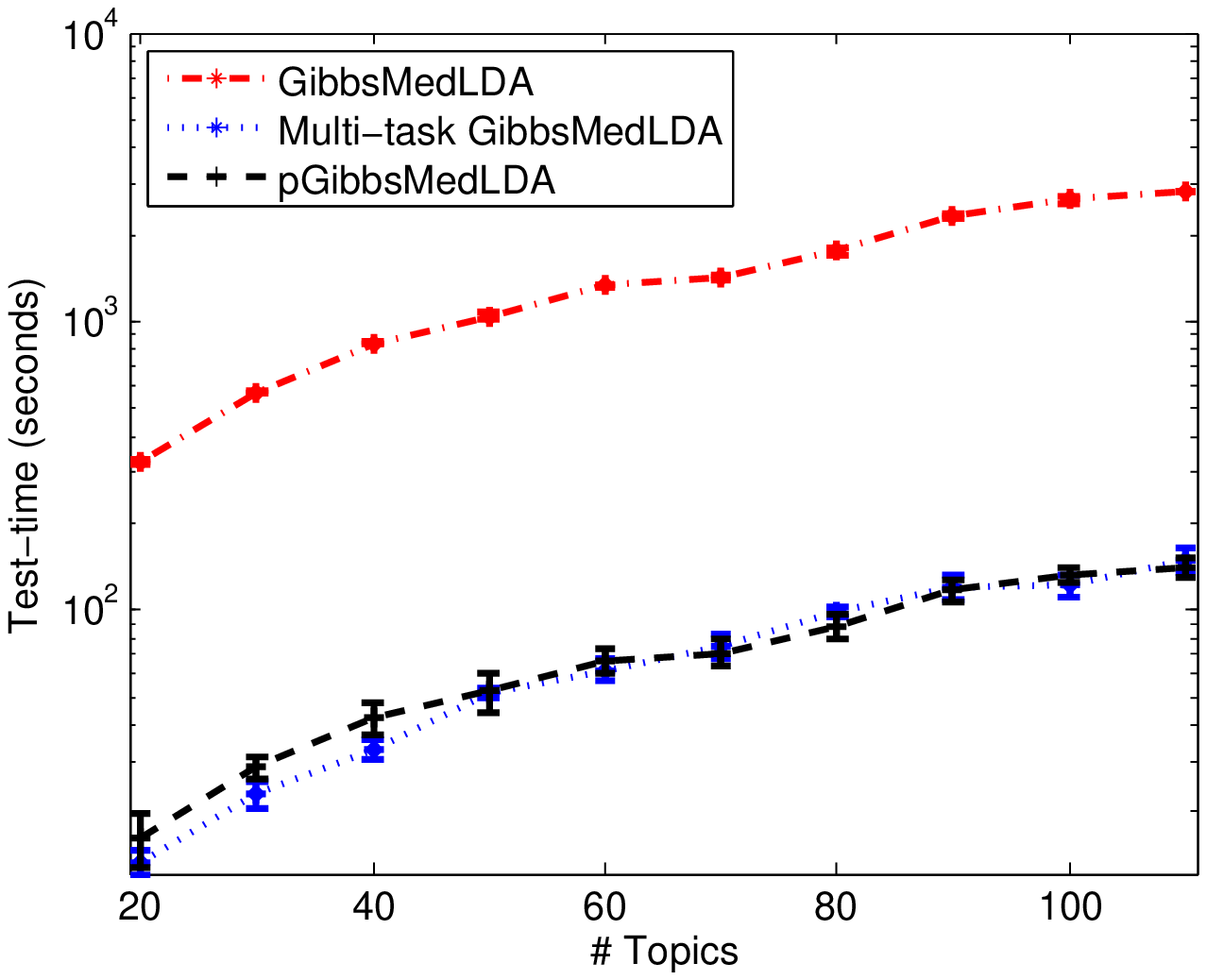}}\hfill}
\caption{(a) classification accuracy; (b) training time; and (c) testing time of the multi-task Gibbs MedLDA classifiers for multi-class classification on the whole 20Newsgroups data set.}
\label{fig:20ng-multi-task}
\end{figure}

Figure~\ref{fig:20ng-multi-task} shows the performance of the multi-task Gibbs MedLDA with comparison to the high-performance methods of the one-vs-all GibbsMedLDA and gMedLDA. Note again that for the one-vs-all GibbsMedLDA the horizontal axis denotes the number of topics used by each single binary classifier. We can see that although the multi-task GibbsMedLDA uses 20 times fewer topics than the one-vs-all GibbsMedLDA, their prediction accuracy scores are comparable when the multi-task GibbsMedLDA uses a reasonable number of topics (e.g., larger than 40). Both implementations of Gibbs MedLDA yield higher performance than gMedLDA. Looking at training time, when there is only a single processor core available, the multi-task GibbsMedLDA is about 3 times faster than the one-vs-all GibbsMedLDA. When there are multiple processor cores available, the naive parallel one-vs-all Gibbs MedLDA is faster. In this case, using 20 processor cores, the parallel one-vs-all GibbsMedLDA is about 7 times faster than the multi-task GibbsMedLDA. In some scenarios, the testing time is significant. We can see that using a single core, the multi-task GibbsMedLDA is about 20 times faster than the one-vs-all GibbsMedLDA. Again however, in the presence of multiple processor cores, in this case 20, the parallel one-vs-all GibbsMedLDA tests at least as fast, at the expense of using more processor resources. So, depending on the processor cores available, both the parallel one-vs-all GibbsMedLDA and the multi-task GibbsMedLDA can be excellent choices. Where high efficiency single-core processing is key, then the multi-task GibbsMedLDA is a great choice. When there are many processor cores available, then the parallel one-vs-all GibbsMedLDA might be an appropriate choice.


\subsection{Multi-label Classification}

\begin{figure}
\centering
\includegraphics[height=2.5in,width=5.3in]{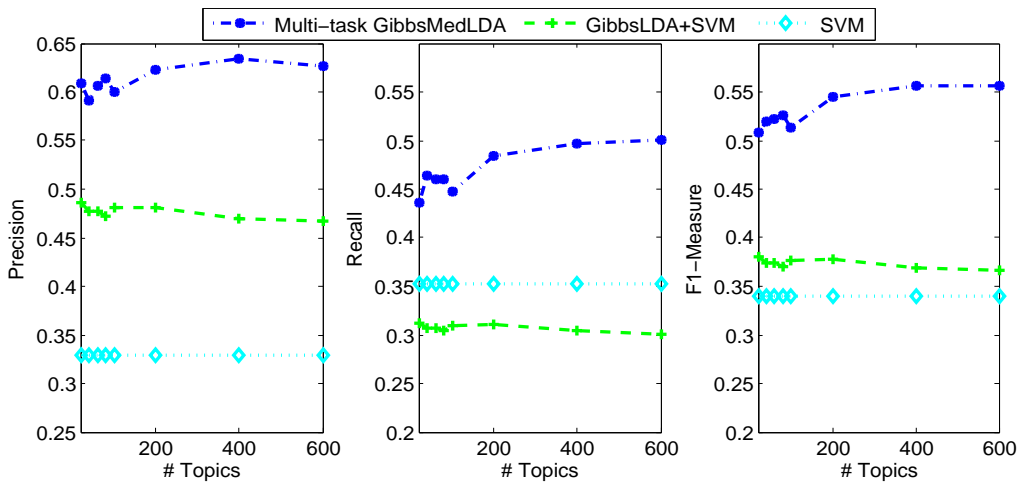}
\caption{Precision, recall and F1-measure of the multi-label classification using different models on the Wiki data set.}
\label{fig:wiki-multi-label}
\end{figure}

We also present some results on a multi-label classification task. We use the Wiki data set which is built from the large Wikipedia set used in the PASCAL LSHC challenge 2012, and where each document has multiple labels. The original data set\footnote{Available at: http://lshtc.iit.demokritos.gr/} is extremely imbalanced. We built our data set by selecting the 20 categories that have the largest numbers of documents and keeping all the documents that are labeled by at least one of these 20 categories. The training set consists of 1.1 millions of documents and the testing set consists of 5,000 documents. The vocabulary has 917,683 terms in total. To examine the effectiveness of Gibbs MedLDA, which performs topic discovery and classifier learning jointly, we compare it with a linear SVM classifier built on the raw bag-of-words features and a two-step approach denoted by GibbsLDA+SVM. The GibbsLDA+SVM method first uses LDA with collapsed Gibbs sampling to discover latent topic representations for all the documents and then builds 20 separate binary SVM classifiers using the training documents with their discovered topic representations. For multi-task Gibbs MedLDA, we use 40 burn-in steps, which is sufficiently large. The model is insensitive to other parameters, similar to the multi-class classification task.


Figure~\ref{fig:wiki-multi-label} shows the precision, recall and F1 measure (i.e., the harmonic mean of precision and recall) of various models running on a distributed cluster with 20 nodes (each node is equipped with two 6-core CPUs)\footnote{For GibbsLDA, we use the parallel implementation in Yahoo-LDA, which is publicly available at: https://github.com/shravanmn/Yahoo\_LDA. For Gibbs MedLDA, the parallel implementation of our Gibbs sampler is presented in~\citep{Zhu:ParallelMedLDA13}.}. We can see that the multi-task Gibbs MedLDA performs much better than other competitors. There are several reasons for the improvements. Since the vocabulary has about 1 million terms, the raw features are in a high-dimensional space and each document gives rise to a sparse feature vector (i.e., only a few elements are nonzero). Thus, learning SVM classifiers on the raw data leads not just to over-fitting but a wider failure to generalize. For example, two documents from the same category might contain non-intersecting sets of words, yet contain similar latent topics. Using LDA to discover latent topic representations can produce dense features. Building SVM classifiers using the latent topic features improves the overall F1 measure, by improving the ability to generalize, and reducing overfitting. But, due to its two-step procedure, the discovered topic representations may not be very predictive. By doing max-margin learning and topic discovery jointly, the multi-task GibbsMedLDA can discover more discriminative topic features, thus improving significantly over the two-step GibbsLDA+SVM algorithm.

\subsection{Sensitivity analysis}\label{sec:sensitivity-analysis}
We now provide a more careful analysis of the various Gibbs MedLDA models on their sensitivity to some key parameters in the classification tasks. Specifically, we will look at the effects of the number of burn-in steps, the Dirichlet prior $\alpha$, the loss penalty $\ell$, and the number of testing samples.

\subsubsection{Burn-in Steps}\label{sec:sensitivity-burn-in}

\begin{figure}[t]
\centering
\includegraphics[height=2.45in]{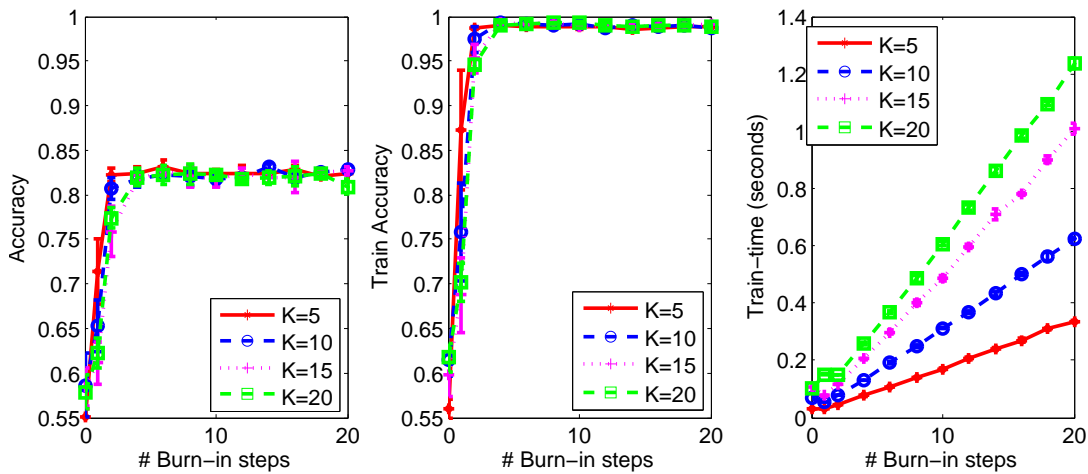}
\caption{(Left) testing accuracy, (Middle) training accuracy, and (Right) training time of GibbsMedLDA with different numbers of burn-in steps for binary classification.}
\label{fig:BurnIn}
\end{figure}

Figure~\ref{fig:BurnIn} shows the classification accuracy, training accuracy and training time of GibbsMedLDA with different numbers of burn-in samples in the binary classification task. When $M=0$, the model is essentially random, for which we draw a classifier with the randomly initialized topic assignments for training data. We can see that both the training accuracy and testing accuracy increase very quickly and converge to their stable values with 5 to 10 burn-in steps. As expected, the training time increases about linearly in general when using more burn-in steps. Moreover, the training time increases linearly as $K$ increases. In the previous experiments, we have chosen $M=10$, which is sufficiently large.

\begin{figure}[t]
\centering
\includegraphics[height=2.45in]{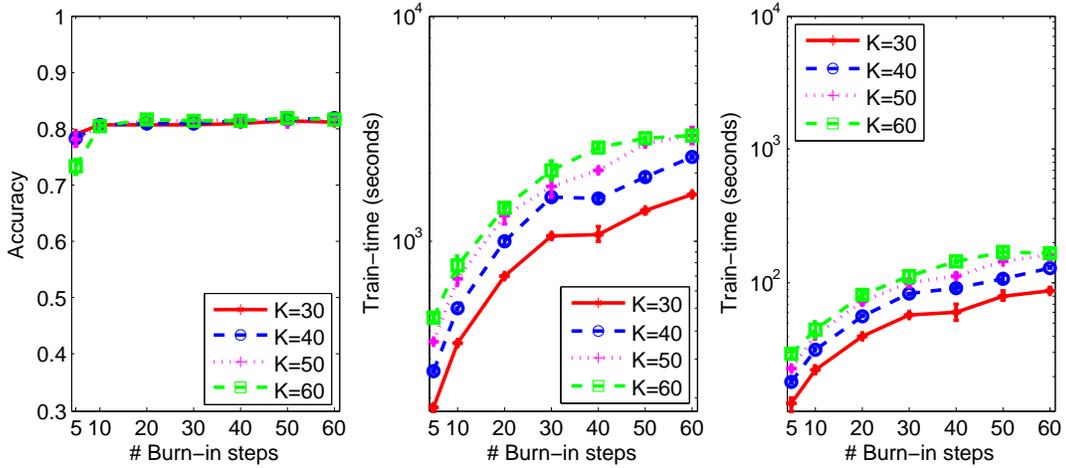}
\caption{(Left) classification accuracy of GibbsMedLDA, (Middle) training time of GibbsMedLDA and (Right) training time of the parallel pGibbsMedLDA with different numbers of burn-in steps for multi-class classification.}\vspace{-.2cm}
\label{fig:BurnIn-20ng}
\end{figure}

\begin{figure}[t]
\centering
\includegraphics[height=2.45in]{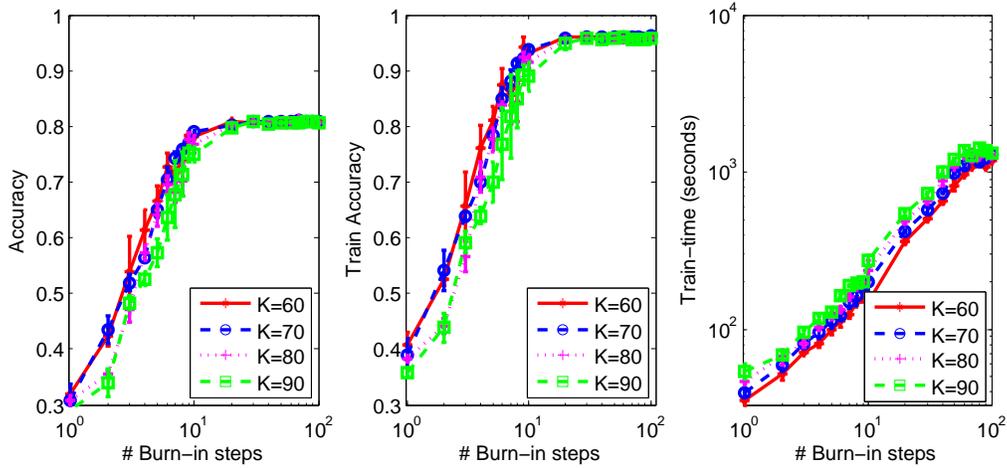}
\caption{(Left) test accuracy, (Middle) training accuracy, and (Right) training time of the multi-task GibbsMedLDA with different numbers of burn-in steps for multi-class classification.}
\label{fig:BurnIn-20ng-Multi-task}
\end{figure}

Figure~\ref{fig:BurnIn-20ng} shows the performance of GibbsMedLDA for multi-class classification with different numbers of burn-in steps when using the one-vs-all strategy. We show the total training time as well as the training time of the naive parallel implementation of pGibbsMedLDA. We can see that when the number of burn-in steps is larger than 20, the performance is quite stable, especially when $K$ is large. Again, the training time grows about linearly as the number of burn-in steps increases. Even if we use 40 or 60 steps of burn-in, the training time is still competitive, compared with the variational MedLDA, especially considering that GibbsMedLDA can be naively parallelized by learning different binary classifiers simultaneously.

Figure~\ref{fig:BurnIn-20ng-Multi-task} shows the testing classification accuracy, training accuracy and training time of the multi-task Gibbs MedLDA for multi-class classification with different numbers of burn-in steps. We can see that again both the training accuracy and testing accuracy increase fast and converge to their stable scores after about 30 burn-in steps. Also, the training time increases about linearly as the number of burn-in steps increases.

\subsubsection{ Dirichlet prior $\alphav$}

For topic models with a Dirichlet prior, the Dirichlet hyper-parameter can be automatically estimated, such as using the Newton-Raphson method~\citep{Blei:03}. Here, we analyze its effects on the performance by setting different values. Figure~\ref{fig:Sensitivity-Alpha} shows the classification performance of GibbsMedLDA on the binary task with different $\alpha$ values for the symmetric Dirichlet prior $\alphav = \frac{\alpha}{K} {\boldsymbol 1}$. For the three different topic numbers, we can see that the performance is quite stable in a wide range of $\alpha$ values, e.g., from $0.1$ to $10$. We can also see that it generally needs a larger $\alpha$ in order to get the best results when $K$ becomes larger (e.g., when $\alpha < 0.1$, using fewer topics results in slightly higher performance). This is mainly because a large $K$ tends to produce sparse topic representations and an appropriately large $\alpha$ is needed to smooth the representations, as the effective Dirichlet prior is $\alpha_k = \alpha / K$.

\begin{figure}
\centering
\includegraphics[height=2.1in]{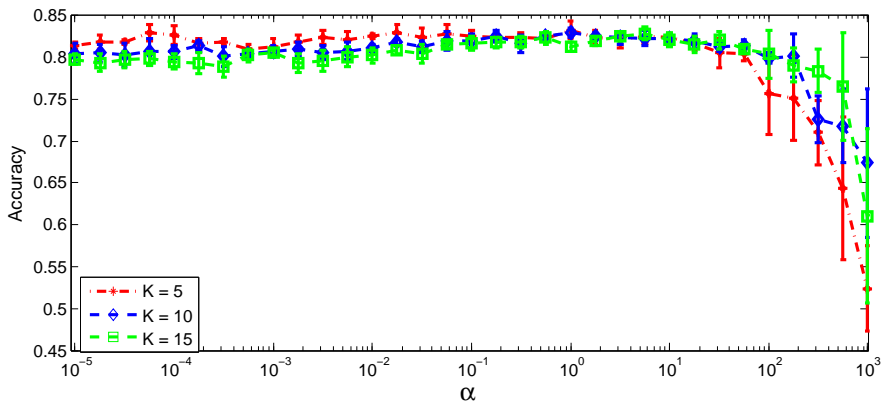}
\caption{Classification accuracy of GibbsMedLDA on the binary classification data set with different $\alpha$ values.}\label{fig:Sensitivity-Alpha}
\end{figure}

\begin{figure}[t]
\centering
\includegraphics[height=2.2in]{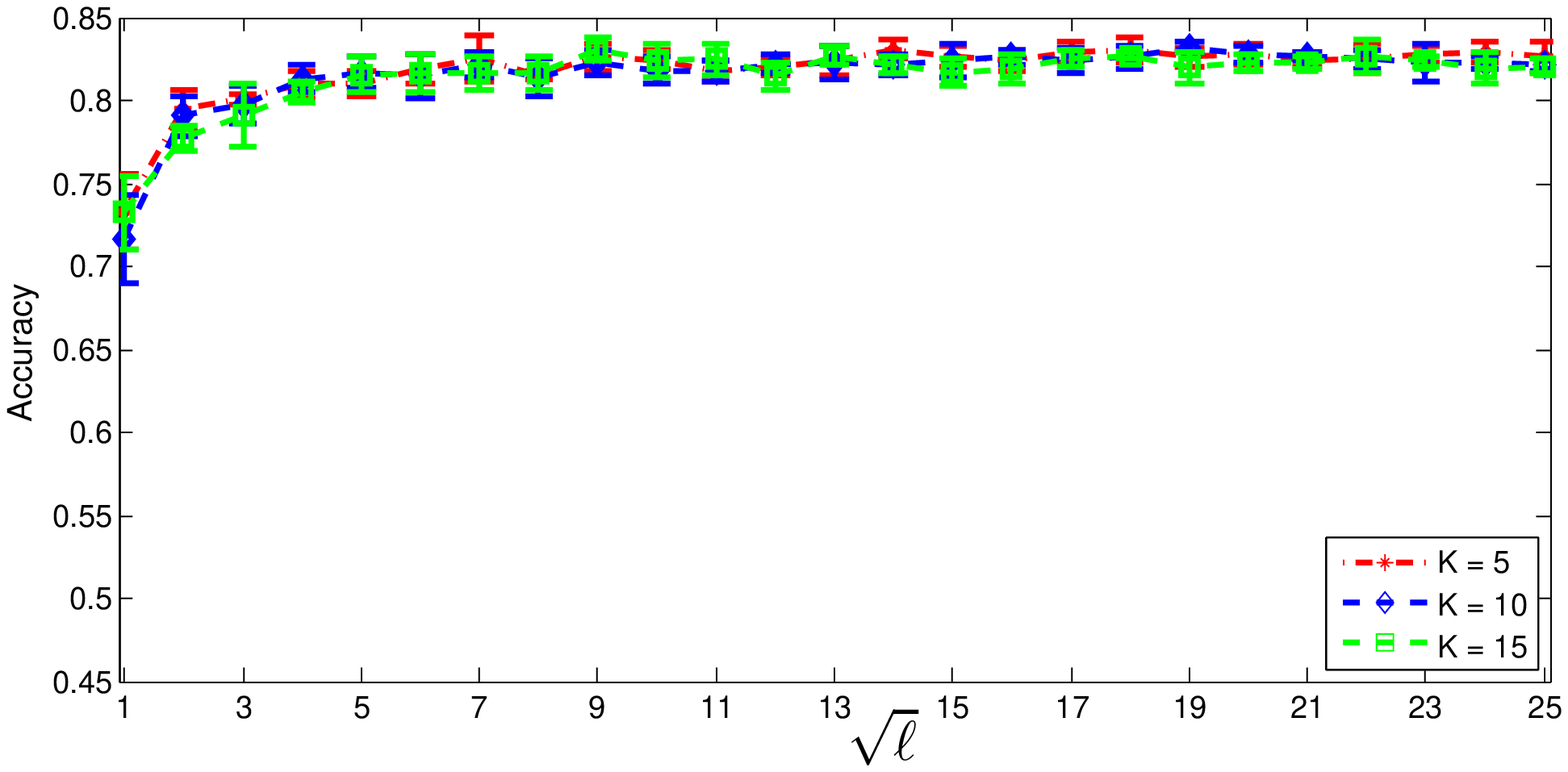}
\caption{Classification accuracy of GibbsMedLDA on the binary classification data set with different $\ell$ values.}\label{fig:Sensitivity-Ell}
\end{figure}

\begin{figure}[t]
\centering
\includegraphics[height=2.2in]{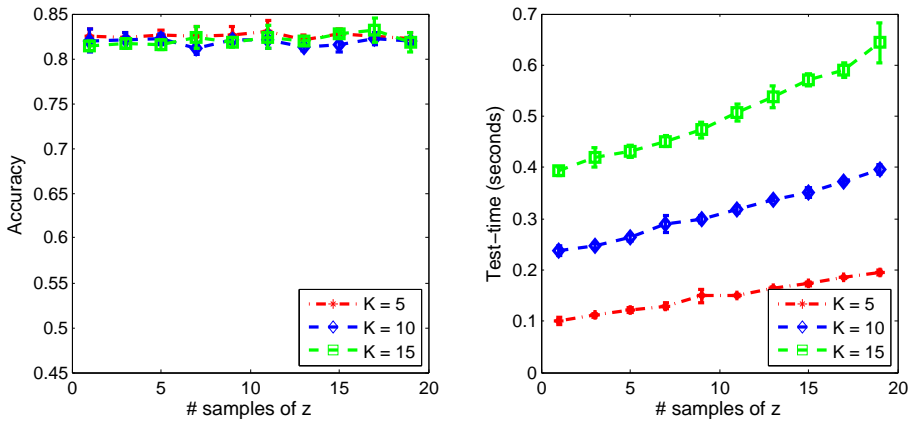}
\caption{(Left) classification accuracy and (Right) testing time of GibbsMedLDA on the binary classification data set with different numbers of $\zv$ samples in making predictions.}\label{fig:Sensitivity-Test-Lag}
\end{figure}

\subsubsection{Loss penalty $\ell$}
Figure~\ref{fig:Sensitivity-Ell} shows the classification performance of GibbsMedLDA on the binary classification task with different $\ell$ values. Again, we can see that in a wide range, e.g., from 25 to 625, the performance is quite stable for all the three different $K$ values. In the above experiments, we set $\ell = 164$. For the multi-class classification task, we have similar observations, and we set $\ell=64$ in the previous experiments.

\subsubsection{The number of testing samples}\label{sec:sensitivity-test-lag}
Figure~\ref{fig:Sensitivity-Test-Lag} shows the classification performance and testing time of GibbsMedLDA in the binary classification task with different numbers of $\zv$ samples when making predictions, as stated in Section~\ref{sec:prediction}. We can see that in a wide range, e.g., from 1 to 19, the classification performance is quite stable for all the three different $K$ values we have tested; and the testing time increases about linearly as the number of $\zv$ samples increases. For the multi-class classification task, we have similar observations.

\subsection{Topic Representations}

Finally, we also visualize the discovered latent topic representations of Gibbs MedLDA on the 20Newsgroup data set. We choose the multi-task Gibbs MedLDA, since it learns a single common topic space shared by multiple classifiers. We set the number of topics at 40. Figure~\ref{fig:20ng-topics} shows the average topic representations of the documents from each category, and Table~\ref{table:gibbs-medlda-topics} presents the 10 most probable words in each topic. We can see that for different categories, the average representations are quite different, indicating that the topic representations are good at distinguishing documents from different classes. We can also see that on average the documents in each category have very few salient topics (i.e., topics with a high probability of describing the documents). For example, the first two most salient topics for describing the documents in the category {\it alt.atheism} are topic 20 and topic 29, whose top-ranked words (see Table~\ref{table:gibbs-medlda-topics}) reflect the semantic meaning of the category. For {\it graphics} category, the documents have the most salient topic 23, which has topic words {\it image}, {\it graphics}, {\it file}, {\it jpeg}, and etc., all of which are closely related to the semantic of graphics. For other categories, we have similar observations.

\begin{figure}\vspace{-.4cm}
\centering
{\hfill \subfigure[alt.atheism]{\includegraphics[height=1.25in,width=1.4in]{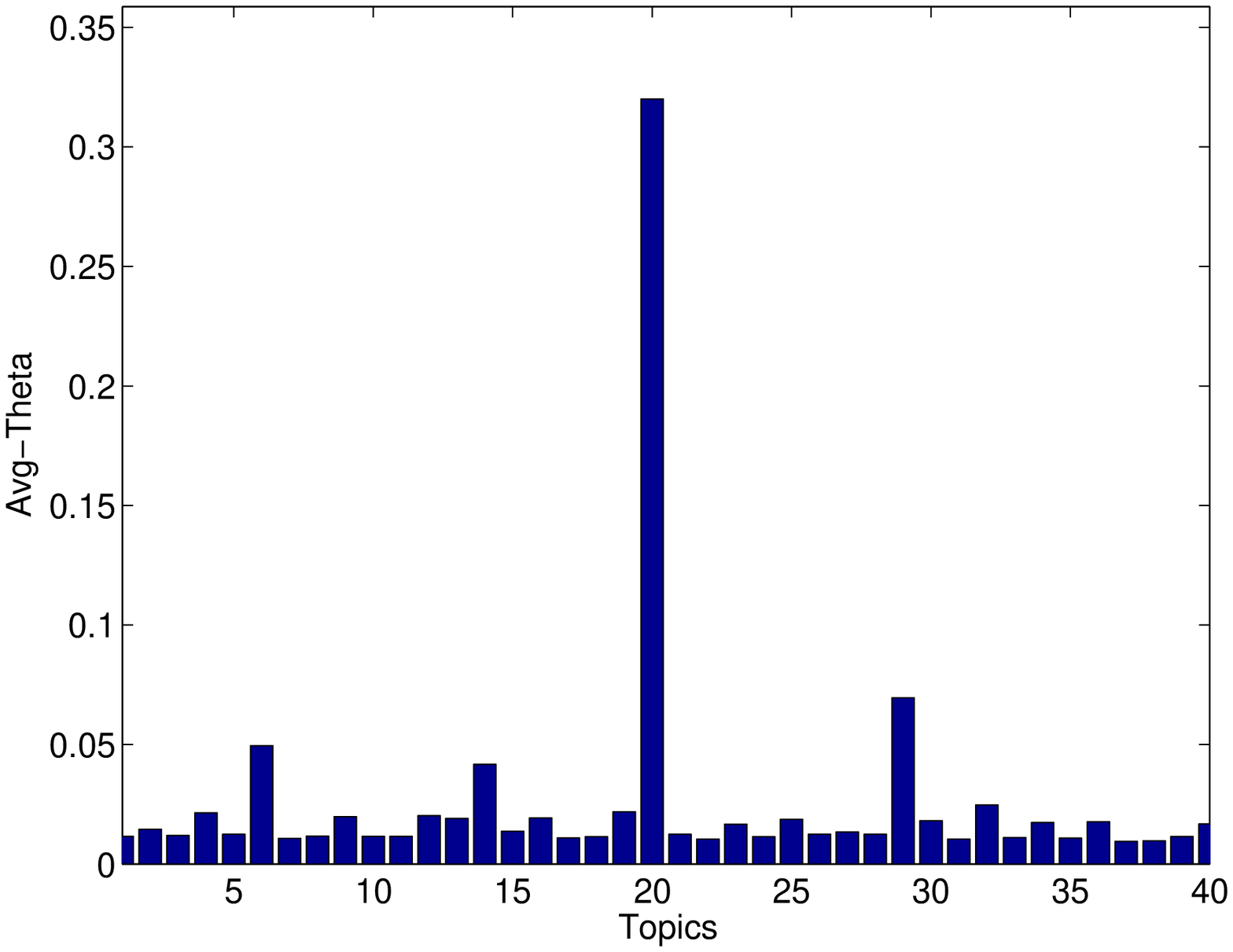}}\vspace{-.13cm}
\hfill \subfigure[graphics]{\includegraphics[height=1.25in,width=1.4in]{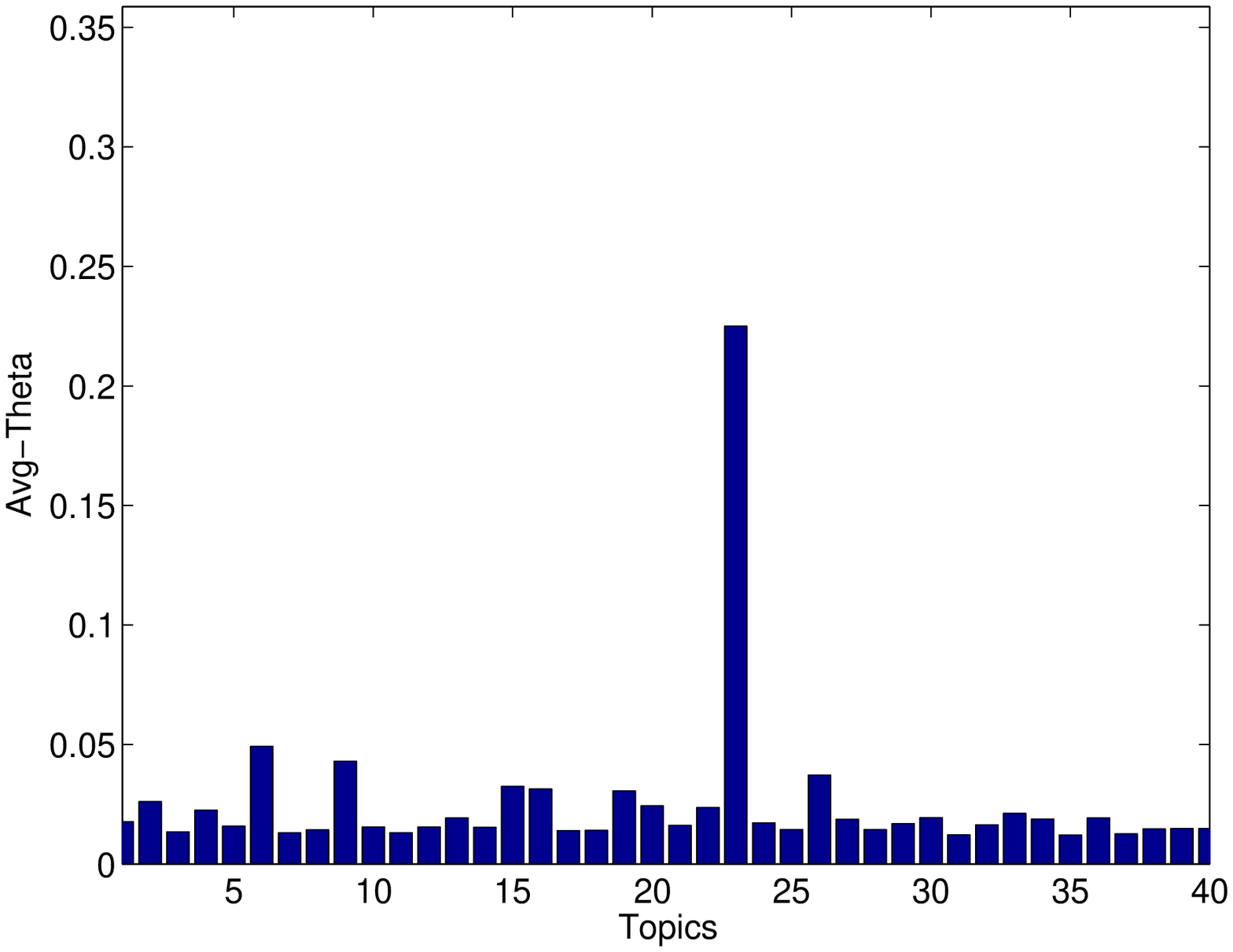}}\vspace{-.13cm}
\hfill \subfigure[ms-windows.misc]{\includegraphics[height=1.25in,width=1.4in]{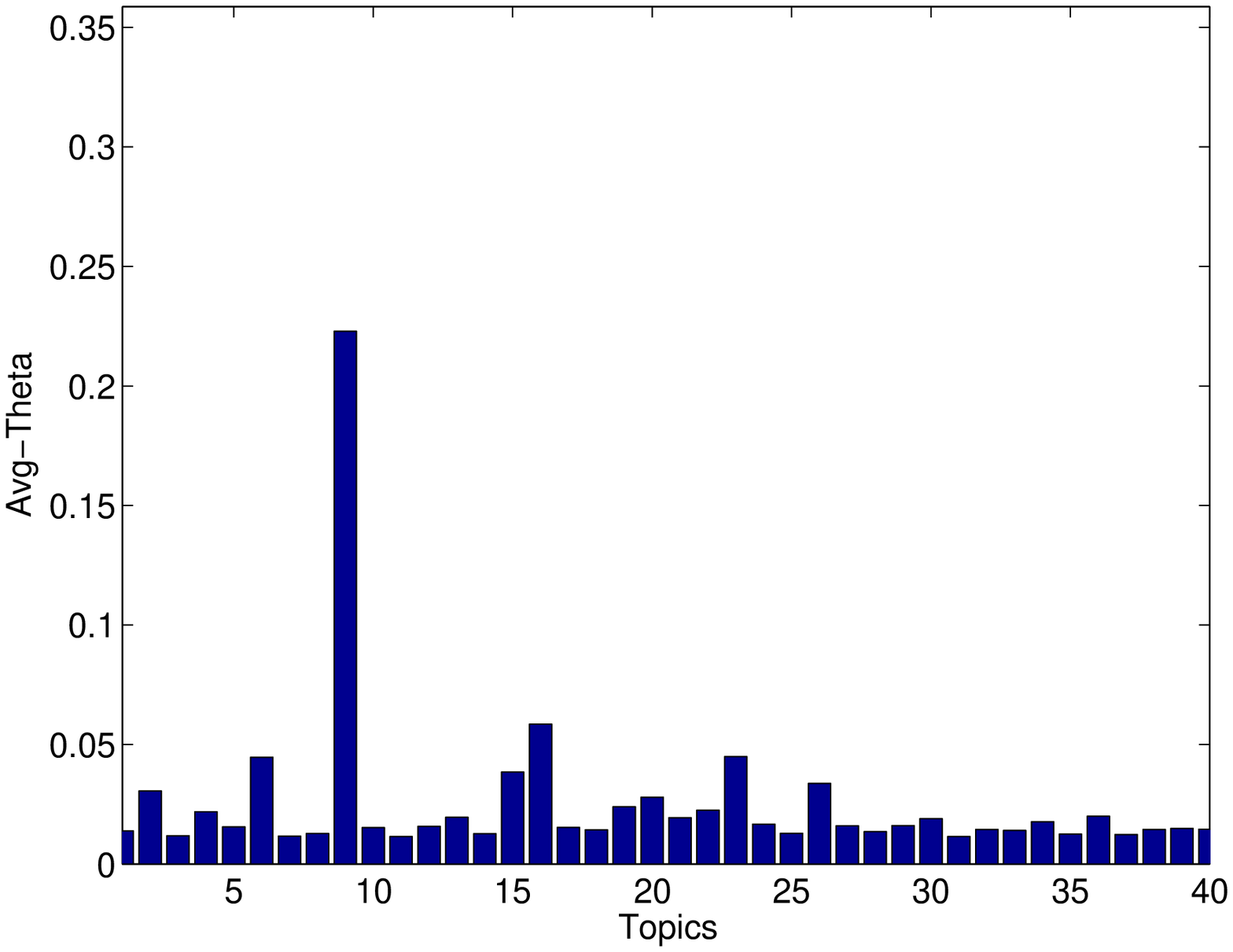}}\vspace{-.13cm}
\hfill \subfigure[ibm.pc.hardware]{\includegraphics[height=1.25in,width=1.4in]{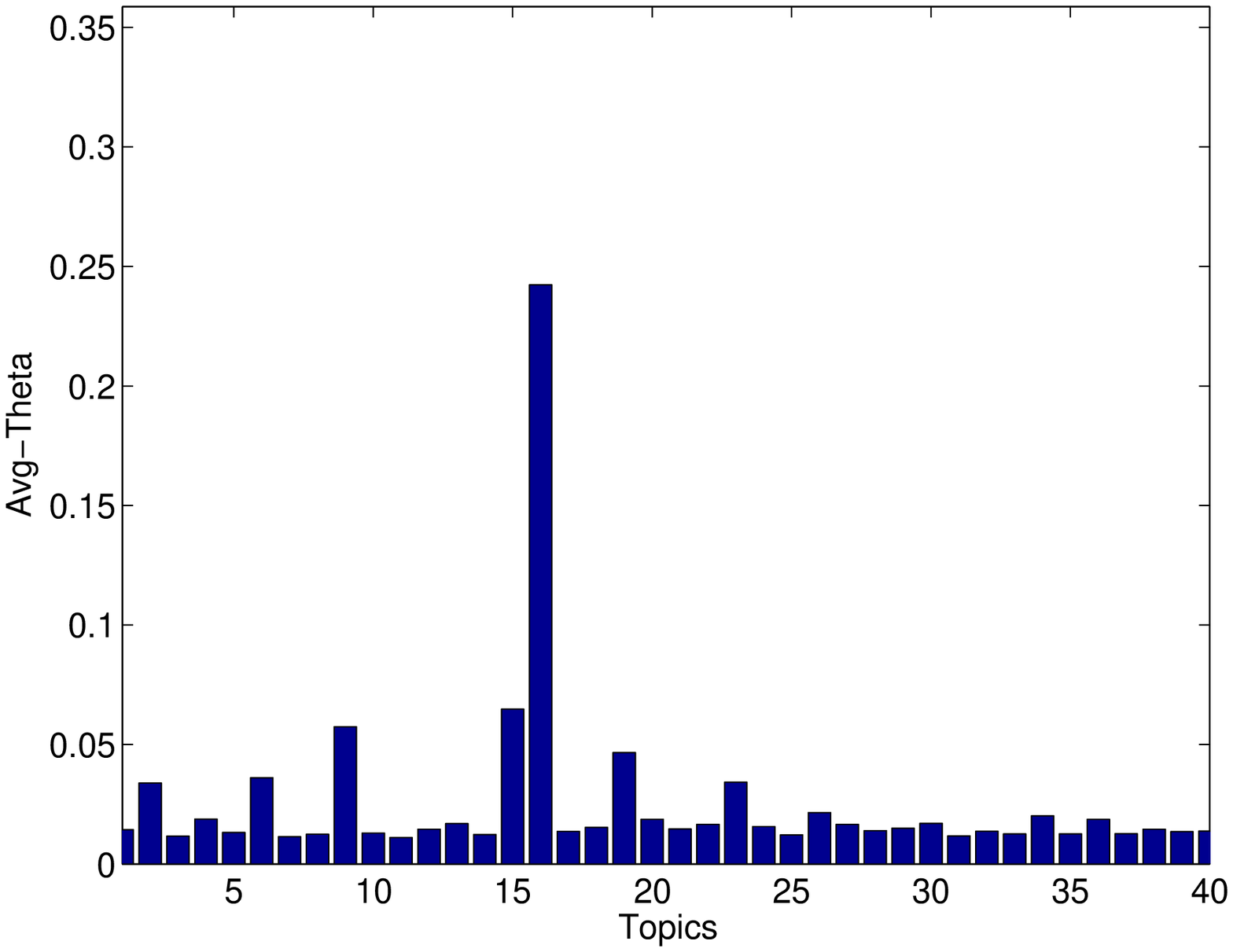}}\vspace{-.13cm}
\hfill \subfigure[sys.mac.hardware]{\includegraphics[height=1.25in,width=1.4in]{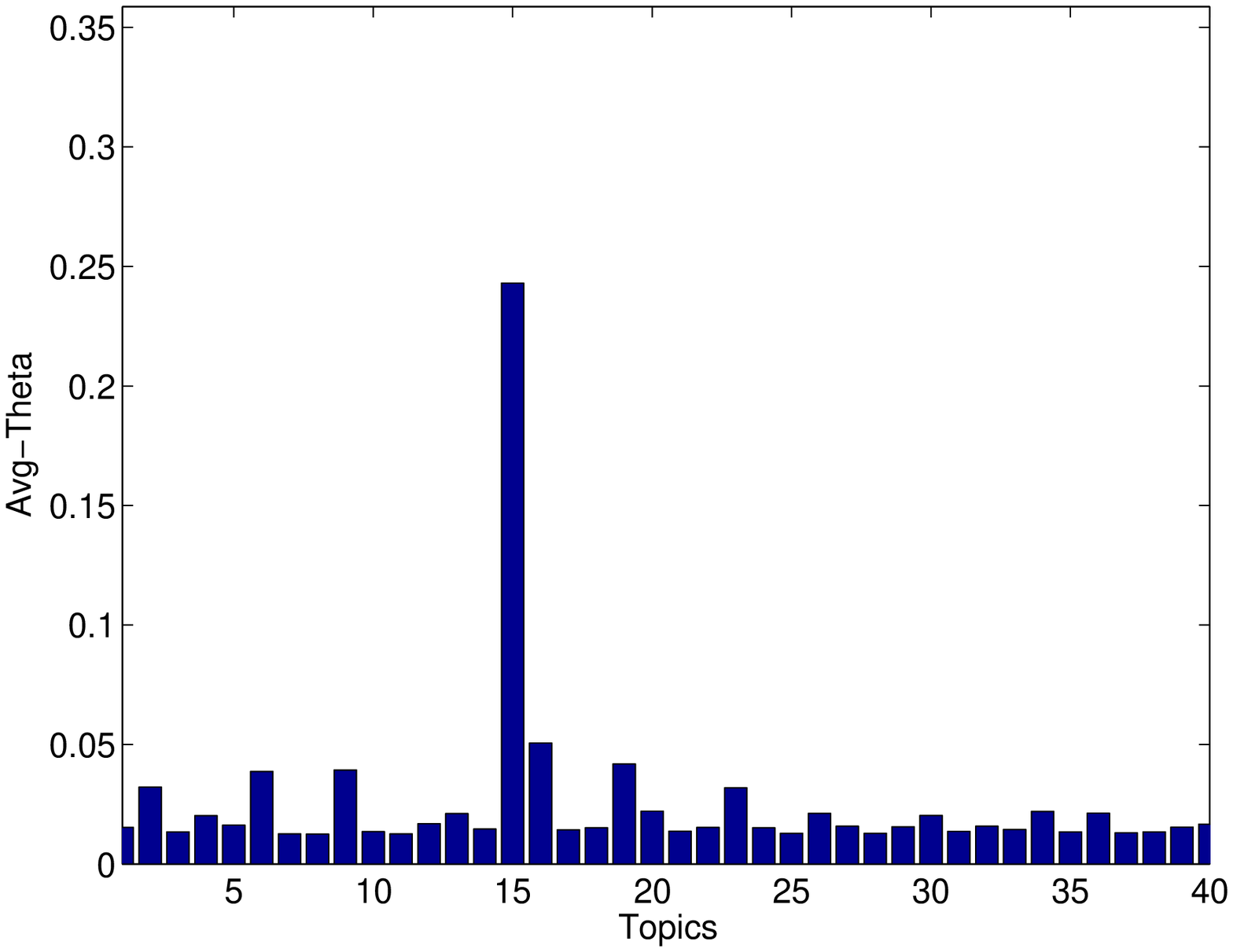}}\vspace{-.13cm}
\hfill \subfigure[windows.x]{\includegraphics[height=1.25in,width=1.4in]{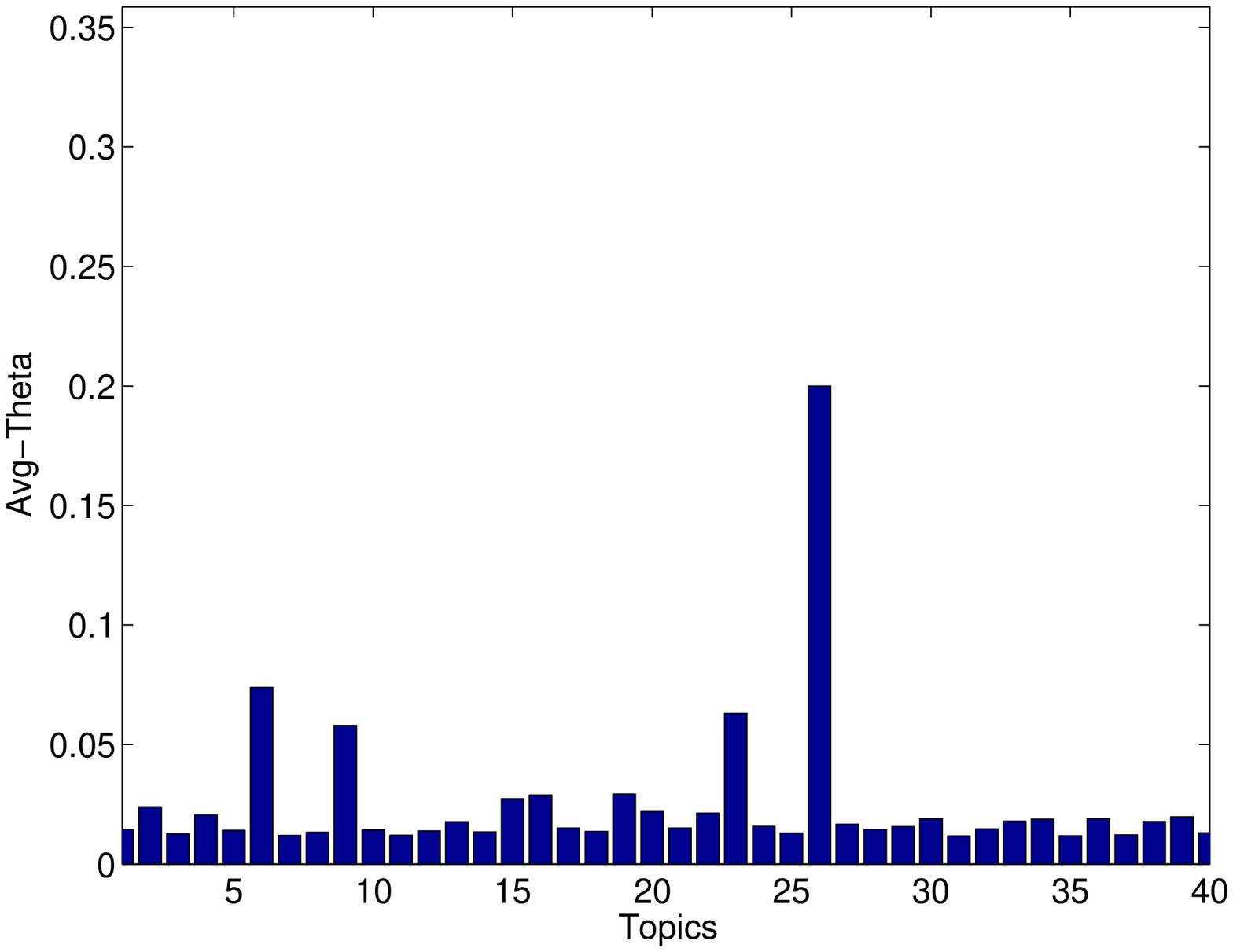}}\vspace{-.13cm}
\hfill \subfigure[misc.forsale]{\includegraphics[height=1.25in,width=1.4in]{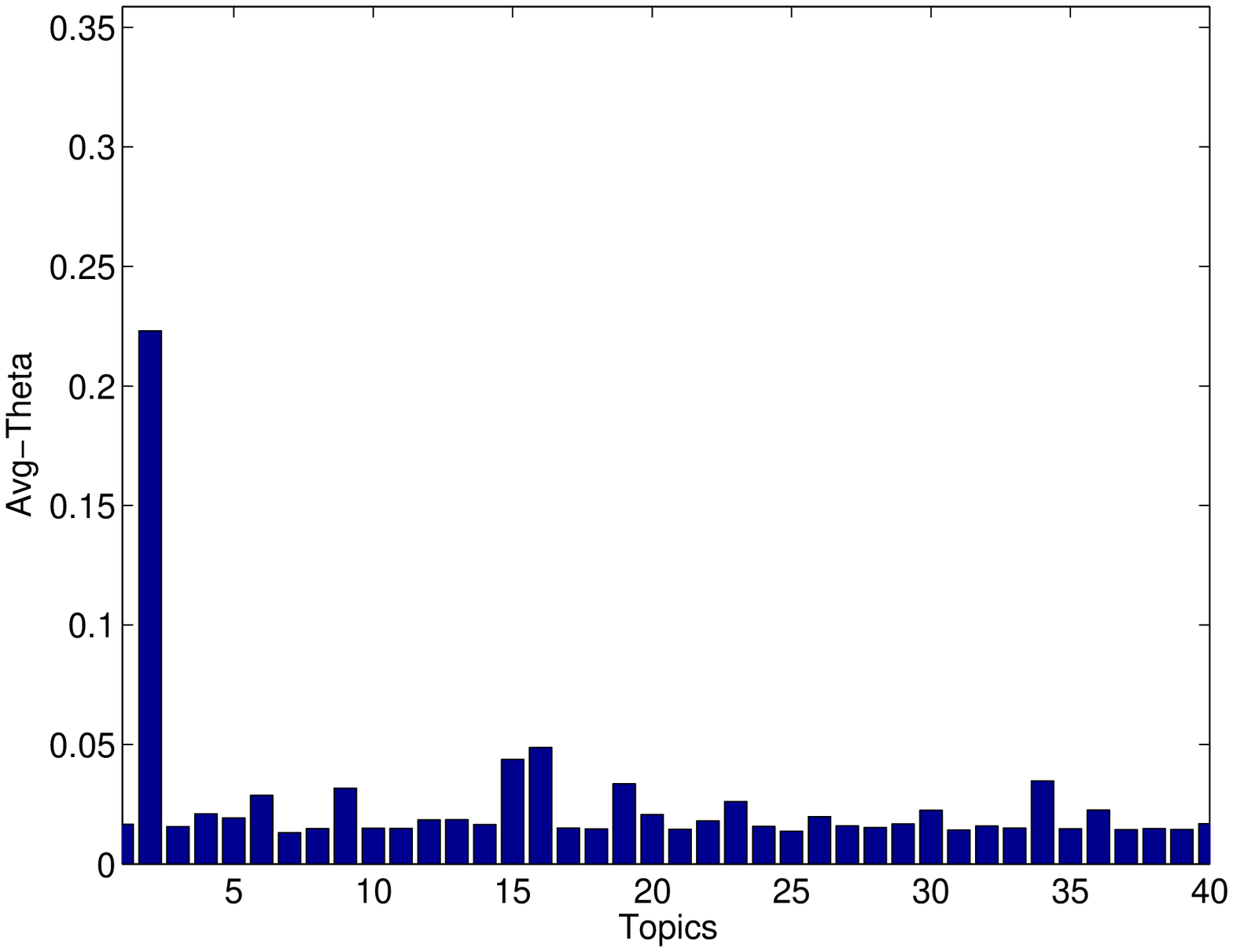}}\vspace{-.13cm}
\hfill \subfigure[rec.autos]{\includegraphics[height=1.25in,width=1.4in]{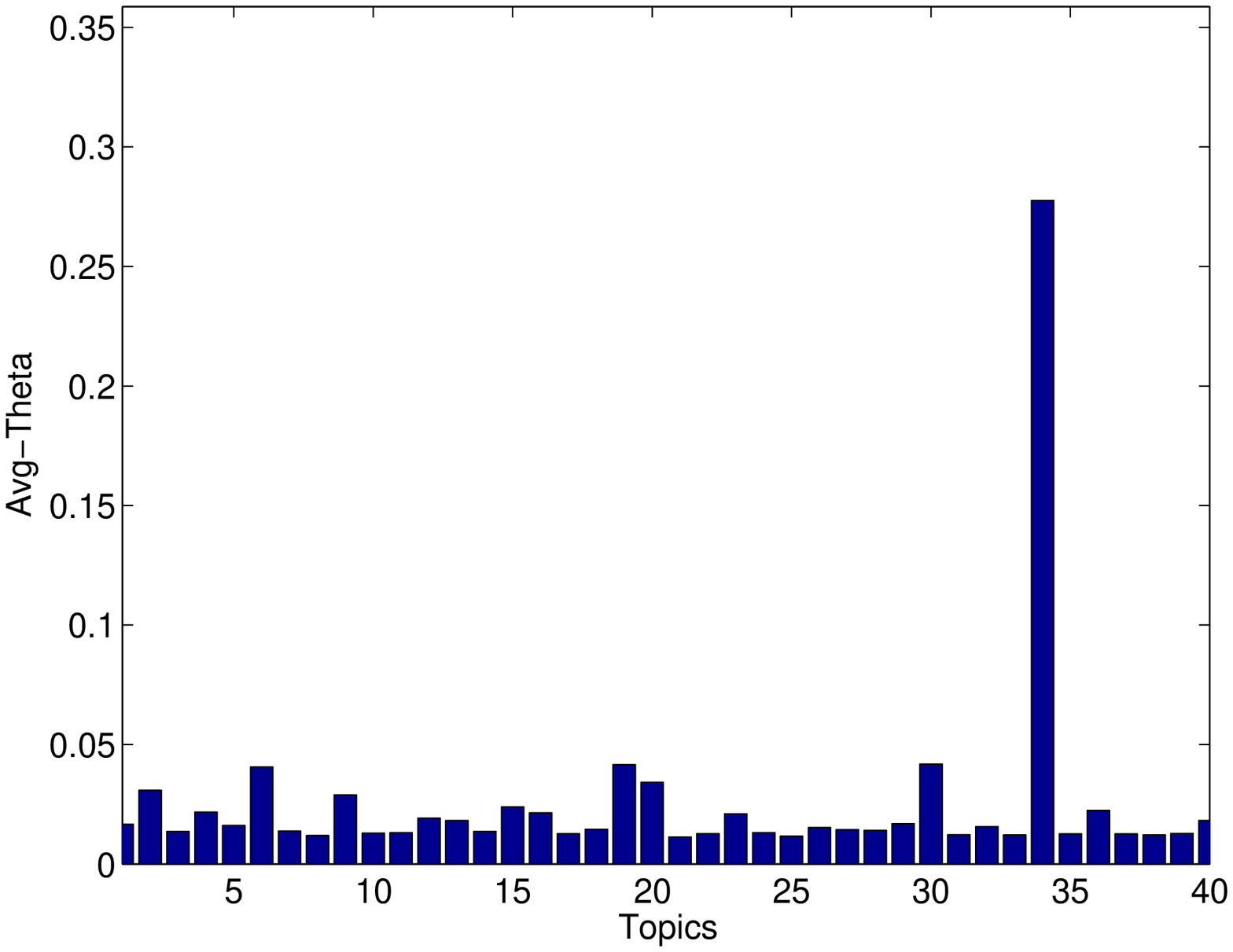}}\vspace{-.13cm}
\hfill \subfigure[rec.motorcycles]{\includegraphics[height=1.25in,width=1.4in]{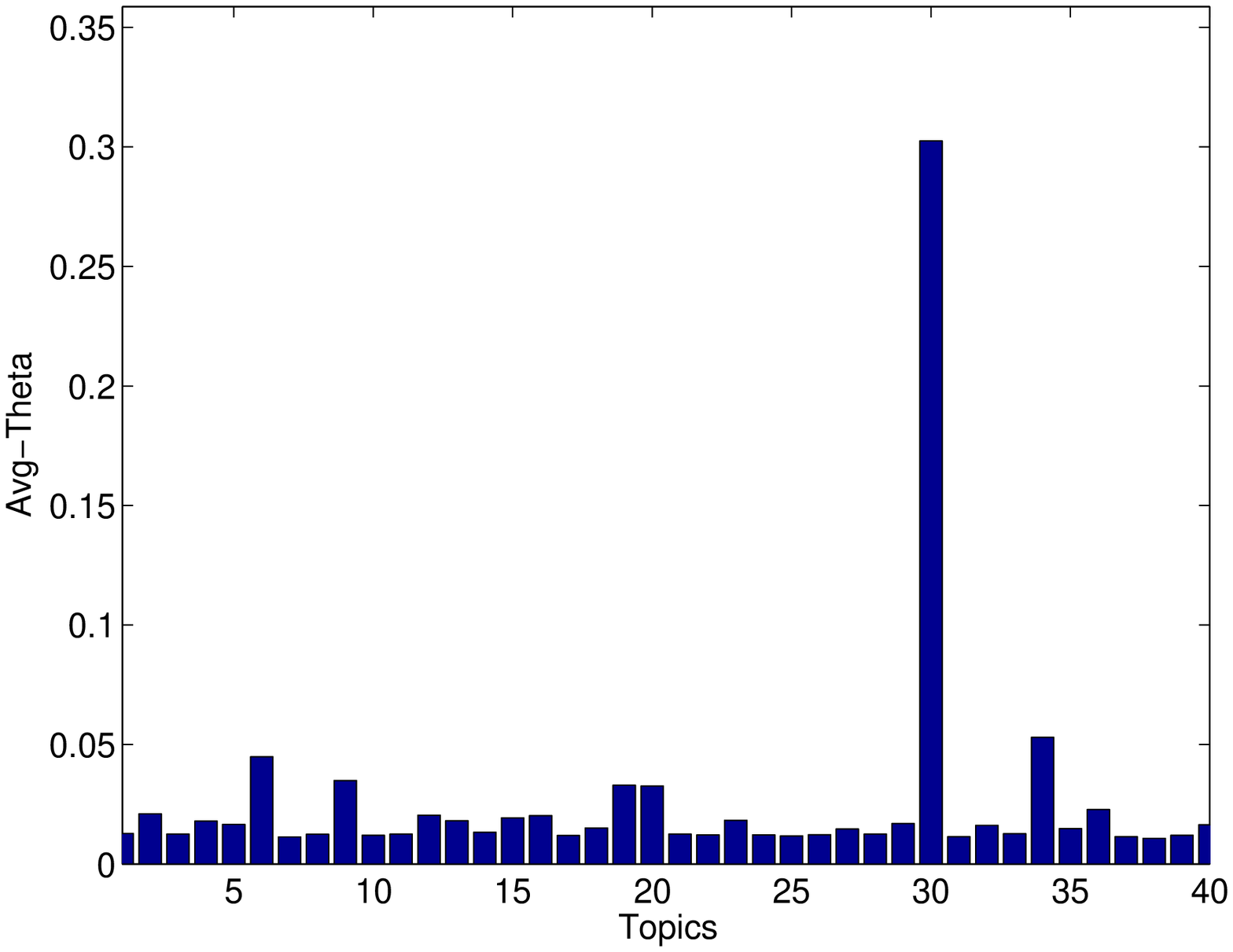}}\vspace{-.13cm}
\hfill \subfigure[rec.sport.baseball]{\includegraphics[height=1.25in,width=1.4in]{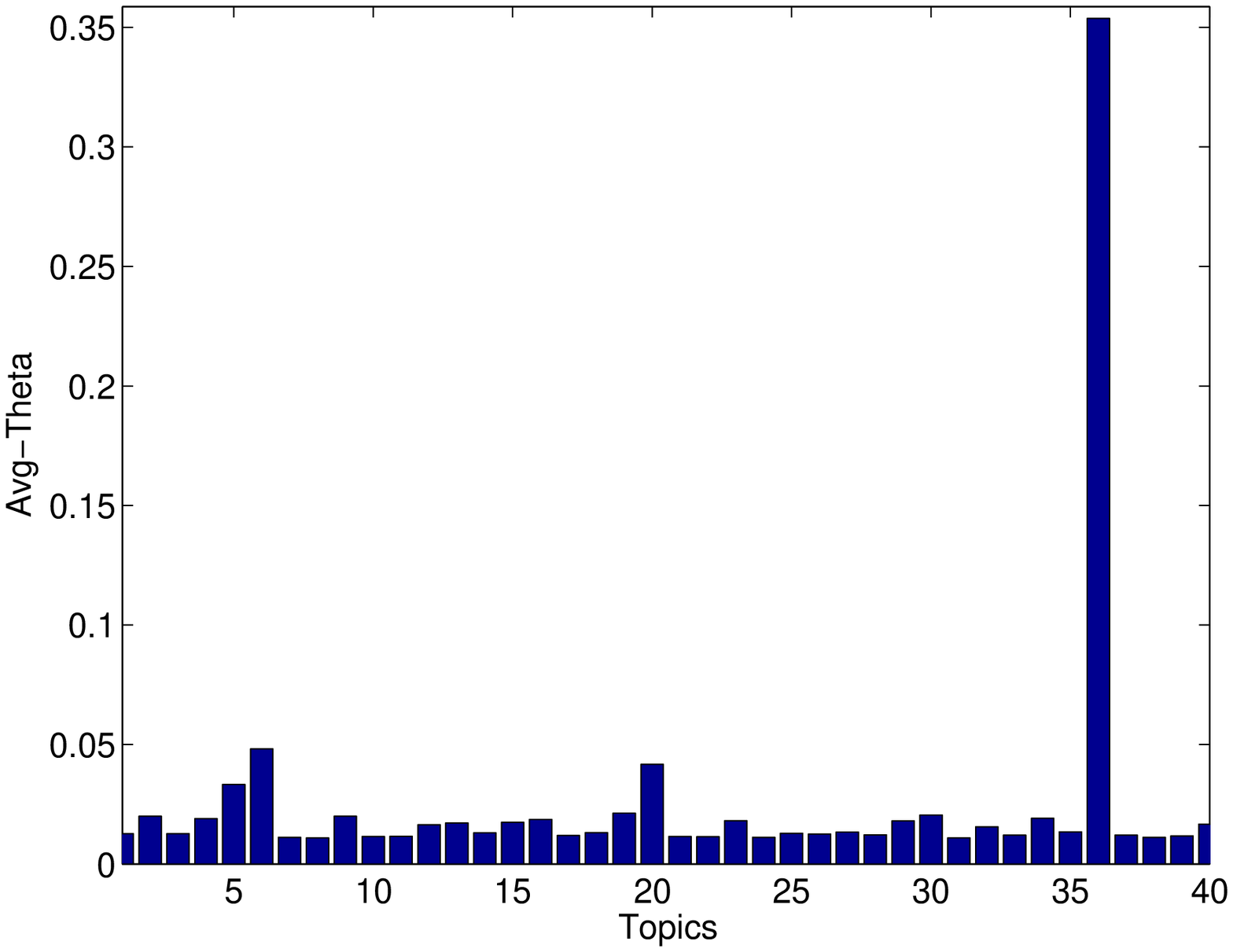}}\vspace{-.13cm}
\hfill \subfigure[rec.sport.hockey]{\includegraphics[height=1.25in,width=1.4in]{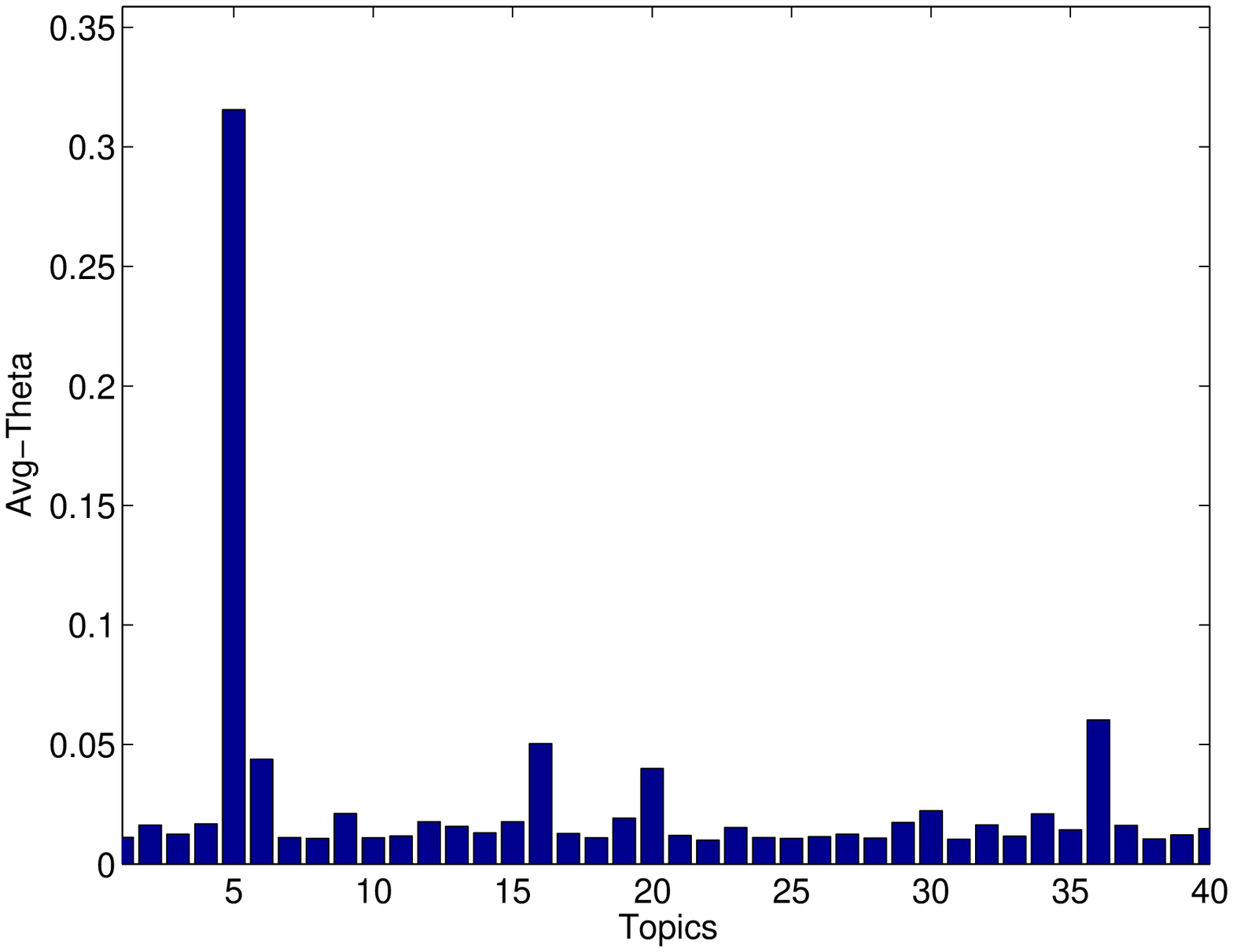}}\vspace{-.13cm}
\hfill \subfigure[sci.crypt]{\includegraphics[height=1.25in,width=1.4in]{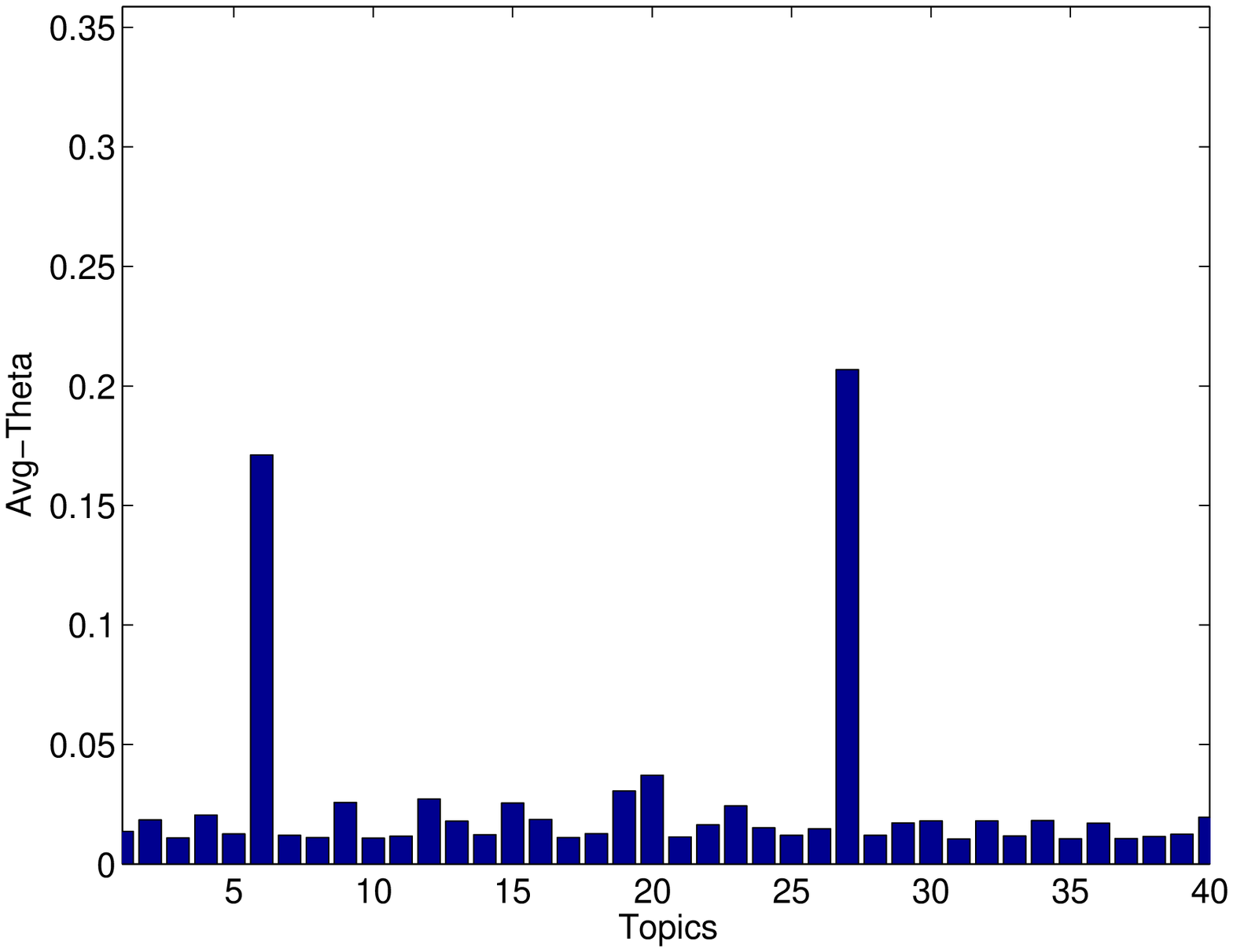}}\vspace{-.13cm}
\hfill \subfigure[sci.electronics]{\includegraphics[height=1.25in,width=1.4in]{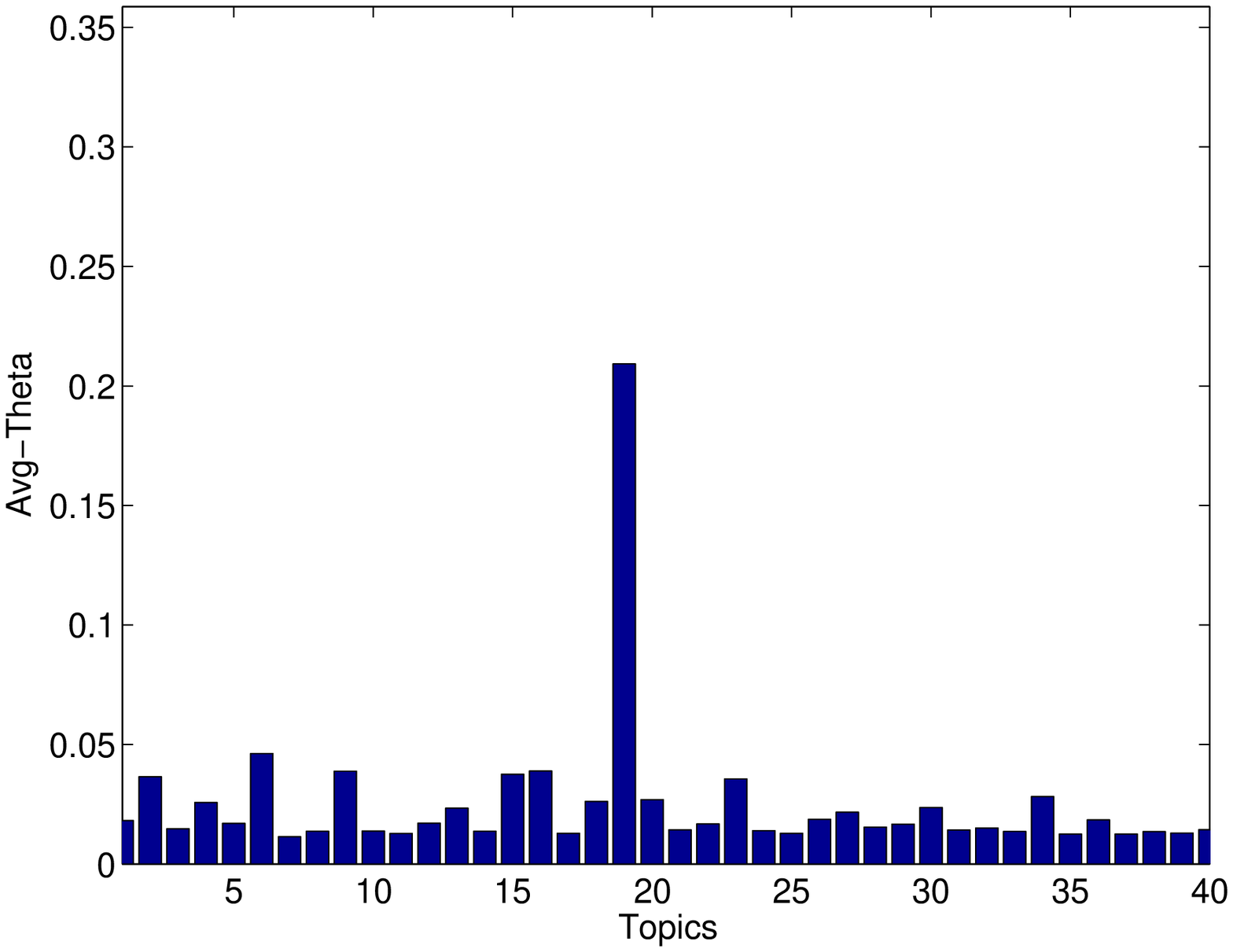}}\vspace{-.13cm}
\hfill \subfigure[sci.med]{\includegraphics[height=1.25in,width=1.4in]{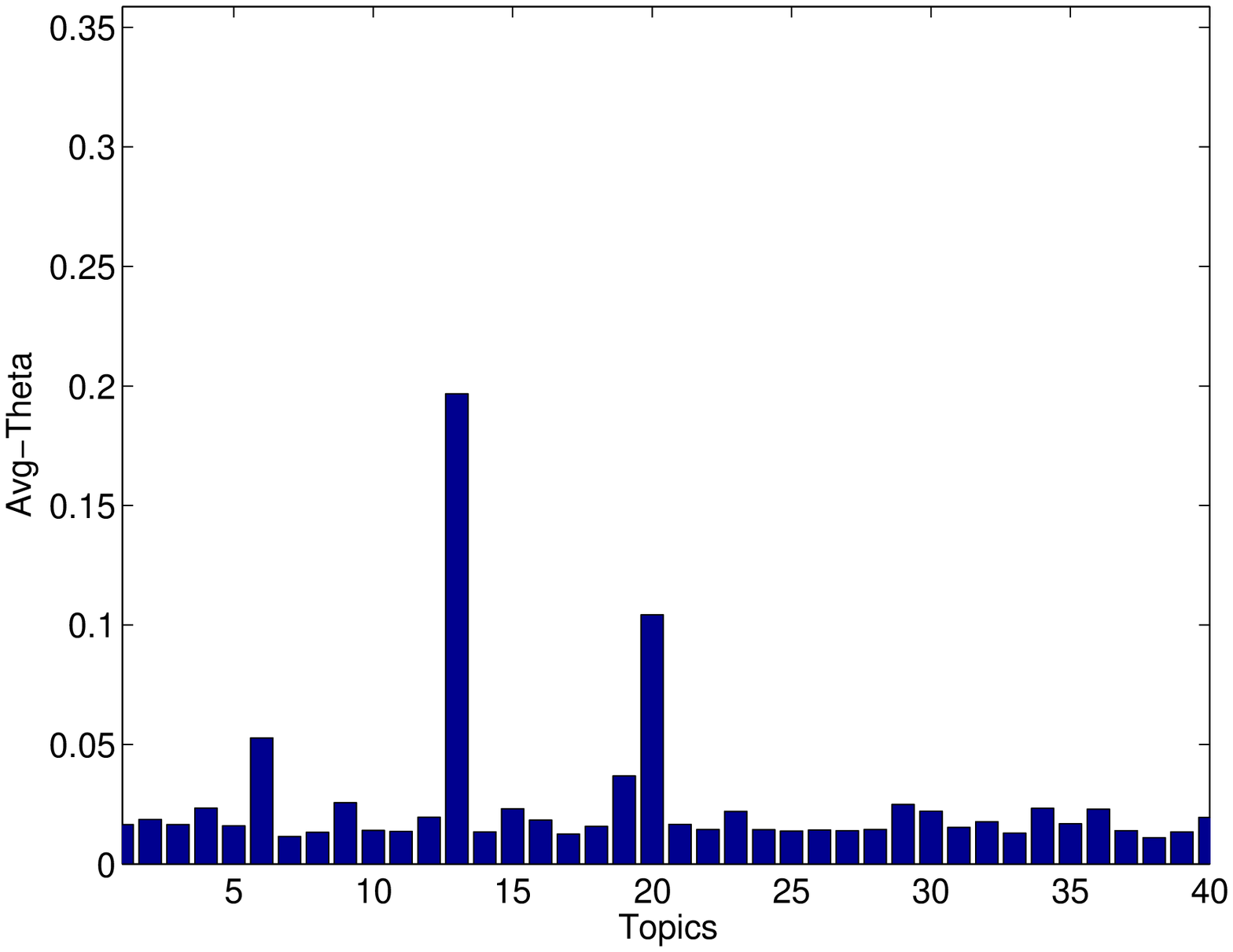}}\vspace{-.13cm}
\hfill \subfigure[sci.space]{\includegraphics[height=1.25in,width=1.4in]{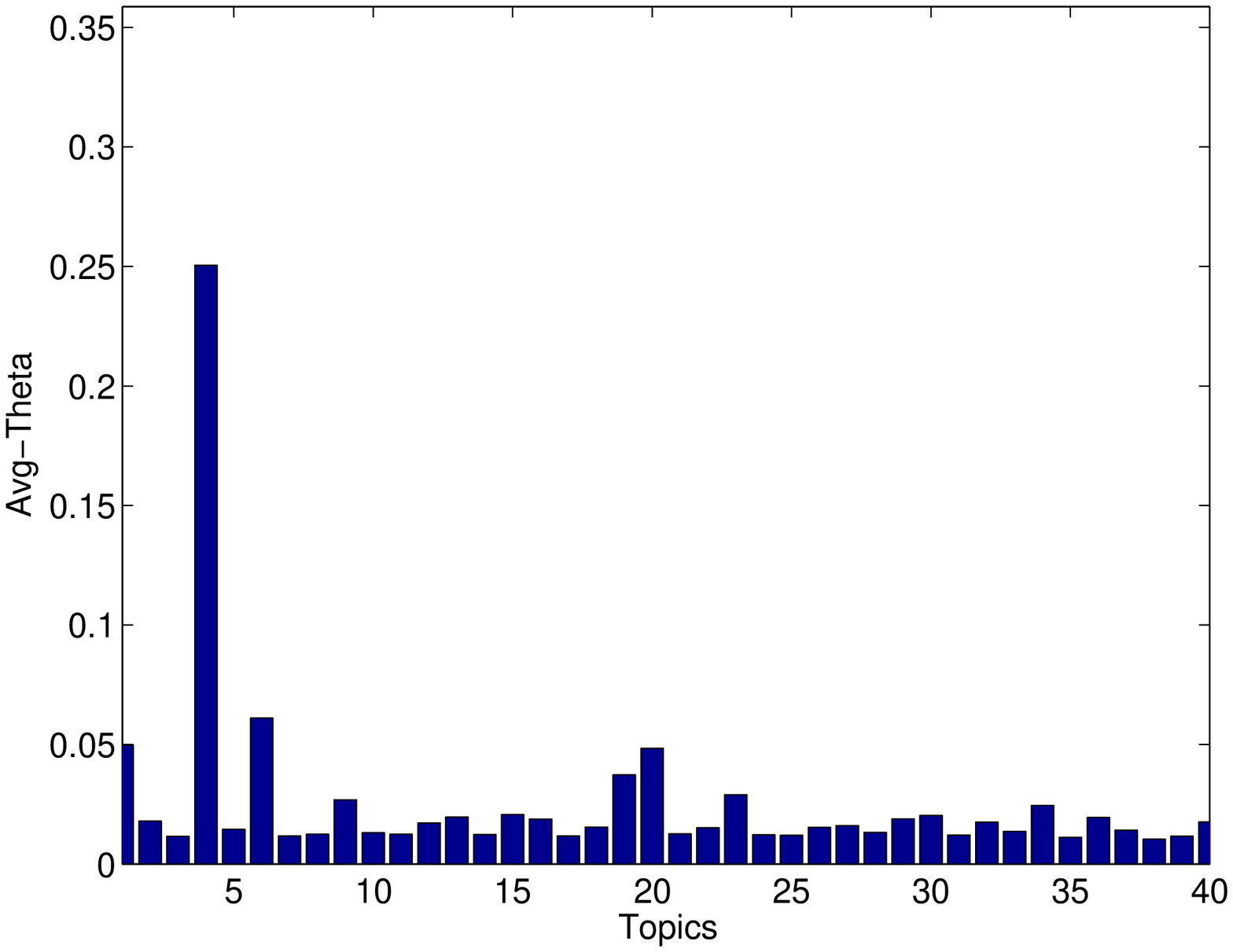}}\vspace{-.13cm}
\hfill \subfigure[religion.christian]{\includegraphics[height=1.25in,width=1.4in]{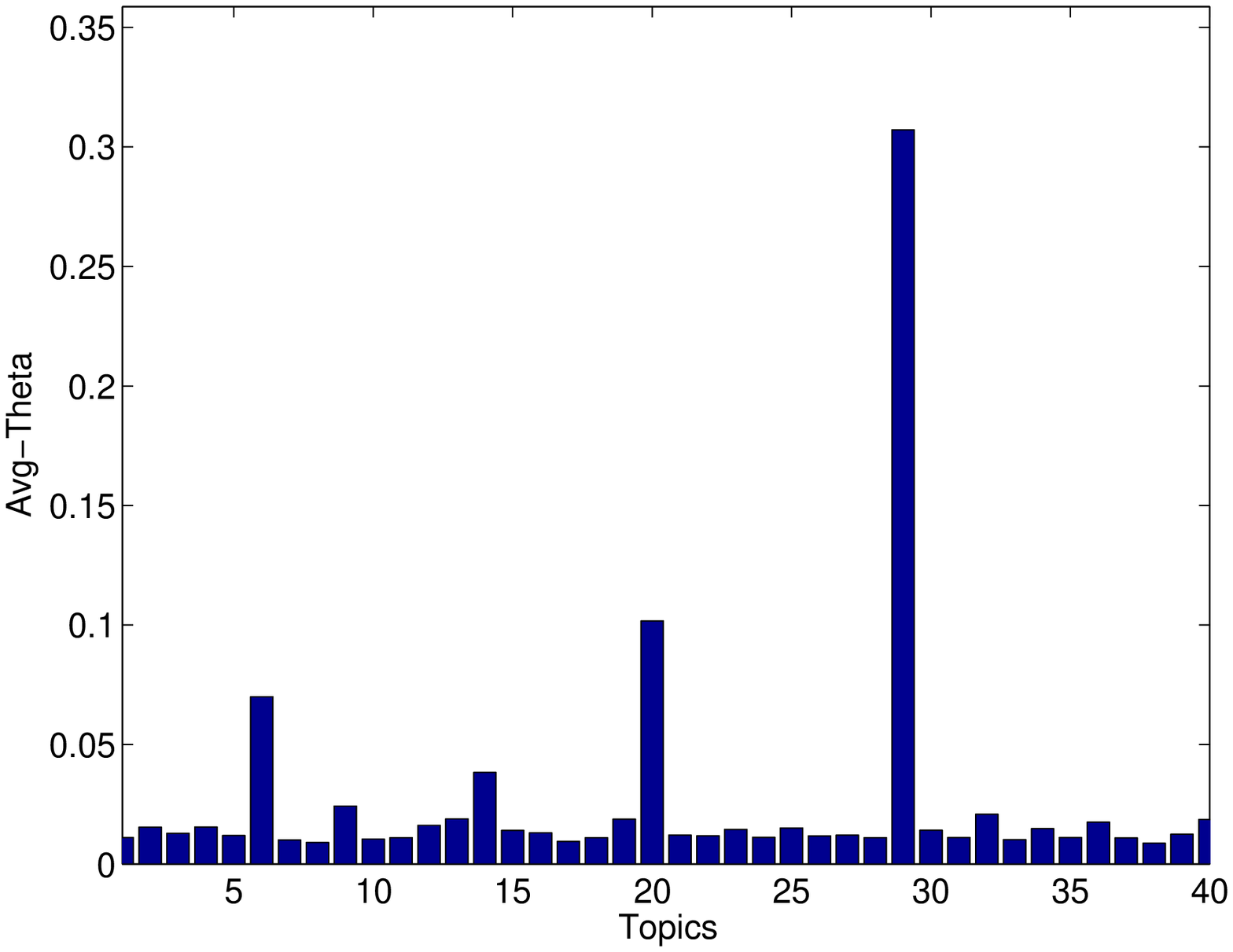}}\vspace{-.13cm}
\hfill \subfigure[politics.guns]{\includegraphics[height=1.25in,width=1.4in]{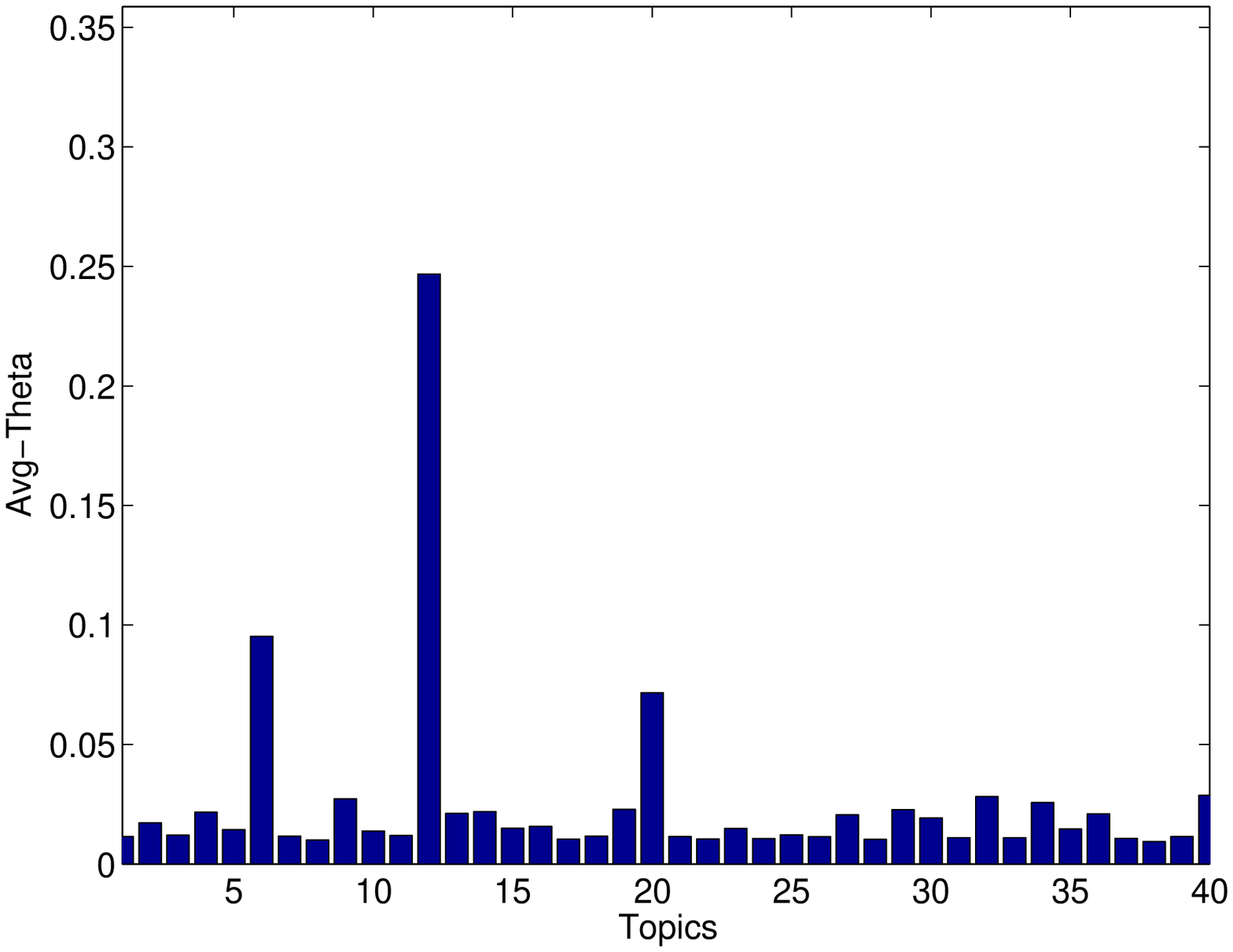}}\vspace{-.13cm}
\hfill \subfigure[politics.mideast]{\includegraphics[height=1.25in,width=1.4in]{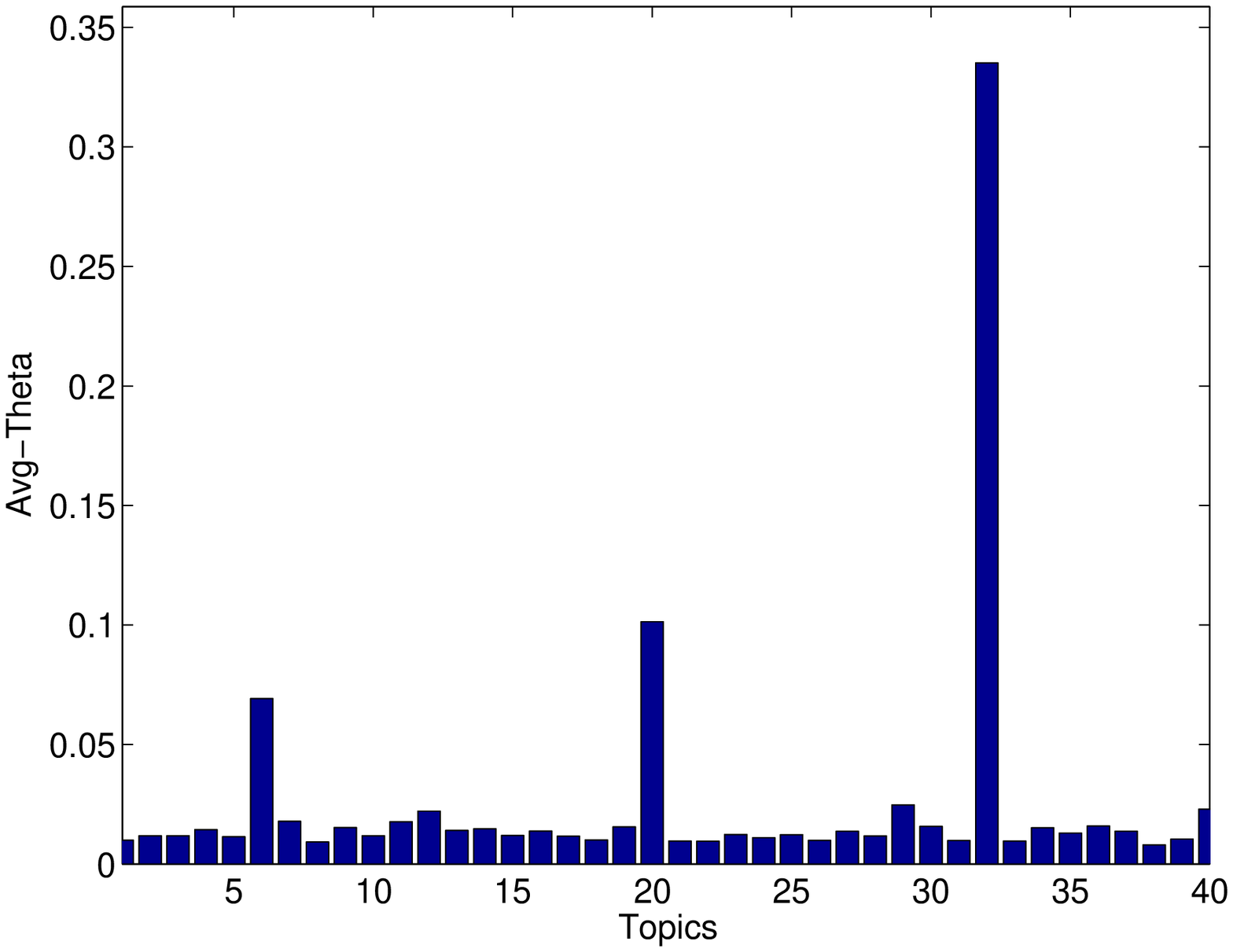}}\vspace{-.13cm}
\hfill \subfigure[politics.misc]{\includegraphics[height=1.25in,width=1.4in]{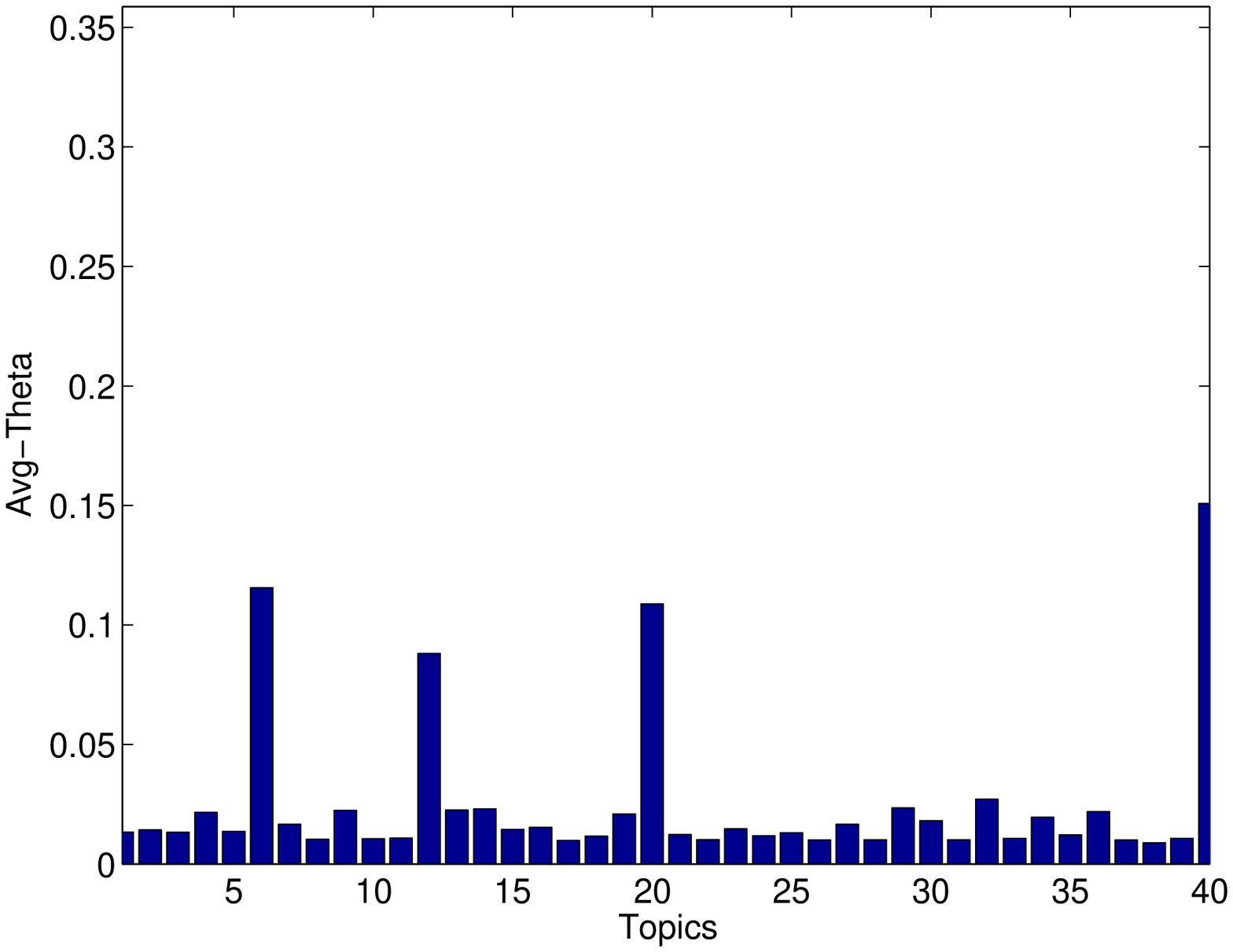}}\vspace{-.13cm}
\hfill \subfigure[religion.misc]{\includegraphics[height=1.25in,width=1.4in]{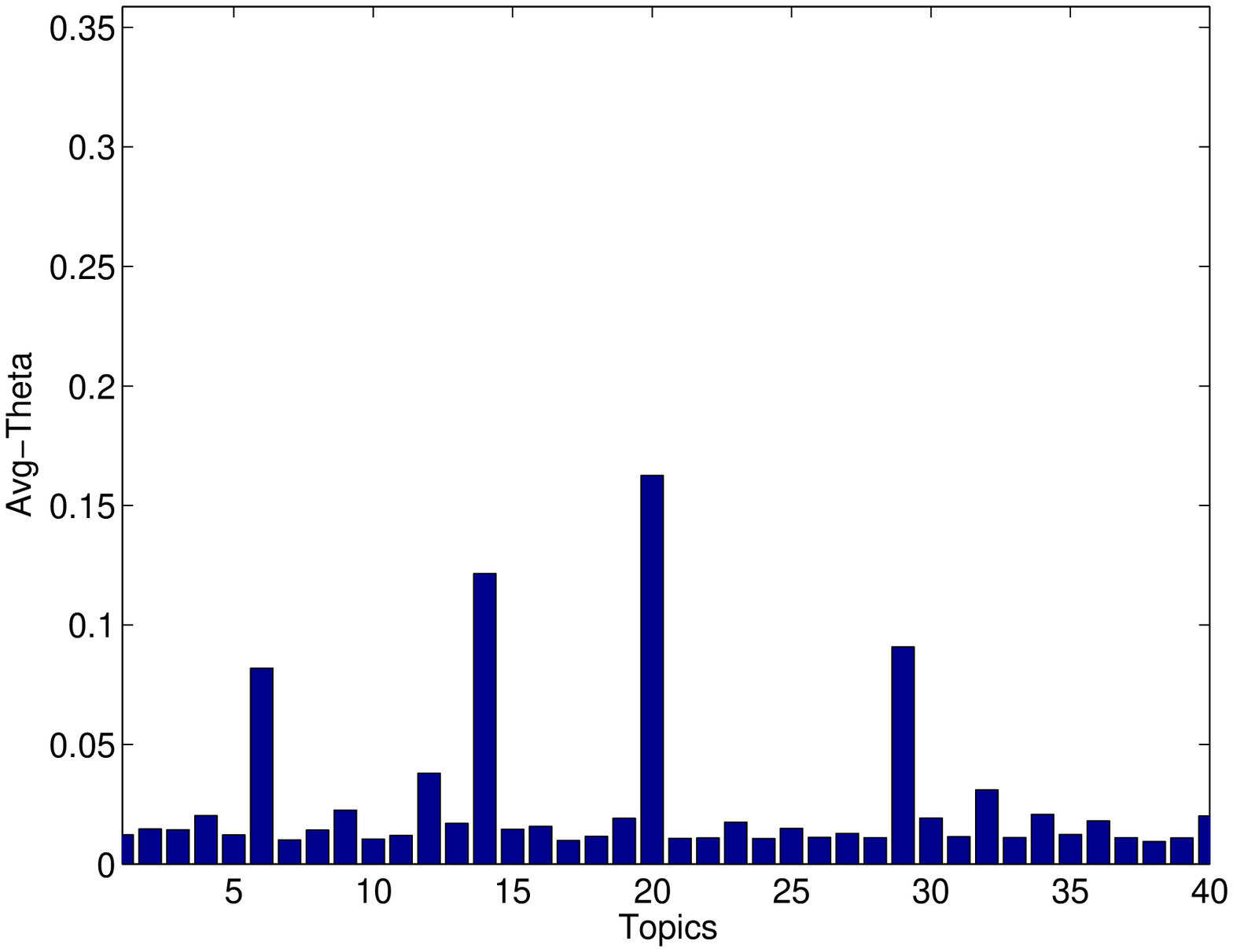}}\vspace{-.13cm}
\hfill}
\caption{(a-t) per-class average topic representations on the 20Newsgroups data set.}
\label{fig:20ng-topics}
\end{figure}

\begin{table*}[t]
\centering
\caption{The ten most probable words in the topics discovered by Multi-task Gibbs MedLDA ($K=40$) on the 20Newsgroups data set.}
\label{table:gibbs-medlda-topics}
\setlength\tabcolsep{1pt}
\scalebox{.78}
{
\begin{tabular}{!{\vrule width.5pt}
c|c|c|c|c|c|c|c|c|c
!{\vrule width.5pt}}
\hline\hline
 \textbf{Topic 1} & \textbf{Topic 2} & \textbf{Topic 3} & \textbf{Topic 4} & \textbf{Topic 5} & \textbf{Topic 6} & \textbf{Topic 7} & \textbf{Topic 8} & \textbf{Topic 9} & \textbf{Topic 10}\\
\hline
data & sale & woman & space & team & writes & mr & db & windows & file \\
mission & offer & told & nasa & game & don & president & cs & writes & congress \\
center & shipping & afraid & launch & hockey & time & stephanopoulos & mov & article & january \\
sci & dos & building & writes & play & article & jobs & bh & file & centers \\
jpl & mail & couldn & earth & season & information & russian & si & files & bill \\
planetary & price & floor & article & nhl & number & administration & al & problem & quotes \\
mass & condition & beat & orbit & ca & people & meeting & byte & dos & hr \\
probe & good & standing & moon & games & make & george & theory & don & states \\
ames & interested & immediately & shuttle & players & work & russia & bits & run & march \\
atmosphere & sell & crowd & gov & year & part & working & larson & win & included \\
\hline
 \textbf{Topic 11} & \textbf{Topic 12} & \textbf{Topic 13} & \textbf{Topic 14} & \textbf{Topic 15} & \textbf{Topic 16} & \textbf{Topic 17} & \textbf{Topic 18} & \textbf{Topic 19} & \textbf{Topic 20}\\
\hline
organizations & gun & msg & jesus & mac & drive & ma & wiring & writes & writes \\
began & people & health & god & apple & scsi & nazis & supply & power & people \\
security & guns & medical & people & writes & mb & mu & boxes & article & article \\
terrible & weapons & food & christian & drive & card & conflict & bnr & don & don  \\
association & firearms & article & bible & problem & system & ql & plants & ground & god \\
sy & writes & disease & sandvik & mb & controller & te & reduce & good & life \\
publication & article & patients & christians & article & bus & ne & corp & current & things \\
helped & government & writes & objective & system & hard & eu & relay & work & apr \\
organized & fire & doctor & ra & bit & ide & cx & onur & circuit & evidence \\
bullock & law & research & christ & don & disk & qu & damage & ve & time \\
\hline
  \textbf{Topic 21} & \textbf{Topic 22} & \textbf{Topic 23} & \textbf{Topic 24} & \textbf{Topic 25} & \textbf{Topic 26} & \textbf{Topic 27} & \textbf{Topic 28} & \textbf{Topic 29} & \textbf{Topic 30} \\
\hline
ms & pub & image & internet & fallacy & window & key & unit & god & bike \\
myers & ftp & graphics & anonymous & conclusion & server & encryption & heads & jesus & dod \\
os & mail & file & privacy & rule & motif & chip & master & people & writes \\
vote & anonymous & files & posting & innocent & widget & clipper & multi & church & article \\
votes & archive & software & email & perfect & sun & government & led & christians & ride \\
santa & electronic & jpeg & anonymity & assertion & display & keys & vpic & christ & don \\
fee & server & images & users & true & application & security & dual & christian & apr \\
impression & faq & version & postings & ad & mit & escrow & ut & bible & ca \\
issued & eff & program & service & consistent & file & secure & ratio & faith & motorcycle \\
voting & directory & data & usenet & perspective & xterm & nsa & protected & truth & good \\
\hline
  \textbf{Topic 31} & \textbf{Topic 32} & \textbf{Topic 33} & \textbf{Topic 34} & \textbf{Topic 35} & \textbf{Topic 36} & \textbf{Topic 37} & \textbf{Topic 38} & \textbf{Topic 39} & \textbf{Topic 40} \\
\hline
  matthew & israel & courtesy & car & didn & year & south & entry & open & people \\
  wire & people & announced & writes & apartment & writes & war & output & return & government \\
  neutral & turkish & sequence & cars & sumgait & article & tb & file & filename & article \\
  judas & armenian & length & article & started & game & nuclear & program & read & tax \\
  reported & jews & rates & don & mamma & team & rockefeller & build & input & make \\
  acts & israeli & molecular & good & father & baseball & georgia & um & char & health \\
  island & armenians & sets & engine & karina & good & military & entries & write & state \\
  safety & article & pm & speed & ll & don & bay & ei & enter & don \\
  hanging & writes & automatic & apr & guy & games & acquired & eof & judges & money \\
  outlets & turkey & barbara & oil & knew & runs & ships & printf & year & cramer \\
\hline\hline
\end{tabular}
}
\end{table*}

\section{Conclusions and Discussions}\label{section:conclusions}

\newcommand{\mytilde}{\raise.17ex\hbox{$\scriptstyle\mathtt{\sim}$}}

We have presented Gibbs MedLDA, an alternative approach to learning max-margin supervised topic models by minimizing an expected margin loss. We have applied Gibbs MedLDA to various tasks including text categorization, regression, and multi-task learning. By using the classical ideas of data augmentation, we have presented simple and highly efficient ``augment-and-collapse" Gibbs sampling algorithms, without making any restricting assumptions on posterior distributions. Empirical results on real data demonstrate significant improvements on time efficiency and classification accuracy over existing max-margin topic models. Our approaches are applicable to building other max-margin latent variable models, such as the max-margin nonparametric latent feature models for link prediction~\citep{Zhu:ICML12} and matrix factorization~\citep{Xu:NIPS12}. Finally, we release the code for public use\footnote{Available at: http://www.ml-thu.net/$\mytilde$jun/gibbs-medlda.shtml.}.

The new data augmentation formulation without any need to solve constrained sub-problems has shown great promise on improving the time efficiency of max-margin topic models. For future work, we are interested in developing highly scalable sampling algorithms (e.g., using a distributed architecture)~\citep{Newman08distributedinference,Smola:vldb10,Ahmed:wsdm12} to deal with large scale data sets. One nice property of the sampling algorithms is that the augmented variables are local to each document. Therefore, they can be effectively handled in a distributed architecture. But, the global prediction model weights bring in new challenges. Some preliminary work has been investigated in~\citep{Zhu:ParallelMedLDA13}. Another interesting topic is to apply the data augmentation technique to deal with the multiclass max-margin formulation, which was proposed by~\citet{Crammer:01} and used in MedLDA for learning multi-class max-margin topic models. Intuitively, it can be solved following an iterative procedure that infers the classifier weights associated with each category by fixing the others, similar as in polychomotous logistic regression~\citep{Holmes:BA06}, in which each substep may involve solving a binary hinge loss and thus our data augmentation techniques can be applied. A systematical investigation composes our future work.

\acks{This work is supported by National Key Foundation R\&D Projects (No.s 2013CB329403, 2012CB316301), Tsinghua Initiative
Scientific Research Program No.20121088071, and the 221 Basic Research Plan for Young Faculties at Tsinghua University.}

\bibliographystyle{plain}
\bibliography{medlda}
\end{document}